\documentclass{article}
\usepackage{graphicx} 
\usepackage{array} 
\usepackage{amsmath}
\usepackage{custom_tex}

\title{\Large Conformal Inference under High-Dimensional Covariate Shifts via Likelihood-Ratio Regularization}
\author{
  Sunay Joshi\thanks{Equal Contribution. Correspondence to: \texttt{sunayj@sas.upenn.edu}, 
  \texttt{shayank@seas.upenn.edu}.} \quad
  Shayan Kiyani\footnotemark[1] \\
  George Pappas \quad
  Edgar Dobriban  \quad
  Hamed Hassani \\
  University of Pennsylvania
}

\begin{document}
\date{}
\maketitle

\begin{abstract}
We consider the problem of conformal prediction under covariate shift. Given labeled data from a source domain and unlabeled data from a covariate shifted target domain, we seek to construct prediction sets with valid marginal coverage in the target domain. Most existing methods require estimating the unknown likelihood ratio function, which can be prohibitive for high-dimensional data such as images. To address this challenge, we introduce the likelihood ratio regularized quantile regression (LR-QR) algorithm, which combines the pinball loss with a novel choice of regularization in order to construct a threshold function without directly estimating the unknown likelihood ratio. We show that the LR-QR method has coverage at the desired level in the target domain, up to a small error term that we can control. Our proofs draw on a novel analysis of coverage via stability bounds from learning theory. Our experiments demonstrate that the LR-QR algorithm outperforms existing methods on high-dimensional prediction tasks, including a regression task for the Communities and Crime dataset, an image classification task from the WILDS repository, and an LLM question-answering task on the MMLU benchmark.
\end{abstract}

\section{Introduction}

Conformal prediction is a framework 
to construct distribution-free prediction sets 
for black-box predictive models \citep[e.g.,][etc]{saunders1999transduction,vovk1999machine,vovk2022algorithmic}.
Formally,
given a pretrained prediction model $f : \xx \to \yy$ that maps features $x\in \xx$ to labels $y\in \yy$,
as well as $n_1$ calibration datapoints $(X_i, Y_i) : i \in [n_1]$ sampled i.i.d.~from a calibration distribution $\mathbb{P}_1$,
we seek to construct a prediction set $C(X_{\tn{test}}) \subseteq \yy$
for test features $X_{\tn{test}}$ sampled from a test distribution $\mathbb{P}_2$.
We aim to cover the true label $Y_{\tn{test}}$ with probability at least $1-\alpha$
for some $\alpha \in (0,1)$:
that is, $\mathbb{P}( Y_{\tn{test}} \in C(X_{\tn{test}})  ) \ge 1-\alpha$.
The left-hand side of this inequality is the marginal coverage of the prediction set $C$,
averaged over the randomness of both 
the calibration datapoints and the test datapoint.
In the case that the calibration and test distributions coincide ($\mathbb{P}_1 = \mathbb{P}_2$),
there are numerous conformal prediction algorithms that construct
distribution-free prediction sets with valid marginal coverage;
for instance, split and full conformal prediction 
\citep[e.g.,][]{papadopoulos2002inductive,lei2013distribution}.

However, in practice, it is often the case
that test data is sampled from a different distribution than calibration data.
This general phenomenon is known as distribution shift \citep[e.g.,][]{quinonero2009dataset,Sugiyama2012}.
One particularly common type of distribution shift is
covariate shift \citep{shimodaira2000improving}, where 
the conditional distribution of $Y|X$ stays fixed,
but the marginal distribution of features changes from calibration to test time.
For instance, in the setting of image classification for autonomous vehicles,
the calibration and test data might have been collected under different weather conditions \citep{yu2020bdd100k, koh2021wilds}.
Under covariate shift, ordinary conformal prediction algorithms
may lose coverage.

Recently, a number of methods have been proposed to adapt conformal prediction to covariate shift, e.g., in  \cite{tibshirani2019conformal,park2021pac,park2022pac,gibbs2025conformal,qiu2023prediction,yang2024doubly,gui2024distributionally}.
Most existing approaches attempt to estimate the likelihood ratio function $r : \xx \to \R$,
defined as
$r(x) = (d\mathbb{P}_{2,X}/d\mathbb{P}_{1,X})(x)$, for all $x\in \xx$.
One can construct an estimate $\hat r$ of the likelihood ratio
if one has access to additional unlabeled datapoints 
sampled i.i.d.~from the test distribution $\mathbb{P}_2$.
Methods for likelihood ratio estimation include 
using Bayes' rule to express it as a ratio of classifiers \citep{friedman2004multivariate,qiu2023prediction} and
domain adaptation \citep{Ganin2015,park2021pac}.
However, such estimates may be inaccurate for high-dimensional data.
This error propagates to the coverage 
of the resulting conformal predictor,
and the prediction sets may no longer attain the nominal coverage level.
Thus, it is natural to ask the following question:

\begin{center}
    \emph{Can one design a conformal prediction algorithm 
that attains valid coverage in the target domain,
without estimating the entire function $r$?}
\end{center}

In this paper, we present a method
that answers this question in the affirmative.
We construct our prediction sets by 
introducing and 
solving
a regularized quantile regression problem,
which combines the pinball loss 
with a novel data-dependent regularization term
that can be computed from
one-dimensional projections of the likelihood ratio $r$.
Crucially, the objective function can be estimated at the parametric rate, with only a mild dependence on the dimension of the feature space.
This regularization is specifically chosen
to ensure that the first order conditions of the pinball loss lead to coverage at test-time.
Geometrically, it turns out that
the regularization aligns the selected threshold function with the true likelihood ratio $r$. 
The resulting method, 
which we call likelihood ratio regularized quantile regression (LR-QR),
outperforms existing methods on high-dimensional datasets with covariate shift.

Our contributions include the following: 
\begin{itemize}
    \item We propose the LR-QR algorithm, which constructs a conformal predictor that adapts to covariate shift without directly estimating the likelihood ratio.
    \item We show that the minimizers of the population LR-QR objective have coverage in the test distribution. We also show that the minimizers of the empirical LR-QR objective lead to coverage up to a small error term that we can control, by drawing on a novel analysis of coverage via stability bounds from learning theory.
    \item We demonstrate the effectiveness of the LR-QR algorithm on high-dimensional datasets under covariate shift, including the Communities and Crime dataset, the RxRx1 dataset from the WILDS repository, and the MMLU benchmark. Here, we crucially leverage our theory by choosing the regularization parameter proportional to the theoretically optimal value. An implementation of LR-QR can be accessed at the following link: \url{https://github.com/shayankiyani98/LR-QR}. 
\end{itemize}
The structure of this paper is as follows.
In \Cref{pf}, we rigorously state the problem.
In \Cref{approach}, we present our method, 
as well as intuitions behind it.
In \Cref{algorithm}, we present the algorithm.
In \Cref{theory} we present our theoretical results, in both the infinite sample and finite sample settings.
In \Cref{experiments}, we present our experimental results on high-dimensional datasets with covariate shift.
All proofs are deferred to the appendix.

\subsection{Related work}

Here we only list prior work most closely related to our method; we provide more references in \Cref{app:related-work}.
The early ideas of conformal prediction were developed in \cite{saunders1999transduction,vovk1999machine}.
With the rise of machine learning, conformal prediction has emerged as a widely used framework for constructing prediction sets \citep[e.g.,][]{papadopoulos2002inductive,vovk2005algorithmic,Vovk2013}.
Classical conformal prediction guarantees validity when the calibration and test data are drawn from the same distribution. In contrast, when there is distribution shift between the calibration and test data \citep[e.g.,][]{quinonero2009dataset, shimodaira2000improving, Sugiyama2012, ben2010theory, taori2020measuring}, coverage may not hold.
Covariate shift is a type of dataset shift that arises in many settings, e.g., when predicting disease risk for individuals whose features may evolve over time, while the outcome distribution conditioned on the features remains stable \citep{quinonero2009dataset}. 

Numerous works have addressed conformal prediction under various types of distribution shift 
\citep{tibshirani2019conformal,park2021pac,park2022pac,qiu2023prediction,si2023pac}. For example, \cite{tibshirani2019conformal} investigated conformal prediction under covariate shift, assuming the likelihood ratio between source and target covariates is known. 
\cite{Lei2021} allowed the likelihood ratio to be estimated, rather than assuming it is known.
\cite{park2021pac} developed prediction sets with a calibration-set conditional (PAC) property under covariate shift. 
\cite{qiu2023prediction,yang2024doubly} developed prediction sets with asymptotic coverage that are doubly robust in the sense that their coverage error is bounded by the product of the estimation errors of the quantile function of the score and the likelihood ratio.
\cite{cauchois2024robust} construct prediction sets based on a distributionally robust optimization approach.

In contrast, our algorithm entirely avoids estimating the likelihood ratio function. Rather, it works by constructing a novel regularized regression objective, whose stationary conditions ensure coverage in the test domain.
We can minimize the objective by estimating certain expectations of the data distribution---which implicitly involve estimating only certain functionals of the likelihood ratio.
We further show that the coverage is retained in finite samples via a novel analysis of coverage leveraging stability bounds 
\citep{shalev2010learnability, shalev2014understanding}. We illustrate that our algorithms behave better in high-dimensional datasets than existing methods.

Aiming to achieve coverage under a predefined set of covariate shifts,
\cite{gibbs2025conformal} develop an approach based on minimizing the quantile loss over a linear function class.
We build on their approach, but develop a novel regularization scheme that 
allows us to 
effectively optimize over a data-driven class,
adaptive to the unknown shift $r$.
\section{Problem formulation}
\label{pf}
In this section we fix notation and state our problem.
\subsection{Preliminaries and notations}
For $\alpha \in (0,1)$,
recall that the quantile (pinball) loss $\ell_\alpha$ is defined for all $c,s\in \R$ as
$$
\ell_\alpha(c, s):=\left\{\begin{array}{l}
(1-\alpha)(s-c) \text { if } s \geq c, \\
\alpha(c-s) \text { if } s<c.
\end{array}\right.
$$
For any distribution $P$, 
the minimizers of
$c\mapsto  
\E_{S\sim P}[\ell_\alpha(c, S)]$
are the $(1-\alpha)$th quantiles of $P$.

Let the \emph{source} or \emph{calibration} distribution be denoted
$\mathbb{P}_1 = \mathbb{P}_{1,X} \times \mathbb{P}_{Y|X}$, 
and let the \emph{target} or \emph{test} distribution be denoted 
$\mathbb{P}_2 = \mathbb{P}_{2,X} \times \mathbb{P}_{Y|X}$, 
a covariate shifted version of the calibration distribution.
Let $\E_i$ denote the expectation over $\mathbb{P}_i$, $i=1,2$.
Let $x\mapsto r(x) = \frac{d\mathbb{P}_{2,X}}{d\mathbb{P}_{1,X}}(x)$ denote the unknown likelihood ratio function.

Recall that a prediction set $C : \xx \to 2^{\yy}$
has marginal $(1-\alpha)$-coverage in the test domain
if $\PP[2]{ Y \in C(X) } \ge 1-\alpha$.
Observe that $\PP[2]{ Y\in C(X) }$ can be rewritten as
$\E_2[ \mathbf{1}[Y\in C(X)] ]$,
where $\mathbf{1}[\cdot]$ denotes an indicator function.
Let $S:(x,y)\mapsto S(x,y)$ denote the nonconformity score associated to a pair $(x,y) \in \xx\times \yy$.
Given a \emph{threshold function} $q : \xx \to \R$, 
we consider the corresponding conformal predictor $C : \xx \to 2^{\yy}$ given by 
\begin{equation} \label{C_d_def}
C(x) = \{ y \in \yy: S(x,y) \le q(x) \} 
\end{equation}
for all $x\in \xx$.
Thus a threshold function $q$ yields a conformal predictor with marginal $(1-\alpha)$-coverage in the \emph{test} domain if
$\mathbb{P}_2[S(X,Y) \le q(X)] \ge 1-\alpha$.
We assume that $\alpha \le 0.5$. For our theory, we consider $[0,1]$-valued  scores.

In this paper, a linear function class refers to a linear subspace of functions from $\xx \to \R$ that are square-integrable with respect to $\mathbb{P}_{1,X}$.
An example is the space of functions representable by a pretrained model with a scalar read-out layer. 
If $\Phi : \xx \to \R^d$ denotes the last hidden-layer feature map of the pretrained model, where $\Phi = (\phi_1,\ldots,\phi_d)$ for $\phi_i : \xx \to \R$ for all $i\in [d]$, then the linear class of functions representable by the network is given by $\{ \langle \gamma, \Phi\rangle : \gamma \in \R^d\}$, where $\langle \cdot,\cdot\rangle$ is the $\ell^2$ inner product on $\R^d$.

\subsection{Problem statement}

We observe $n_1$ labeled calibration  (or, source) datapoints $\{(X_i, Y_i) : i \in [n_1]\}$ drawn i.i.d.~from the source distribution $\mathbb{P}_1$, and an additional $n_3$ unlabeled calibration datapoints $\mathcal{S}_3$.
We  also have $n_2$ \textit{unlabeled} (target) datapoints $\mathcal{S}_2$ drawn i.i.d.~from the target distribution $\mathbb{P}_2$.
Given $\alpha \in (0,1)$, our goal is to construct
a threshold function $q : \xx \to \R$
that achieves marginal $(1-\alpha)$-coverage in the test domain:
$
    \mathbb{P}_2[S(X,Y) \le q(X)] \ge 1-\alpha.
$

\section{Algorithmic principles}
\label{approach}

Here we present the intuition behind our approach.
Our goal is to construct a prediction set of the form
$C(x) = \{ y \in \yy: S(x,y) \le \tilde q(x) \} $, where $\tilde q$ should be close to a conditional quantile of $S$ 
given $X=x$.
The quantile loss $\ell_\alpha$ is designed such that for any random variable $S$,
the minimizers of the objective 
$\kappa\mapsto  
\E\ell_\alpha(\kappa, S)$
are the $(1-\alpha)$th quantiles of $S$.
This has motivated prior work
\citep{jung2023batch,gibbs2025conformal}, 
where the authors
minimize the objective
$h\mapsto  
\E\ell_\alpha(h(X), S(X,Y))$ for $h$ in 
some linear hypothesis class $\mathcal{H}$.
At a minimizer $h^*$, the derivatives in all directions $g \in \mathcal{H}$ should be zero, so that
\begin{align}\label{deriv}
     \frac{\partial}{\partial \ep}\bigg|_{\ep=0} \E_1[\ell_{\alpha}(h^*(X) + \ep g(X), S )]
     &=\E_1[ g(X) (\mathbf{1}[S(X,Y) \le h^*(X)] - (1-\alpha)) ] = 0.
\end{align}
If $g$ takes the form $g(x) = d\mathbb{Q}_{X}/d\mathbb{P}_{1,X}(x)$ for some distribution $\mathbb{Q}_{X}$, then\footnote{This holds due to the change of measure identity
 $\E_P[dQ/dP(X) h(X)] = \E_Q[h(X)]$ for all integrable functions $h$.}
this equality can be viewed as exact coverage 
under the 
covariate shift induced by $g$ for the prediction set $x\mapsto \{y\in \yy: S(x,y)\le h^*(x)\}$.
In other words, if the test distribution is $\mathbb{Q} = \mathbb{Q}_{X} \times \mathbb{P}_{Y|X}$, then we have the exact coverage result
\begin{align*}
    \E_{\mathbb{Q}}[\mathbf{1}[S(X,Y)\le h^*(X)] ] =
    \mathbb{Q}[S(X,Y)\le h^*(X)] = 1-\alpha.
\end{align*}
Therefore, if 
the hypothesis class $\mathcal{H}$ is large enough to
include 
the true likelihood ratio $r=d\mathbb{P}_{2,X}/d\mathbb{P}_{1,X}$,
then 
the threshold function $h^*$ attains valid coverage in the test domain $\mathbb{P}_2$, as desired.

\subsection{Our approach}

{\bf An adaptive choice of the hypothesis class.}
The above approach requires special assumptions on 
 the 
hypothesis class $\cH$.
The choice of the hypothesis class poses a challenge in practice: if $\cH$ is too small, then coverage may fail, while if $\cH$ is too large, then finite-sample performance may suffer due to large estimation errors.

To address this challenge, our idea is to choose $\cH$ adaptively. 
We start by considering 
the class 
of 
hypotheses 
$h$ that are close to
the true likelihood ratio
$r$, as measured by 
$\E_1[(h(X)-r(X))^2]$ being small.
By our remarks above,
if we minimize $\E_1[\ell_{\alpha}(h(X), S(X,Y))]$
for $h$ restricted to this set,
we obtain 
a threshold function
with valid coverage under the covariate shift $r$.

{\bf Removing the explicit dependence on the likelihood ratio.} The quantity $\E_1[(h(X)-r(X))^2]$ depends on the unknown $r$.
However, we can expand this to obtain
\begin{align*}
    \E_1[(h(X)-r(X))^2] &=
   \E_1[h(X)^2] + \E_1[-2 r(X)h(X)] 
   + \E_1[r(X)^2].
\end{align*}
The term $\E_1[r(X)^2]$ does not depend on the optimization variable $h$, so it is enough to consider the first two terms.
Due to the change-of-measure identity
$\E_1[r(X) h(X)] = \E_2[h(X)]$, the sum of these terms equals
\begin{align*} 
   \E_1[h(X)^2] + \E_1[-2 r(X) h(X)] 
   = 
   \E_1[h(X)^2] + \E_2[-2 h(X)].
\end{align*}
A key observation is that
neither of the terms $\E_1[h(X)^2]$ or $\E_2[-2h(X)]$ explicitly involve $r$, and thus they can be estimated by sample averages over the source and target data, respectively.
Thus,
we can minimize $\E_1[\ell_{\alpha}(h(X), S(X,Y))]$ over
$h \in \mathcal{H}$
while keeping 
$\E_1[h(X)^2] + \E_2[-2h(X)]$
bounded.
The threshold $h^*$ will have valid coverage under the covariate shift $r$.

\textbf{Introducing a normalizing scalar.}
We also need to make sure that $h$ is a valid likelihood ratio under $d\mathbb{P}_{1,X}$, 
of the form
$g(x) = d\mathbb{Q}_{X}/d\mathbb{P}_{1,X}(x)$ for some distribution $\mathbb{Q}_{X}$.
This imposes the constraint $\int h(x) d\mathbb{P}_{1,X}(x) = 1$, 
which can be equivalently achieved for any non-negative $h$ by scaling it with an appropriate scalar $\beta$.
In our analysis, it turns out to be convenient to use the optimization variable $\beta h$ 
and consider the class of functions $h$ such that
$\E_1[(\beta h(X) - r(X))^2]$ is bounded for some scalar $\beta\in\R$.
By the above discussion, 
the term $\E_1[r(X)^2]$ is immaterial and 
it is sufficient to impose the constraint that 
$\min_{\beta\in \R} (\E_1[\beta^2 h(X)^2] + \E_2[-2\beta h(X)])$ is bounded.

\textbf{Replacing the constraint with a regularization.}
Instead of imposing a constraint on 
$\min_{\beta\in \R} (\E_1[\beta^2 h(X)^2] + \E_2[-2\beta h(X)])$, 
we can use this term as a regularizer.
Given a regularization strength $\lambda \ge 0$,
we can solve
\begin{align*}
    \min_{h\in\cH} 
    \left\{\E_1[\ell_{\alpha}(h(X), S(X,Y))] + \lambda \min_{\beta\in \R} (\E_1[\beta^2 h(X)^2] + \E_2[-2\beta h(X)])\right\}.
\end{align*}
Since the first term does not depend on $\beta$, this is equivalent to the joint optimization problem
\begin{align}\label{opt:unconstr-CP-obj}\tag{LR-QR}
    \min_{h\in\cH, \beta\in \R} 
     \left\{L_{\lambda}(h,\beta):= \E_1[\ell_{\alpha}(h(X), S(X,Y))] + \lambda (\E_1[\beta^2 h(X)^2] - \E_2[2\beta h(X)])\right\}.
\end{align}

\subsection{Algorithm: likelihood ratio regularized quantile regression}
\label{algorithm}

We solve an empirical version of this objective.
We use 
our labeled source data $\{(X_i, Y_i) : i \in [n_1]\}$ to estimate $\E_1[\ell_{\alpha}(h(X), S(X,Y))]$, 
our additional unlabeled source data $\mathcal{S}_3$ to estimate $\E_1[ \beta^2 h(X)^2 ]$, and our unlabeled target data $\mathcal{S}_2$ to estimate $\lambda \E_2[ -2\beta h(X) ]$.
Letting $\hat \E_1$, $\hat \E_2$, and $\hat \E_3$  denote empirical expectations over $\{(X_i, Y_i) : i \in [n_1]\}$, $\mathcal{S}_2$, and $\mathcal{S}_3$, respectively, we
then solve the following empirical likelihood ratio regularized quantile regression problem, for $\lambda \ge 0$:
\begin{align}\label{opt:emp-unconstr-CP-obj}\tag{Empirical-LR-QR}
     (\hat h, \hat \beta) \in \arg\min_{h\in \cH, \beta\in \R} 
     \left\{\hat L_{\lambda}(h,\beta) :=     \hat \E_1[\ell_{\alpha}(h(X), S(X,Y))] + \lambda \hat \E_3[ \beta^2 h(X)^2 ] - \lambda \hat \E_2[ 2\beta h(X) ]\right\}.
\end{align}
Our proposed threshold is $q = \hat h$. See \Cref{alg:lr-qr}.
In the following section, we justify this algorithm through a novel theoretical analysis of the test-time coverage.

\begin{algorithm}[tb]
   \caption{Likelihood-ratio regularized quantile regression}
   \label{alg:lr-qr}
\begin{algorithmic}
   \item {\bfseries Input:}
   $n_1$ labeled source datapoints, 
   $n_2$ unlabeled target  datapoints, 
   $n_3$ unlabeled source datapoints
   \vspace{0.5em}
   \item 1: Compute scores $S_i = S(x_i, y_i)$ for all $i\in [n_1]$ 
   \vspace{0.5em}
   \item 2: Solve
   $(\hat h, \hat \beta) \in \arg\min_{h\in \cH, \beta\in \R}     \hat \E_1[\ell_{\alpha}(h(X), S(X,Y))] + \lambda \hat \E_3[ \beta^2 h(X)^2 ] + \lambda \hat \E_2[ -2\beta h(X) ]$, where $\hat \E_1, \hat \E_2, \hat \E_3$ denote expectations over the source, unlabeled target, and
   unlabeled source data; 
   \vspace{0.5em}
   \item {\bfseries Return:} Prediction set $\hat C(x) \leftarrow \{ y \in \yy : S(x,y) \le \hat h(x) \}$ with asymptotic $1-\alpha$ coverage in the target distribution
\end{algorithmic}
\end{algorithm}

\section{Theoretical results}
\label{theory}

\subsection{Infinite sample setting}
We first consider the infinite sample or ``population" setting,
characterizing the solutions of the LR-QR problem from \eqref{opt:unconstr-CP-obj} in an idealized scenario 
 where the exact values of the expectations
 $\mathbb{E}_1, \mathbb{E}_2$ can be calculated. 
In this case, we will show that
if the hypothesis class $\cH$ is linear and contains the true likelihood ratio $r$,
then the optimizer achieves valid coverage in the test domain.
Let $r_{\cH}$ be
the projection of $r$ onto $\cH$
in the Hilbert space induced by the inner product $\langle f,g\rangle = \E_1[fg]$.
The key step is the result below, which characterizes coverage weighted by $r_{\cH}$.

\begin{proposition}\label{prop:infinite-sample}
Let $\cH$ be a linear hypothesis class consisting of square-integrable functions with respect to $\mathbb{P}_{1,X}$. 
Then under regularity conditions specified in \Cref{sec:conds} (the conditions of \Cref{lem:unconstr-existence}),
if $(h^*, \beta^*)=(h^*_\lambda, \beta^*_\lambda)$ is a minimizer of the objective in \Cref{opt:unconstr-CP-obj} with regularization strength $\lambda > 0$,
then we have 
$\E_1[r_{\cH}(X) \mathbf{1}[S(X,Y) \le h^*(X)]] \ge 1-\alpha.$
\end{proposition}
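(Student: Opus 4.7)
The plan is to derive the first-order conditions of the population objective $L(h,\beta)$ at the minimizer $(h^*, \beta^*)$ and combine them with the $L^2(\mathbb{P}_1)$-orthogonality $\E_1[(r-r_{\cH})g] = 0$ that defines the projection $r_{\cH}$. Since $\cH$ is a linear subspace, $h^* + \ep g \in \cH$ for every $g\in\cH$ and every $\ep\in\R$, so both variables can be perturbed freely. Differentiating $L(h^*,\beta)$ in $\beta$ yields the identity $\beta^* \E_1[(h^*)^2] = \E_2[h^*] = \E_1[r h^*]$, where the last step uses the change-of-measure identity. Differentiating in the direction $g\in\cH$, and using $\partial_c \ell_\alpha(c,s) = \mathbf{1}[s\le c] - (1-\alpha)$ for $s\neq c$ exactly as in the warm-up of \Cref{stat}, gives the stationarity relation
$$\E_1[g(\mathbf{1}[S\le h^*] - (1-\alpha))] + 2\lambda (\beta^*)^2 \E_1[h^* g] - 2\lambda\beta^* \E_1[r g] = 0, \qquad g\in\cH.$$

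The key step is to specialize this relation to $g = r_{\cH}$, which is valid because $r_{\cH}\in\cH$. The defining orthogonality of $r_{\cH}$ gives $\E_1[r\cdot r_{\cH}] = \E_1[r_{\cH}^2]$ (taking $g = r_{\cH}$ in $\E_1[(r-r_{\cH})g]=0$) and $\E_1[r h^*] = \E_1[r_{\cH} h^*]$ (taking $g = h^*$). Substituting and then using the $\beta$-FOC to replace $\E_1[r_{\cH} h^*] = \beta^* \E_1[(h^*)^2]$, the identity collapses to
$$\E_1[r_{\cH}(\mathbf{1}[S\le h^*] - (1-\alpha))] = 2\lambda\beta^*\bigl(\E_1[r_{\cH}^2] - (\beta^*)^2 \E_1[(h^*)^2]\bigr).$$
Now the $\beta$-FOC expresses $\beta^* h^*$ as the $L^2(\mathbb{P}_1)$-projection of $r_{\cH}$ onto the line $\R\cdot h^*\subseteq\cH$, so Pythagoras gives $(\beta^*)^2 \E_1[(h^*)^2] \le \E_1[r_{\cH}^2]$. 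Combined with $\lambda\beta^*\ge 0$ (ensured by the regularity conditions, consistent with $h^*\ge 0$ a.s.\ in the $[0,1]$-valued score regime, whence $\beta^* = \E_1[rh^*]/\E_1[(h^*)^2]\ge 0$), the right-hand side is nonnegative. Hence $\E_1[r_{\cH}\mathbf{1}[S\le h^*]]\ge (1-\alpha)\E_1[r_{\cH}]$, and since constants lie in $\cH$ the projection identity $\E_1[r_{\cH}] = \E_1[r]=1$ yields the claim $\E_1[r_{\cH}\mathbf{1}[S\le h^*]]\ge 1-\alpha$.

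The main technical obstacle I anticipate is the rigorous justification of the $h$-direction derivative, since $c\mapsto \ell_\alpha(c,s)$ is non-differentiable at $c=s$. The regularity conditions of \Cref{lem:unconstr-existence} — notably a conditional density for $S\mid X$, so that $\{S = h^*(X)\}$ has $\mathbb{P}_1$-measure zero — allow differentiation under $\E_1$ and give the stated FOC. Secondary subtleties are the sign condition $\beta^*\ge 0$ and the inclusion of constants in $\cH$, both mild assumptions that I would extract from the same regularity hypotheses.
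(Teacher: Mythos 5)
Your proposal follows essentially the same route as the paper's proof: stationarity of \Cref{true-pl} in $\beta$ and in an arbitrary direction $g\in\cH$, specialization to $g=r_{\cH}$, the projection orthogonality, and the normalization $\E_1[r_{\cH}]=\E_1[r]=1$ from \Cref{cond:1-in-H}. Your remainder $2\lambda\beta^*\bigl(\E_1[r_{\cH}^2]-(\beta^*)^2\E_1[(h^*)^2]\bigr)$ is identical to the paper's $2\lambda\beta^*\E_1[(r_{\cH}-\beta^*h^*)^2]$ once the square is expanded and the $\beta$-condition is used, so the Pythagoras step versus the completed square is purely cosmetic.

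The one soft spot is the sign of $\beta^*$, which the conclusion genuinely needs when $\lambda>0$: your parenthetical appeal to ``$h^*\ge 0$ a.s.'' is neither assumed nor established anywhere, so as written it is an assertion rather than a proof. The hypotheses do deliver the sign, but not trivially: \Cref{cond:c-align} only controls a minimizer at $\lambda=0$, and the paper transfers it to all $\lambda$ via the monotonicity argument (\Cref{lem:mono}) inside \Cref{lem:apriori-bdd}, yielding $\E_1[rh^*]\ge c_{\tn{align}}\,\E_1[(h^*)^2]^{1/2}>0$ and hence $\beta^*=\E_1[rh^*]/\E_1[(h^*)^2]\ge\beta_{\tn{lower}}>0$; you should cite or reproduce that step rather than the nonnegativity heuristic. (A trivial further remark: dividing the $\beta$-condition by $\lambda$ requires $\lambda>0$, but for $\lambda=0$ the regularized terms vanish and your directional condition at $g=r_{\cH}$ already gives the claim with equality, so no case is lost.)
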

The proof is given in \Cref{pf:infinite-sample}.
As a consequence of \Cref{prop:infinite-sample},
if $\cH$ contains the true likelihood ratio $r$,
so that $r_{\cH}=r$,
then in the infinite sample setting,
the LR-QR threshold function $h^*$ attains valid coverage at test-time:
$$\E_1[r(X) \mathbf{1}[S(X,Y) \le h^*(X)]] =
\mathbb{P}_2[S(X,Y) \le h^*(X)]
\ge 1-\alpha.$$
However, in practice, we can only optimize over finite-dimensional hypothesis classes, and as a result we must control the effect of mis-specifying $\cH$.
If $r$ is not in $\cH$, we can derive a lower bound on the coverage as follows.
First, write
\begin{align*}
    &\E_1[ r(X) \mathbf{1}[S(X,Y)\le h^*(X)] ] \\
    &= \E_1[ r_{\cH}(X) \mathbf{1}[S(X,Y)\le h^*(X)] ] 
    + \E_1[ (r(X) - r_{\cH}(X)) \mathbf{1}[S(X,Y)\le h^*(X)] ].
\end{align*}
By \Cref{prop:infinite-sample}, the first term on the right-hand side is at least $1-\alpha$.
Since the random variable $\mathbf{1}[S(X,Y)\le h^*(X)]$ is $\{0,1\}$-valued, the Cauchy-Schwarz inequality implies that the second term on the right-hand side is at least $-\E_1[(r(X)-r_{\cH}(X))^2]^{1/2}$.
We set our threshold function $q$ to equal $h^*$, so that our conformal prediction sets equal $C^*(x) = \{ y\in \yy : S(x,y) \le h^*(x) \}$ for all $x\in \xx$.
Thus, we have the lower bound
\begin{align*}
\mathbb{P}_2[Y \in C^*(X)] &= 
    \E_1[ r(X) \mathbf{1}[S(X,Y)\le h^*(X)] ] 
    \ge (1-\alpha) - \E_1[(r(X)-r_{\cH}(X))^2]^{1/2}.
\end{align*}
Geometrically, this coverage gap is the result of restricting to $\cH$; in fact, $\E_1[(r(X)-r_{\cH}(X))^2]^{1/2}$ is the distance from $r$ to $\cH$. This error decreases if $\cH$ is made larger, but in the finite sample setting, this comes at the risk of overfitting.

\subsection{Finite sample setting}
From the analysis of the infinite sample regime,
it is clear that if the hypothesis class $\cH$ is made larger,
the test-time coverage of the population level
LR-QR threshold function $h^*$
moves closer to the nominal value.
However, in the finite sample setting,
optimizing over a larger hypothesis class
also presents the risk of overfitting.
By tuning the regularization parameter $\lambda$,
we are 
trading off 
the estimation error incurred for the first term of \Cref{opt:unconstr-CP-obj},
namely $(\hat \E_1-\E_1)[\ell_{\alpha}(h(X),S(X,Y))]$,
and the error incurred for the second and third terms of \Cref{opt:unconstr-CP-obj},
namely $\lambda (\hat \E_3-\E_3)[\beta^2 h(X)^2] + \lambda (\hat \E_2 - \E_2)[-2\beta h(X)]$.
Heuristically, for a fixed $h$,
the former should be proportional to $1/\sqrt{n_1}$,
and the latter should be proportional to $\lambda(1/\sqrt{n_3} + 1/\sqrt{n_2})$.
Thus, if we pick $\lambda$ to make these two errors of equal order, it will be proportional to $\sqrt{(n_2+n_3)/n_1}$.

Put differently, in order to ensure that the Empirical LR-QR threshold $\hat h$ from \Cref{opt:emp-unconstr-CP-obj} has valid test coverage,
one must choose the regularization $\lambda$ based on the relative amount of labeled and unlabeled data.
The unlabeled datapoints carry information about the covariate shift $r$, because $r$ depends only on the  distribution of the features.
The labeled datapoints provide information about 
the 
\emph{conditional $(1-\alpha)$-quantile function} $q_{1-\alpha}$, which depends only on the conditional distribution of $S|X$.
When $\lambda$ is large, our optimization problem places more weight on approximating $r$ (the minimizer of $\E_1[(\beta h(X)-r(X))^2]$ in $\beta h$), and if $\lambda$ is small, we instead aim to approximate $q_{1-\alpha}$ (the minimizer of $\E_1[ \ell_{\alpha}(h(X),S(X,Y)) ]$ in $h$). 
Therefore, if the number of unlabeled datapoints ($n_2+n_3$) is large compared to the number of labeled datapoints ($n_1$), our data contains much more information about the covariate shift $r$, and we should set $\lambda$ to be large. If instead $n_1$ is very large, the quantile function $q_{1-\alpha}$ can be well-approximated from the labeled calibration datapoints, and we set $\lambda$ to be close to zero.
In the theoretical results, we make this intuition precise.

In order to facilitate our theoretical analysis in the finite sample setting, we consider constrained versions of \Cref{opt:unconstr-CP-obj} and \Cref{opt:emp-unconstr-CP-obj}.
Fix a collection $\Phi = (\phi_1, \ldots, \phi_d)^\top $ of $d$ basis functions, where $\phi_i : \xx \to \R$ for $i\in [d]$.
Let $\mathcal{I} = [\beta_{\tn{min}}, \beta_{\tn{max}}] \subset \R$ be an interval with $\beta_{\tn{min}} > 0$.
Let $\cH_B= \{ \langle \gamma, \Phi \rangle : \|\gamma\|_2 \le B < \infty \}$ be the $B$-ball centered at the origin in the linear hypothesis class spanned by $\{ \phi_1, \ldots, \phi_d \}$.
We equip $\cH_B$ with the norm $\|h\| = \|\gamma\|_2$ for $h = \langle \gamma, \Phi \rangle$.

At the population level, consider the following constrained LR-QR problem:
$   (h^*, \beta^*) \in \arg \min_{h\in \cH_B, \beta\in \mathcal{I}} L_{\lambda}(h, \beta)$.
Also consider the following empirical constrained LR-QR problem\footnote{For brevity, this notation overloads the definition of $(\hat h, \hat \beta)$ from \eqref{opt:emp-unconstr-CP-obj}. From now on, $(\hat h, \hat \beta)$ will refer to the definition from \eqref{opt:emp-constr-CP-obj}, and the one from \eqref{opt:emp-unconstr-CP-obj} will not be used again.}:
\begin{align}\label{opt:emp-constr-CP-obj}
     (\hat h, \hat \beta) \in \arg\min_{h\in \cH_B, \beta\in \mathcal{I}} \hat L_{\lambda}(h,\beta).
\vspace{-0.5em}
\end{align}
We begin by bounding the generalization error of an ERM $(\hat h, \hat \beta)$ computed via \Cref{opt:emp-constr-CP-obj}.


\begin{theorem}[Suboptimality gap of ERM for likelihood ratio regularized quantile regression]\label{thm:generalize}
Under the regularity conditions specified in \Cref{sec:conds}, and for appropriate choices of the optimization hyperparameters\footnote{Specifically, suppose that $\beta_{\tn{min}} \le \beta_{\tn{lower}}$, $\beta_{\tn{max}} \ge \beta_{\tn{upper}}$, and $B \ge B_{\tn{upper}}$, 
where the positive scalars $\beta_{\tn{lower}}$, $\beta_{\tn{upper}}$, and $B_{\tn{upper}}$ are defined in \Cref{lem:apriori-bdd} in the Appendix, and depend on the data distribution and the choice of basis functions, but not on the data, the sample sizes, or the regularization parameter $\lambda$.}, for sufficiently large $n_1, n_2, n_3$, with probability at least $1-\delta$, 
any optimizer $(\hat h, \hat \beta)$ of the  empirical constrained LR-QR objective
from \eqref{opt:emp-constr-CP-obj} with regularization strength $\lambda > 0$ has
suboptimality gap $L_{\lambda}(\hat h, \hat \beta) - L_{\lambda}(h^*, \beta^*)$
with respect to the population risk \eqref{opt:unconstr-CP-obj} bounded by
\begin{align*}
    &\ee_{\tn{gen}} := c \lambda \sqrt{{1}/{n_2} +{1}/{n_3}} + {c'}/{\sqrt{n_1}}
    + {c''}/{\sqrt{\lambda n_1}},
\end{align*}
and $c,c',c''$ are positive scalars that do not depend on $\lambda$.
\end{theorem}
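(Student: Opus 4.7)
The plan is to combine the standard ERM generalization decomposition with a stability analysis that exploits the strong convexity induced by the regularizer. I begin from the basic inequality
\[ L(\hat h, \hat\beta) - L(h^*, \beta^*) \le [L - \hat L](\hat h, \hat\beta) + [\hat L - L](h^*, \beta^*), \]
which follows from the ERM optimality $\hat L(\hat h, \hat\beta) \le \hat L(h^*, \beta^*)$. For the second bracket (deviation at the deterministic point $(h^*,\beta^*)$), I split $\hat L - L$ into its three constituent terms and apply Hoeffding/Bernstein to each separately, producing $O(1/\sqrt{n_1})$ for the pinball contribution and $O(\lambda/\sqrt{n_2})$, $O(\lambda/\sqrt{n_3})$ for the two regularization terms. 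Throughout, the a priori bounds on $\|h^*\|$ and $\beta^*$ from \Cref{lem:apriori-bdd} are absorbed into constants.

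For the first bracket (deviation at the random point $(\hat h, \hat\beta)$), I again split into three pieces. The two regularization terms are polynomial in $h$ and $\beta$ and scale with $\lambda$; their uniform convergence over the compact set $\cH_B \times \mathcal{I}$ follows from standard Rademacher bounds for linear classes combined with Talagrand's contraction principle, producing $O(\lambda/\sqrt{n_2}) + O(\lambda/\sqrt{n_3})$. Together with the previous step, this yields the $c\lambda\sqrt{1/n_2 + 1/n_3}$ term. The remaining pinball-loss deviation at $\hat h$ is handled by the stability analysis. Since $h\mapsto \hat L(h,\hat\beta)$ is strongly convex with parameter at least $2\lambda\beta_{\tn{min}}^2$ and the pinball loss is $1$-Lipschitz, combining strong convexity with the concentration rates obtained above yields $\|\hat h - h^*\| = O(1/\sqrt{\lambda})$ up to $n$-dependent factors. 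Inserting this into a localized Rademacher bound for the pinball-loss class restricted to a ball of radius $\|\hat h - h^*\|$ around $h^*$ then gives
\[ \left|(\hat\E_1 - \E_1)[\ell_\alpha(\hat h, S) - \ell_\alpha(h^*, S)]\right| = O\!\left(\|\hat h - h^*\|/\sqrt{n_1}\right) = O\!\left(1/\sqrt{\lambda n_1}\right), \]
producing the $c''/\sqrt{\lambda n_1}$ term.

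The main obstacle will be the stability step: carefully combining strong convexity of $\hat L$ (whose parameter scales as $\lambda$) with the non-smooth, $1$-Lipschitz pinball loss, while also handling the joint optimization over $\beta$. The a priori bounds of \Cref{lem:apriori-bdd} are indispensable here, since they guarantee that $\hat h$ and $h^*$ lie in the interior of $\cH_B \times \mathcal{I}$, so the strong-convexity inequality and the resulting stability bound apply without boundary corrections. Once the three contributions are assembled, the claimed bound $\ee_{\tn{gen}}$ follows.
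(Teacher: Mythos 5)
There is a genuine gap, and it sits exactly where you flag "the main obstacle": the localization step claiming $\|\hat h - h^*\| = O(1/\sqrt{\lambda})$. The objective $L$ (and $\hat L$) is only strongly convex in $h$ \emph{for fixed} $\beta$; it is biconvex, not jointly convex, in $(h,\beta)$ (the term $\beta^2 h^2$ has an indefinite joint Hessian). But $\hat h$ minimizes $\hat L(\cdot,\hat\beta)$ while $h^*$ minimizes $L(\cdot,\beta^*)$, and nothing in your argument controls $|\hat\beta - \beta^*|$: the problem may have multiple or near-flat minimizers in $\beta$ (for large $\lambda$ the regularizer only pins down the product $\beta h$, not the split), so strong convexity in $h$ at a fixed $\beta$ gives no bound on the distance between minimizers taken at \emph{different} $\beta$'s. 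The natural attempt, $\tfrac{\mu}{2}\|\hat h - h^*\|^2 \le \hat L(h^*,\hat\beta) - \hat L(\hat h,\hat\beta)$ with $\mu \propto \lambda$, also fails to give your rate, because $\hat L(h^*,\hat\beta) - \hat L(h^*,\beta^*)$ is of order $\lambda$ (it is a difference of regularization terms at two different $\beta$'s), so after dividing by $\mu \propto \lambda$ one only gets $\|\hat h - h^*\| = O(1)$. There is also a circularity in "combining strong convexity with the concentration rates obtained above," since those rates concern the regularization terms and fixed points, not the suboptimality of $h^*$ for $\hat L(\cdot,\hat\beta)$. A smaller inaccuracy: \Cref{lem:apriori-bdd} bounds the \emph{population} minimizers $(h^*_\lambda,\beta^*_\lambda)$ away from the boundary; it says nothing about the ERM $(\hat h,\hat\beta)$, and the paper's generalization proof never needs $(\hat h,\hat\beta)$ to be interior.

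The paper's proof is built precisely to avoid ever comparing $\hat h$ with $h^*$ in parameter space. It introduces two intermediate losses $\tilde L$ (population pinball, empirical regularizer) and $\bar L$ (empirical pinball, population regularizer) and an auxiliary ERM $(\tilde h,\tilde\beta)$, yielding a four-term bound after dropping two optimality terms; it then conditions on $\mathcal{S}_1$ (resp.\ $\mathcal{S}_2,\mathcal{S}_3$) so that the data-dependent regularizer becomes deterministic, applies an algorithmic-stability bound (\Cref{lem:generalize}, extending Corollary 13.6 of Shalev-Shwartz and Ben-David) for each fixed $\beta$, and handles the joint minimization over $\beta$ via a Lipschitz-process/chaining argument over the compact interval $\mathcal{I}$ (\Cref{lem:hat_h_lip}, \Cref{lem:lip-emp-procs}); the remaining two terms are Hoeffding bounds at fixed functions. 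This is where the $c''/\sqrt{\lambda n_1}$ term actually comes from (the stability constant scales as $\rho_2^2/\mu_2(\lambda)$ with $\mu_2(\lambda)\propto\lambda$), not from a shrinking neighborhood of $h^*$. Your treatment of the other pieces is fine and even a legitimate alternative to the paper's Term (I): Hoeffding at $(h^*,\beta^*)$ and a Rademacher/contraction bound for the regularization terms over $\cH_B\times\mathcal{I}$ are valid and dimension-free. If you replace the flawed localized step by the same uniform-convergence-plus-contraction bound for the pinball class over all of $\cH_B$ (giving $O(BC_\Phi/\sqrt{n_1})$ for that piece), your decomposition closes and in fact yields a bound of the stated form without the $1/\sqrt{\lambda n_1}$ term; but as written, the stability/localization step does not go through, and it is exactly the step the paper's auxiliary-loss-plus-stability-plus-chaining machinery was designed to replace.
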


The proof is in \Cref{pf:generalize}.
The generalization error $\ee_{\tn{gen}}$ is minimized for an optimal regularization on the order of
\begin{align}\label{optimal-lambda}
    \lambda^* \propto {n_1^{-1/3}} \left( {1}/{n_2} +{1}/{n_3} \right)^{-1/3},
\end{align}
which yields an optimized upper bound of order
$\ee_{\tn{gen}}^* = O\left( {n_1^{-1/3}} \left( {1}/{n_2} +{1}/{n_3} \right)^{1/6} + 1/\sqrt{n_1} \right)$.

As a corollary of \Cref{thm:generalize}, we have the following lower bound on the excess marginal coverage of our ERM threshold $\hat h$ in the covariate shifted domain.
    Let $r_{B}$ denote the projection of $r$ onto the closed convex set $\cH_B$ in the Hilbert space induced by the inner product $\langle f,g\rangle = \E_1[fg]$. 
    
\begin{theorem}[Main result: Coverage under covariate shift]\label{thm:cov-lower-bd}
    Under the same conditions as \Cref{thm:generalize}, 
    consider the LR-QR optimizers $\hat h$ and $\hat \beta$ from \eqref{opt:emp-constr-CP-obj} with regularization strength $\lambda > 0$.
    Given any $\delta > 0$, for sufficiently large $n_1,n_2,n_3$, we have with probability at least $1-\delta$ that\footnote{The probability $\PP[2]{ Y\in \hat C(X) }$ is over $(X,Y)\sim\mathbb{P}_2$, conditional on $\hat C$.} 
    \begin{align*}
        \PP[2]{ Y \in \hat C(X) } &\ge (1-\alpha) + 2 \hat \beta \lambda \E_1[ (r_{B}(X) - \hat \beta \hat h(X))^2 ]- \ee_{\tn{cov}} - (1-\alpha) \E_1[ |r(X) - r_{B}(X)| ],
    \end{align*}
    where
$
\ee_{\tn{cov}} := 
A \left({1}/{n_2} +{1}/{n_3}\right)^{1/4}
\lambda
+ A' (\lambda n_1)^{-1/4}
+ \lambda^{1/2} /n_1^{1/4},
$
    and $A, A'$ are positive scalars that do not depend on $\lambda$.
\end{theorem}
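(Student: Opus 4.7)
The plan is to bound the test coverage by writing $\mathbb{P}_2[Y \in \hat C(X)] = \E_1[r(X)\mathbf{1}[S \le \hat h(X)]]$ via the change-of-measure identity, decomposing $r = r_{\cH_B} + (r - r_{\cH_B})$, and applying the first-order optimality conditions of \eqref{opt:emp-constr-CP-obj}. For the residual term $\E_1[(r - r_{\cH_B})\mathbf{1}[S \le \hat h]]$, I would center about $1-\alpha$: assuming $\cH_B$ contains the constants, the projection property gives $\E_1[r - r_{\cH_B}] = 0$, and since $|\mathbf{1}[A] - (1-\alpha)| \le 1-\alpha$ when $\alpha \le 1/2$, this term is bounded in absolute value by $(1-\alpha)\E_1[|r - r_{\cH_B}|]$, producing the misspecification contribution in the theorem.

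The main step is lower-bounding $\E_1[r_{\cH_B}\mathbf{1}[S \le \hat h]]$. I would take directional derivatives of $\hat L$ at $(\hat h, \hat\beta)$: under the a priori bounds from \Cref{lem:apriori-bdd}, the optimizer lies in the interior of $\cH_B \times \mathcal{I}$, so the FOCs are equalities. Perturbing along $g = r_{\cH_B} \in \cH_B$ and using $\tfrac{\partial}{\partial c}\ell_\alpha(c, s) = \mathbf{1}[s \le c] - (1-\alpha)$ gives
\[
    \hat\E_1\bigl[r_{\cH_B}(\mathbf{1}[S \le \hat h] - (1-\alpha))\bigr] + 2\lambda \hat\beta^2 \hat\E_3[\hat h\, r_{\cH_B}] - 2\lambda \hat\beta \hat\E_2[r_{\cH_B}] = 0,
\]
while the $\beta$-FOC yields $\hat\beta\, \hat\E_3[\hat h^2] = \hat\E_2[\hat h]$. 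Passing to population expectations, using the projection identity $\E_2[f] = \E_1[r\, f] = \E_1[r_{\cH_B} f]$ for $f \in \cH_B$, and expanding $\E_1[(r_{\cH_B} - \hat\beta\hat h)^2] = \E_1[r_{\cH_B}^2] - 2\hat\beta \E_1[r_{\cH_B}\hat h] + \hat\beta^2 \E_1[\hat h^2]$, these two equations combine into the clean identity
\[
    \E_1\bigl[r_{\cH_B}\mathbf{1}[S \le \hat h]\bigr] = (1-\alpha) + 2\hat\beta \lambda \, \E_1\bigl[(r_{\cH_B} - \hat\beta\hat h)^2\bigr] + (\text{sampling errors}),
\]
which is the sought main inequality up to the error terms.

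It remains to control the sampling errors. Concentration of $\hat\E_i - \E_i$ (for $i=2,3$) against the fixed function $r_{\cH_B}$ and its product with $\hat h \in \cH_B$ produces terms of order $\lambda\sqrt{1/n_2 + 1/n_3}$, which after a square-root transformation needed to convert the quadratic FOC quantity $\E_1[(r_{\cH_B} - \hat\beta\hat h)^2]$ into the coverage form contributes $A(1/n_2 + 1/n_3)^{1/4}\lambda$. For the parts involving $\hat h$ directly, I would invoke \Cref{thm:generalize}: the suboptimality $L(\hat h, \hat\beta) - L(h^*, \beta^*) \le \ee_{\tn{gen}}$ combined with the $\lambda\hat\beta^2$-strong convexity of $L(\cdot, \hat\beta)$ in $h$ (inherited from the quadratic regularizer, using $\hat\beta \ge \beta_{\tn{min}} > 0$) yields an $L^2(\mathbb{P}_{1,X})$ bound $\E_1[(\hat h - h^*)^2] \lesssim \ee_{\tn{gen}}/\lambda$. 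A Cauchy--Schwarz step then translates this into coverage errors of order $\sqrt{\lambda \ee_{\tn{gen}}}$ and $\lambda^{-1/4}n_1^{-1/4}$, which aggregate into the $\Theta(\lambda^{1/2}/n_1^{1/4})$ and $A'\lambda^{-1/4}n_1^{-1/4}$ pieces of $\ee_{\tn{cov}}$.

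The hard part is precisely this final step: the empirical FOCs feature $\hat h$, which depends on the entire sample, so naive pointwise concentration does not apply. The resolution exploits the key structural feature of LR-QR: the regularizer $\lambda \beta^2 \E_1[h^2]$ supplies strong convexity with modulus proportional to $\lambda$ on $\cH_B$ (valid because $\hat\beta \in \mathcal{I}$ is bounded away from zero), so the suboptimality gap from \Cref{thm:generalize} converts directly into $L^2$-closeness of $\hat h$ to $h^*$. This mechanism generates the $\lambda^{-1/4}$ scaling and embodies the $\lambda$ trade-off highlighted in the remarks preceding the theorem.
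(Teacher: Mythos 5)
Your overall target identity and the final error scalings match the paper's, and you correctly isolate the two ingredients that matter (the suboptimality gap from \Cref{thm:generalize} and the $\lambda$-proportional curvature of the regularizer), but your mechanism differs from the paper's in a way that leaves a real gap. You work with the \emph{empirical} first-order conditions of \eqref{opt:emp-constr-CP-obj} and then try to pass to population quantities. The problematic term is $(\hat\E_1-\E_1)\bigl[r_{\cH_B}\bigl(\mathbf{1}[S\le \hat h]-(1-\alpha)\bigr)\bigr]$: the indicator is evaluated at the data-dependent threshold $\hat h$, so pointwise concentration fails, and the fix you propose (strong convexity of $L(\cdot,\hat\beta)$ gives $\E_1[(\hat h-h^*)^2]\lesssim \ee_{\tn{gen}}/\lambda$, then Cauchy--Schwarz) does not close it. An $L^2(\mathbb{P}_{1,X})$ bound controls the \emph{population} difference $\E_1[r_{\cH_B}(\mathbf{1}[S\le\hat h]-\mathbf{1}[S\le h^*])]$ through the conditional density bound $C_f$, but it does not control the \emph{empirical} counterpart $\hat\E_1[r_{\cH_B}(\mathbf{1}[S\le\hat h]-\mathbf{1}[S\le h^*])]$, for which no density argument is available; you would need a uniform empirical-process bound over the class of thresholds in $\cH_B$ (VC or bracketing), which reintroduces exactly the explicit dimension dependence the paper is at pains to avoid. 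Two smaller issues: strong convexity in $h$ at fixed $\beta$ gives closeness of $\hat h$ to the minimizer of $L(\cdot,\hat\beta)$, not to $h^*$ (the objective is not jointly strongly convex), and your claim $\E_1[r-r_{\cH_B}]=0$ is unjustified since $r_{\cH_B}$ is a projection onto a ball, not a subspace---fortunately that orthogonality is not needed, as the bound $(1-\alpha)\E_1[|r-r_{\cH_B}|]$ follows directly from $|\mathbf{1}[\cdot]-(1-\alpha)|\le 1-\alpha$, which is also what the paper does.

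The paper's proof avoids the empirical stationarity conditions entirely. It takes the suboptimality bound $L(\hat h,\hat\beta)-L(h^*,\beta^*)\le\ee_{\tn{gen}}$ from \Cref{thm:generalize} and converts it, via the smoothness-based \Cref{lem:smooth} together with the Hessian formulas from \Cref{lem:pinball-deriv}, into bounds on two \emph{population} directional derivatives of $L$ at $(\hat h,\hat\beta)$: one along the $\beta$-coordinate and one along $r_{\cH_B}\times 0$. Because only population expectations appear, the indicator at the data-dependent $\hat h$ is handled by the smoothness of $L$ (through $C_f$), all sampling error is already encapsulated in $\ee_{\tn{gen}}$, and the remaining work is algebra on the gradient formulas, which produces the $2\hat\beta\lambda\E_1[(r_{\cH_B}-\hat\beta\hat h)^2]$ term and, after substituting the bound on $\sqrt{\ee_{\tn{gen}}}$, the three pieces of $\ee_{\tn{cov}}$. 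If you want to salvage your route, the cleanest repair is precisely this device: replace the empirical first-order conditions by the statement that a near-minimizer of a smooth function has a small directional derivative, applied to the population risk.
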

The proof is in \Cref{pf:cov-lower-bd}.
This result states that our LR-QR method has nearly valid coverage at level $1-\alpha$ under covariate shift, up to small error terms that we can control.
The quantity $\ee_{\tn{cov}}$ vanishes as we collect more data.
The term $\E_1[ |r(X) - r_{B}(X)| ]$ captures the level of mis-specification by not including the true likelihood ratio function $r$ in our hypothesis class $\cH_B$. This can be decreased by making the hypothesis class $\cH_B$ larger. Of course, this will also increase the size of the terms $A,A'$ in our coverage error, but in our theory we show that the dependence is mild. Indeed, the terms depend only on a few geometric properties of $\cH_B$ such as the eigenvalues of the sample covariance matrix of the basis $\Phi(X)$ under the source distribution, and a quantitative measure of linear dependence of the features; but not explicitly on the dimension of the basis. 

We highlight the term $2\hat \beta \lambda\E_1[ (r_{B}(X) - \hat \beta \hat h(X))^2 ]$, which is an error term relating the projected likelihood ratio $r_{B}$ to the LR-QR solution $\hat \beta \hat h$.
Crucially, this term is a non-negative quantity multiplied by $\lambda$, and so for appropriate $\lambda$ it may counteract in part the coverage error loss.
Consistent with the above observations, we find empirically that choosing small nonzero regularization parameters improves coverage.
Moreover, we find that choosing the regularization parameter to be on the order of the optimal value for $\ee_{\tn{cov}}$ is suitable choice across a range of experiments.

Our proofs are quite involved and require a number of delicate arguments.
Crucially, they draw on a 
\emph{novel analysis of coverage via stability bounds} from learning theory.
Existing stability results cannot directly be applied,
due to our use of a data-dependent regularizer.
For instance, in classical settings, the optimal regularization tends to zero as the sample size goes to infinity, but this is not the case here.
To overcome this challenge, we combine stability bounds \citep{shalev2010learnability, shalev2014understanding}
with a novel conditioning argument, and we show that the values of $L$ at the minimizers of $\hat L$ and $L$ are close by introducing intermediate losses that sequentially swap out empirical expectations $\hat \E_1, \hat \E_2, \hat \E_3$ with their population counterparts.
We then leverage the smoothness of $L$, to derive that the gradient of $L$ at $(\hat \beta, \hat h)$ is small. 
Finally, we show that a small gradient implies the desired small coverage gap.

As an immediate corollary of \Cref{thm:cov-lower-bd},
we have the following result,
which states that the LR-QR algorithm can be used
to construct prediction sets with group-conditional coverage
for a finite set of potentially overlapping groups.

\begin{corollary}[Group-conditional coverage]\label{cor:gp-cond-cov}
    Let $G_1, \ldots, G_d \subseteq \xx$ be a finite collection of distinct subsets of $\xx$
    such that $\PP[1]{G_i} > 0$ for all $i \in [d]$
    and $\PP[1]{G_i \triangle G_j} > 0$ for all $i, j \in [d]$ with $i\neq j$,
    where $\triangle$ denotes symmetric difference.
    For $i\in [d]$,
    let $\phi_i : \xx \to \R$ be given by
    $\phi_i(x) = \mathbf{1}[x\in G_i]$,
    and consider the basis $\Phi = (\phi_1, \ldots, \phi_d)^\top$.
    Under the same conditions as \Cref{thm:generalize}, 
    consider the LR-QR optimizers $\hat h$ and $\hat \beta$ from \eqref{opt:emp-constr-CP-obj} with basis given by $\Phi$
    and regularization strength $\lambda > 0$.
    Given any $\delta > 0$, for sufficiently large $n_1,n_2,n_3$, we have with probability at least $1-\delta$ that
    \begin{align*}
        \PP[1]{ Y \in \hat C(X) | X \in G_i } &\ge (1-\alpha) 
        + 2 \hat \beta \lambda \E_1[ (c_i \phi_i(X) - \hat \beta \hat h(X))^2 ]- \ee_{\tn{cov}} - (1-\alpha) (1 - c_i \PP[1]{G_i})
    \end{align*}
    for each $i\in [d]$,
    where $c_i = \min\{ 1 / \PP[1]{G_i}, B \}$
    and $\ee_{\tn{cov}}$ is defined as in \Cref{thm:cov-lower-bd}.
\end{corollary}

\section{Experiments}
\label{experiments}

We compare our method
with the following baselines:
(1) Split/inductive conformal prediction  \citep{papadopoulos2002inductive, lei2018distribution};
(2) Weighted-CP: Weighted conformal prediction \citep{tibshirani2019conformal};
(3) 2R-CP: The \emph{doubly robust} method from \cite{yang2024doubly};
 (4) DRO-CP: Distributionally robust optimization \citep{cauchois2024robust};
 (5) DR-iso: Isotonic distributionally robust optimization \citep{gui2024distributionally};
 (6) Robust-CP: Robust weighted conformal prediction
 \citep{ai2024not}.

 \subsection{Choosing the regularization parameter} \label{subsec:choose_lambda}

\Cref{optimal-lambda} suggests an optimal choice of the regularization parameter $\lambda$ in the LR-QR algorithm. Guided by this, we form a uniform grid of size ten from $\lambda^*/{10}$  to $\lambda^*$. 
We then perform three-fold cross-validation over the combined calibration and unlabeled target datasets (without using any labeled test data) as follows: 
we train the LR-QR threshold for each $\lambda$, 
and compute as a validation measure the $\ell^2$-norm of the gradient of the LR-QR objective on the held-out fold. We pick $\lambda$ with the smallest average validation measure across all folds.

This validation measure is motivated by our algorithmic development: the first-order conditions of the LR-QR objective play a fundamental role in ensuring valid coverage in the test domain. While the model is trained to satisfy these conditions on the observed data, we seek to ensure this property generalizes well to unseen data. Thus, our selection criterion is based on two key observations: (1) a small gradient of the LR-QR objective implies reliable coverage, and (2) the regularization parameter $\lambda$ balances the generalization error of the two terms in LR-QR. By minimizing this measure, we select a $\lambda$ that optimally trades off these competing factors.

Finally, we re-train the LR-QR threshold on the entire calibration and unlabeled target datasets using this best $\lambda$, and report coverage and interval size on the held-out labeled test set. This ensures that no test labels are used during hyperparameter tuning.
Additionally, in \Cref{ablation}, we provide deeper insights on different regimes of regularization in practice through an ablation study.

\subsection{Communities and Crime} \label{reg_exp}

We evaluate our methods on the \emph{Communities and Crime} dataset \citep{UCIrepo}, which contains 1994 datapoints corresponding to communities in the United States,
with socio-economic and demographic statistics. 
The task is to predict the (real-valued) per-capita violent crime rate from a 127-dimensional input.

\begin{figure*}
\centering
\includegraphics[width=\textwidth]{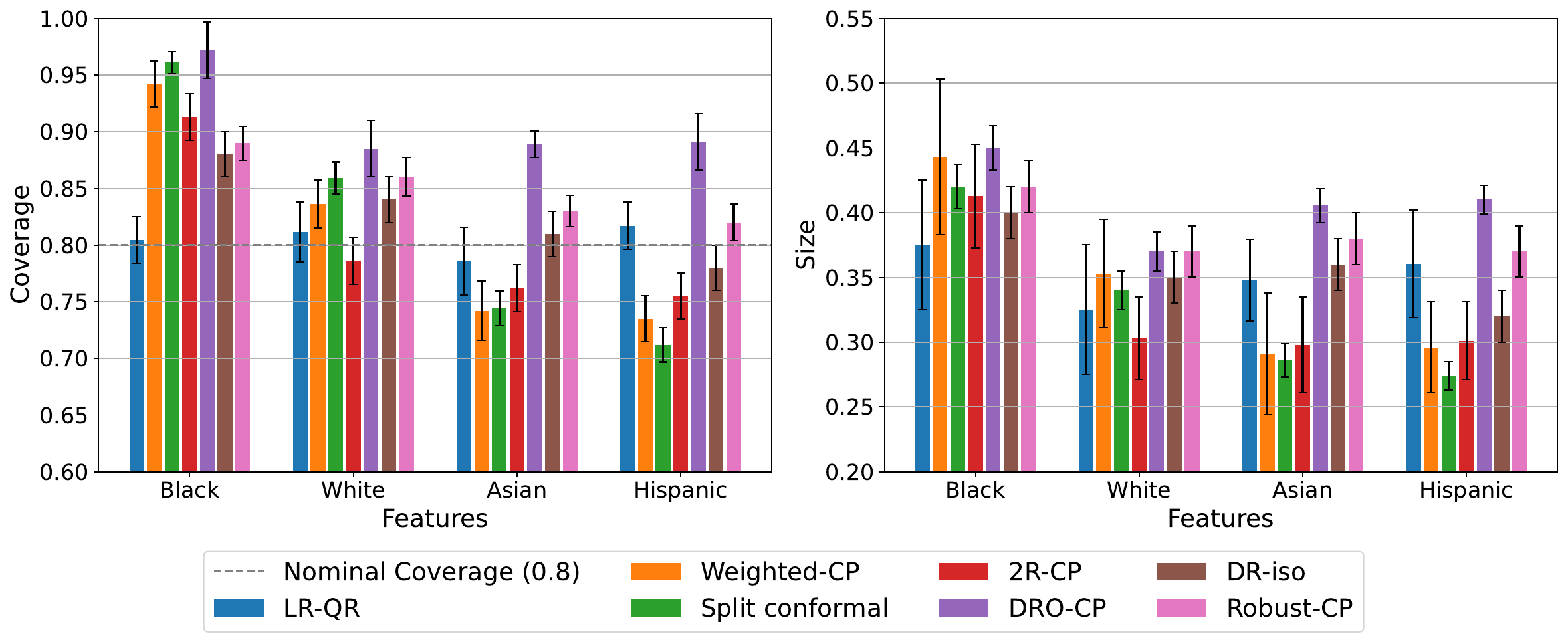}
\caption{(Left)~Coverage. (Right)~Average prediction set size on the Communities and Crime dataset.}
\label{Fig_crime}
\end{figure*}

We first randomly select half of the data as a training set, and use it to fit a ridge regression model $\hat f$ as our predictor. We tune the ridge regularization with five-fold cross-validation.
We use the remaining half to  design four covariate shift scenarios, determined by the frequency of a specific racial subgroup (Black, White, Hispanic, and Asian). 
For each of these features, we find the median value $m$ over the remaining dataset. 
Datapoints with feature value at most $m$ 
form our \emph{source} set, 
and the rest form our \emph{target} set. 
This creates a covariate shift between calibration and test, as the split procedure only observes the covariates and is independent of labels.
We then further split the target set into 
roughly equal \emph{unlabeled} and \emph{labeled} subsets. 
The unlabeled subset and the calibration data (without the labels) is used  to estimate $r$,
while the labeled test subset is held out \emph{only} for final evaluation. 
The same procedure is applied to each of the four racial subgroups, creating four distinct partitions.

{\bf Experimental details.}
The nonconformity score is 
$
s(x,y) \;=\; |y - \hat f(x)\,|.
$
Several baselines require an estimate of the likelihood ratio $r$,
which we obtain by training a logistic regression model $\hat p$ to distinguish unlabeled source and target data. 
We then set $\hat r= \frac{\hat{p}}{1-\hat{p}}$, where $\hat{p}(x)$ is the predicted probability that $x$ came from the target distribution.
The hypothesis class $\mathcal{H}$ consists of all linear maps from the feature space to $\R$. 
All experimental results are averaged over 1000 random splits.

{\bf Results.} \Cref{Fig_crime} displays the results. Notably, split conformal undercovers in two setups and overcovers in the other two.
Methods that estimate $r$ and DRO fail to track the nominal coverage, particularly in the first setup on the left. However, the LR-QR method is closer to the nominal level of coverage, 
showing a stronger adaptivity to the covariate shift.

\subsection{Multiple choice questions - MMLU}

We evaluate all methods using the MMLU benchmark, which covers 57 subjects spanning a wide range of difficulties. To induce a covariate shift, we partition the dataset by subject difficulty: prompts from subjects labeled as \textit{elementary} or \textit{high school} are used for calibration, while those from \textit{college} and \textit{professional} subjects form the test set. 

Motivated by the design from \cite{kumar2023conformal}, we follow a prompt-based scoring scheme adapted for LLMs: we append the string ``The answer is the option:'' to the end of each MMLU question and feed the resulting prompt into the Llama 13B model without generating any output. We then extract the next-token logits corresponding to the first decoding position (i.e., immediately after the prompt) and consider the logits associated with the characters \texttt{A}, \texttt{B}, \texttt{C}, and \texttt{D}. These four logits are normalized using the softmax function to produce a probability vector over the answer options. 

{\bf Experimental details.}
The nonconformity score is \(s(x,y) = 1 - f(x)_y\), where \(f(x)_y\) is the probability assigned to the correct answer. 
For $\hat r$ and $\cH$, we compute prompt embeddings as follows. 
We extract the final hidden layer outputs from GPT-2 Small to obtain 768-dimensional embeddings.
We then apply average pooling across all token embeddings in a prompt to obtain a single fixed-length vector representation for each input.
We fit a probabilistic classifier \(\hat{p}\) using logistic regression on the unlabeled pooled embeddings from the source and target data, 
and we set $\hat{r} = \frac{\hat p}{1-\hat p}$.
We set 
$\mathcal{H}$ to be a linear head on top of the representation layer 
of the pretrained model. 

\textbf{Results.} As shown in Table~\ref{table}, our LR-QR method achieves near-nominal coverage and has the smallest average prediction set size among methods that achieve approximately 90\% or higher coverage, demonstrating both validity and efficiency under covariate shift.

\begin{table}[ht] 
\centering
\caption{Comparison of Methods by Coverage and Set Size}
\begin{tabular}{l@{\hskip 6pt}*{8}{>
{\centering\arraybackslash}p{1.4cm}@{\hskip 6pt}}}
\toprule
Metric & Nominal & LR-QR & DRO & WCP & SCP & DR-iso & Robust-CP & 2R-CP \\
\midrule
Coverage (\%) & 90.0 & 89.6 & 99.7 & 86.5 & 78.1 & 96.3 & 95.8 & 96.9 \\
Set Size      & -- & 3.38 & 3.92 & 3.31 & 2.60 & 3.64 & 3.56 & 3.80 \\
\bottomrule
\end{tabular}\label{table}
\end{table}

\section{Discussion and future work}\sloppy
Distribution shifts are inevitable in machine learning applications. Consequently, precise uncertainty quantification under distribution shifts is essential to ensuring the safety and reliability of predictive models in practice. This challenge becomes even more pronounced when dealing with high-dimensional data, where classical statistical procedures often fail to generalize effectively. In this work, we develop a new conformal prediction method, which we call LR-QR, designed to provide valid test-time coverage under covariate shifts between calibration and test data. In contrast to existing approaches in the literature, LR-QR avoids directly estimating the likelihood ratio function between calibration and test time. Instead, it leverages certain one-dimensional projections of the likelihood ratio function, which effectively enhance LR-QR’s performance in high-dimensional tasks compared to other baselines.

While this paper primarily focuses on marginal test-time coverage guarantees, we acknowledge that in many practical scenarios, marginal guarantees alone may not suffice. An interesting direction for future work is to explore whether the techniques and intuitions developed here can be extended to provide stronger conditional guarantees at test time in the presence of covariate shifts. In particular, is it possible to achieve group-conditional coverage at test time (e.g., see \cite{bastani2022practical,jung2023batch,gibbs2025conformal}) without directly estimating the likelihood ratio function?

Additionally, several open questions remain regarding the regularization technique in LR-QR. Specifically, what alternative forms of regularization, beyond the mean squared error used in this work, could be employed to further improve test-time coverage? Which type of regularization is optimal in the sense that it yields the most precise test-time coverage? Furthermore, what is the most effective strategy for tuning the regularization strength? In particular, can these ideas be extended to design a hyperparameter-free algorithm? Finally, the data-adaptive regularization introduced in this work may have applications beyond conformal prediction, serving as a general technique to improve robustness to covariate shifts in other machine learning problems.

\section{Acknowledgments}
ED and SJ were supported  
by NSF, ARO, ONR, AFOSR, and the Sloan Foundation. The work of HH, SK, and GP was supported by the NSF Institute for CORE Emerging Methods in Data Science (EnCORE). 








{\small
\setlength{\bibsep}{0.2pt plus 0.3ex}
\bibliographystyle{plainnat-abbrev}
\bibliography{ref}
}

\newpage
\appendix


\section{Additional figures}
\begin{figure*}[ht]
\centering
\includegraphics[width=\textwidth]{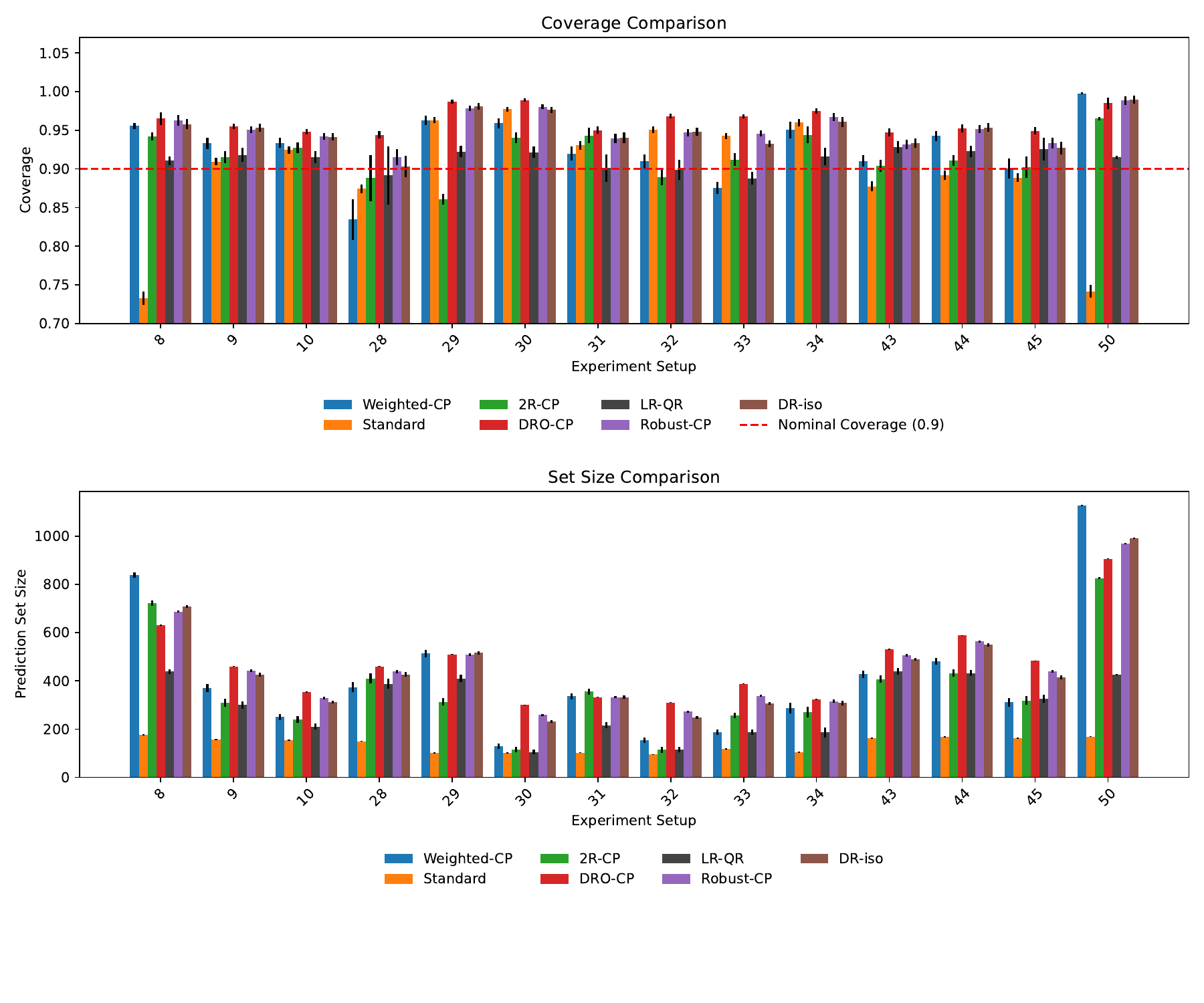}
\caption{(Above)~Coverage, (Below)~Average prediction set size.}
\label{Fig_rx_tot}
\end{figure*}

\begin{figure*}[ht]
\centering
\begin{subfigure}{0.5\textwidth}
    \centering
    \includegraphics[width=\textwidth]{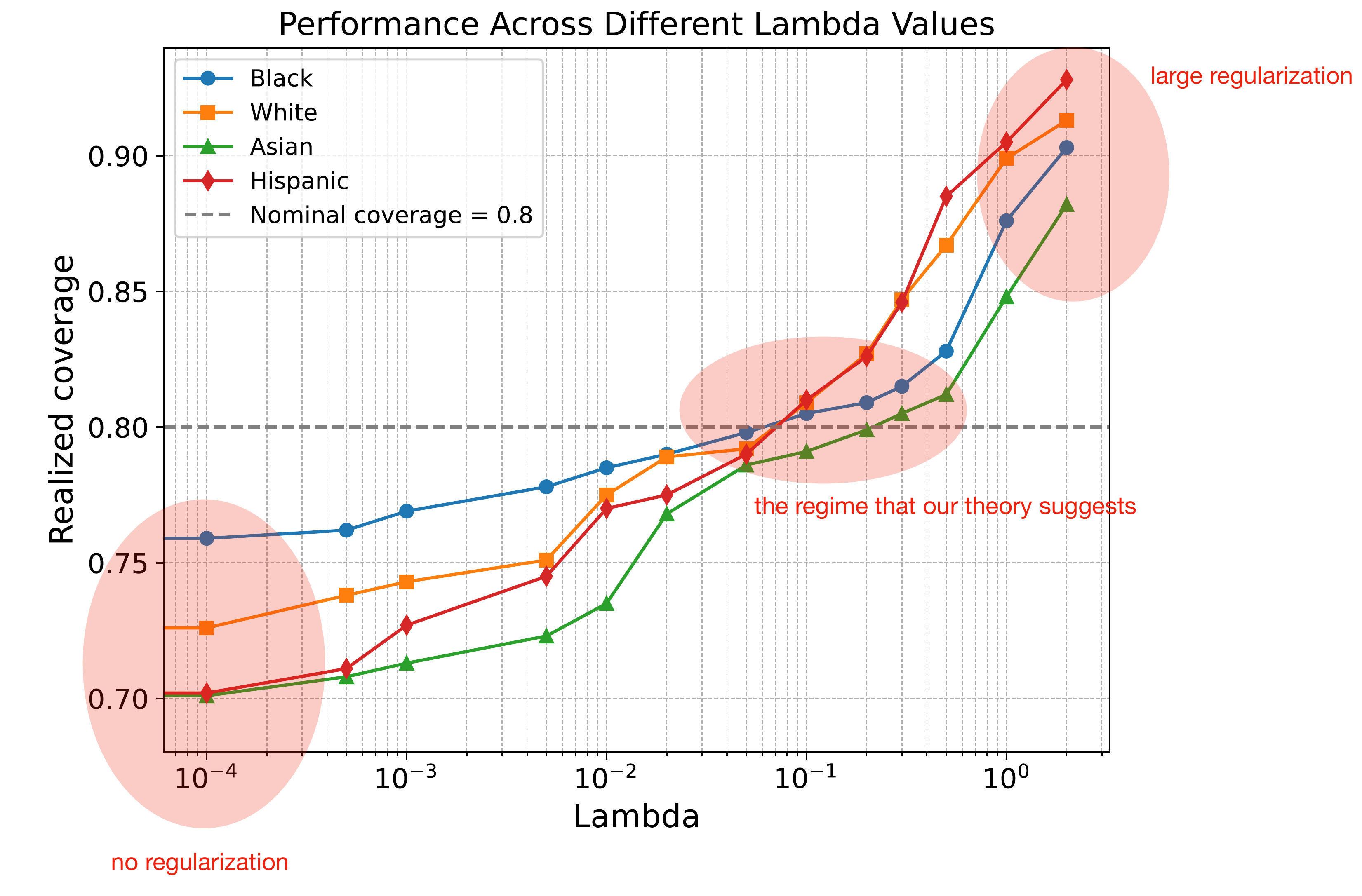}
    \caption{}
    \label{ablation-img}
\end{subfigure}
\hfill
\begin{subfigure}{0.5\textwidth}
    \centering
    \includegraphics[width=0.98\textwidth]{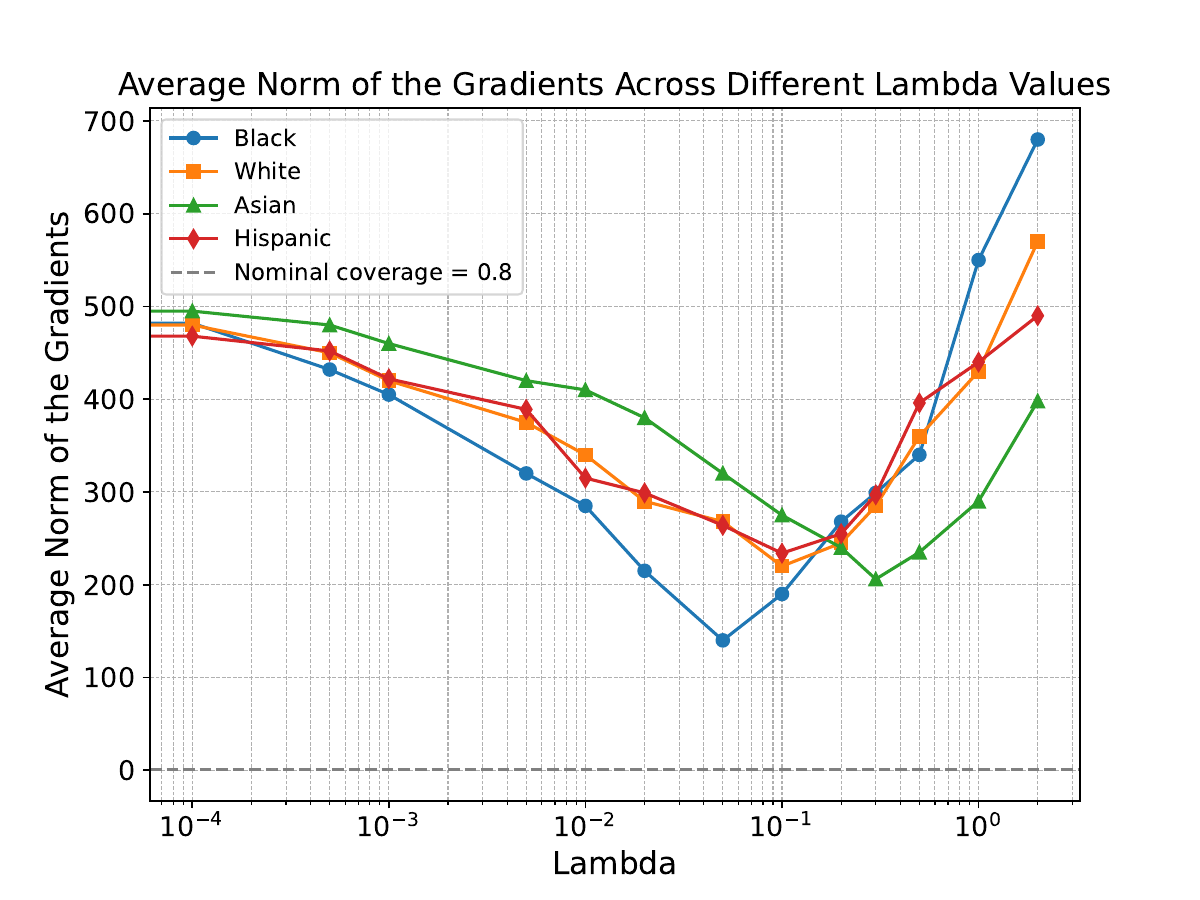} 
    \caption{}
    \label{grad-img}
\end{subfigure}
\caption{\textbf{Ablation study on the effect of $\lambda$ on LR-QR performance in the experimental setup of \Cref{reg_exp}.} In (a), the theoretically suggested regime for $\lambda$ effectively ensures valid test-time coverage. Additionally, in (b), the average norm of the gradients reaches its lowest value in the regime predicted by theory, highlighting the effectiveness of the cross-validation procedure described in \Cref{subsec:choose_lambda}.}
\label{fig:combined-plots}
\end{figure*}

\section{Additional Experiment}

\subsection{RxRx1 data - WILDS}

\begin{figure*}
\centering
\includegraphics[width=\textwidth]{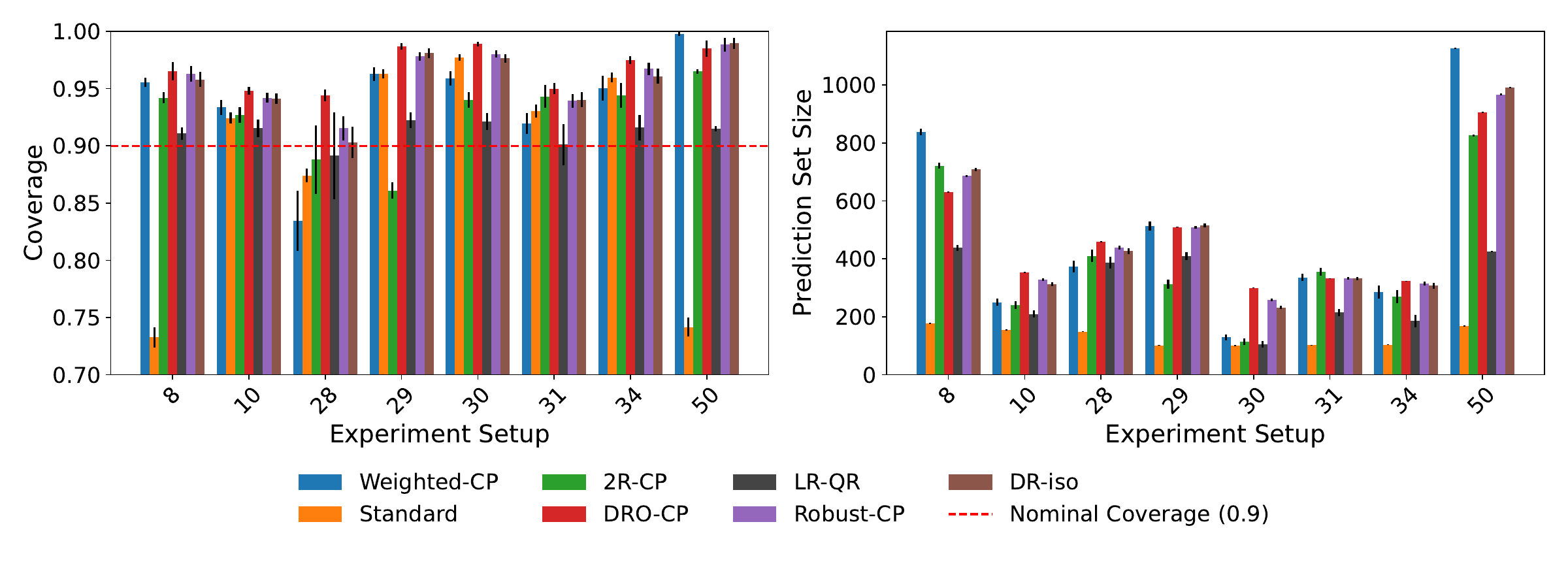}
\caption{(Left)~Coverage, (Right)~Average prediction set size on the RxRx1 dataset from the WILDS repository.}
\label{Fig_rx}
\end{figure*}

We consider the RxRx1 dataset \citep{sypetkowski2023rxrx1} from the WILDS repository \citep{koh2021wilds}, which is designed to evaluate model robustness under distribution shifts. The RxRx1 task involves classifying cell images based on 1339 laboratory genetic treatments. 
These images, captured using fluorescent microscopy, originate from 51 independent experiments. 
Variations in execution and environmental conditions lead to systematic differences across experiments, affecting the distribution of input features (e.g., lighting, cell morphology) while the relationship between inputs and labels remains unchanged.
This situation creates covariate shift where the marginal distribution of inputs shifts across domains, but the conditional distribution \( \mathbb{P}_{Y | X} \) remains the same.

We use a ResNet50 model \citep{he2016deep} 
trained by the WILDS authors on 37 of the 51 experiments. 
Using the other
experiments, we construct 14 distinct evaluations,
where each experiment is selected as the target dataset, and its data is evenly split into an unlabeled target set and a labeled test set. 
The labeled data from the other 13 experiments serves as the source dataset. 

{\bf Experimental details.}
The nonconformity score is 
$
s(x,y) = -\log f_x(y),
$
where $f_x(y)$ is the probability assigned the image-label pair $(x, y)$.
To estimate $r$, we train a logistic regression model $\hat p$ on top of the representation layer of the pretrained model to distinguish unlabeled source and target data, 
and we set $\hat r= \frac{\hat{p}}{1-\hat{p}}$. 
We set the hypothesis class $\mathcal{H}$ to be a linear head on top of the representation layer 
of the pretrained model. 
Experimental results are averaged over 50 random splits.

{\bf Results.}
\Cref{Fig_rx} presents the coverage and average prediction set size for all methods. 
To enhance visual interpretability, we display results for eight randomly selected settings out of the 14, with the full plot provided in \Cref{Fig_rx_tot}. 
The x-axis shows the indices of the test condition.
LR-QR adheres more closely to the nominal coverage value of $0.9$ compared to other methods.  


Notably, split conformal prediction, 
which assumes exchangeability between calibration and test data, shows under- and overcoverage due to the covariate shift.
The coverage of weighted CP and 2R-CP 
is also far from the nominal level, showing that
directly estimating the likelihood ratio and conditional quantile is insufficient to correct the coverage violations
in the case of high-dimensional image data. 
Further, the superior coverage of LR-QR is not due to inflated prediction sets.

\section{Ablation studies}\label{ablation}
Here we provide an ablation study for $\lambda$, the regularization strength that appears in the LR-QR objective.
In the same regression setup as \Cref{reg_exp}, instead of selecting $\lambda$ via cross-validation, here we sweep the value of $\lambda$ from $0$ to $2$, and we plot the coverage of the LR-QR algorithm on the test data. Here, note that the split ratios between train, calibration, and test (both labeled and unlabeled data) are fixed and similar to the setup in \Cref{reg_exp}. We report the averaged plots over 100 independent splits.

\Cref{ablation-img} displays the effect of different regimes of $\lambda$.
At one extreme, when $\lambda$ is close to zero, the LR-QR algorithm reduces to ordinary quantile regression. In this regime, the LR-QR algorithm behaves similarly to the algorithm from \cite{gibbs2025conformal}, without the test covariate imputation. In other words, when we set $\lambda=0$, we try to provide coverage with respect to all the covariate shifts in the linear function class that we optimize over. As we can see in \Cref{ablation-img}, this can lead to overfitting and undercoverage of the test labels.
As we increase $\lambda$, as a direct effect of the regularization, the coverage gap decreases. This is primarily due to the fact that larger $\lambda$ restricts the space of quantile regression optimization
in such a way that it does not hurt the test time coverage,
since the regularization is designed to shrink the optimization space towards the true likelihood-ratio.
Thus, the regularization improves the generalization of the selected threshold, as the effective complexity of the function class is getting smaller.
That being said, this phenomenon is only applicable if $\lambda$ lies within a certain range;
once $\lambda$ grows too large,
due to the data-dependent nature of our regularization,
the generalization error of the regularization term itself becomes non-negligible and hinders the precise test-time coverage of the LR-QR threshold.
As is highlighted in \Cref{ablation-img},
our theoretical results suggest an optimal regime for $\lambda$ which can best exploit the geometric properties of the LR-QR threshold.

Additionally, \Cref{grad-img} demonstrates the effectiveness of the cross-validation technique described in \Cref{subsec:choose_lambda}. We sweep the value of $\lambda$ from $0$ to $2$ and plot the average norm of the gradient on the holdout sets for the cross-validation procedure explained in \Cref{subsec:choose_lambda}. As our theory suggests, it is now evident that the stationary conditions of LR-QR are closely tied to the valid test-time coverage of our method. For all values of $\lambda$, during training, we fit the LR-QR objective to the data, ensuring that the average norm of the gradients is zero. However, when evaluating the LR-QR objective on the holdout set, the average norm of the gradients is no longer zero due to generalization errors. Selecting $\lambda$ correctly minimizes this generalization error, thereby providing more precise test-time coverage.

\section{Further related work}
\label{app:related-work}

The basic concept of prediction sets dates back to foundational works such as \cite{Wilks1941}, \cite{Wald1943}, \cite{scheffe1945non}, and \cite{tukey1947non,tukey1948nonparametric}.
The early ideas of conformal prediction were developed in \cite{saunders1999transduction,vovk1999machine}.
With the rise of machine learning, conformal prediction has emerged as a widely used framework for constructing prediction sets \citep[e.g.,][]{papadopoulos2002inductive,vovk2005algorithmic,lei2018distribution,angelopoulos2021gentle}. Since then, efforts have been emerged to improve prediction set size efficiency \citep[e.g.,][]{Sadinle2019, StutzICLR2022, bai2022efficient, kiyani2024length, noorani2024conformal} and conditional coverage guarantees \citep[e.g.,][]{foygel2021limits, sesia2021conformal, gibbs2025conformal, romano2019conformalized, kiyani2024conformal, jung2023batch}.


Numerous works have addressed conformal prediction under various types of distribution shift 
\citep{tibshirani2019conformal,park2021pac,park2022pac,qiu2023prediction,si2023pac}. For example, \cite{tibshirani2019conformal} and \cite{Lei2021} investigated conformal prediction under covariate shift, assuming the likelihood ratio between source and target covariates is known or can be precisely estimated from data. 
\cite{park2021pac} developed prediction sets with a calibration-set conditional (PAC) property under covariate shift. 
\cite{qiu2023prediction,yang2024doubly} developed prediction sets with asymptotic coverage that are doubly robust in the sense that their coverage error is bounded by the product of the estimation errors of the quantile function of the score and the likelihood ratio.
\cite{cauchois2024robust} construct prediction sets based on a distributionally robust optimization approach.
\cite{gui2024distributionally} develop methods based on an isotonic regression estimate of the likelihood ratio.
\cite{qin2024distribution} combine a parametric working model with a resampling approach to construct prediction sets under covariate shift.
\cite{bhattacharyya2024group} analyze weighted conformal prediction in the special case of covariate shifts defined by a finite number of groups.
\cite{ai2024not} reweight samples to adapt to covariate shift, while simultaneously using distributionally robust optimization to protect against worst-case joint distribution shifts.
\cite{kasa2024adapting} construct prediction sets by using unlabeled test data to modify the score function used for conformal prediction.

\section{Notation and conventions}

Constants are allowed to depend on dimension only through properties of the population and sample covariance matrices of the features, and the amount of linear independence of the features; see the quantities $\lambda_{\tn{min}}(\Sigma)$, $\lambda_{\tn{max}}$, $c_{\tn{min}}$, $c_{\tn{max}}$, and $c_{\tn{indep}}$ defined in \Cref{sec:conds}.
In the Landau notation ($o$, $O$, $\Theta$), we hide constants.
We say that a sequence of events holds with high probability if the probability of the events tends to unity.
We define $\mathcal{S}_1$ as the features of the labeled calibration dataset.
All functions that we minimize can readily be verified to be continuous, and thus attain a minimum over the compact domains over which we minimize them; thus all our minimizers will be well-defined.  We may not mention this further.
We denote by $\mathbf{1}[A]$ the indicator of an event $A$.
Recall that $\cH$ denotes the linear hypothesis class 
$\cH = \{ \langle \gamma, \Phi \rangle : \gamma \in \R^d \}$.
This defines a one-to-one correspondence between $\R^d $ and $\cH$.
This enables us to view functions defined on $\R^d$ equivalently as defined on $\cH$. In our analysis, we will use such steps without further discussion.
Unless stated otherwise, $\cH$ is 
equipped with the norm $\|h\| := \|\gamma\|_2$ for $h = \langle \gamma, \Phi\rangle$.
Given a differentiable function $\varphi : \cH \to \R$,
its directional derivative at $f = \langle \gamma, \Phi\rangle \in \cH$
in the direction defined by the function $g \in \cH$ 
is defined as
$\frac{d}{d\ep}\big|_{\ep=0} \varphi(f + \ep g)$.
Note that if we write
$g = \langle \tilde \gamma, \Phi\rangle$
for some $\tilde \gamma \in \R^d$,
then the directional derivative of $\varphi$
at $f$ equals
$\langle \tilde \gamma, \nabla_{\gamma} \varphi(\gamma) \rangle$,
where $\nabla_{\gamma} \varphi(\gamma)$ denotes
the gradient of $\varphi : \R^d \to \R$
evaluated at $\gamma \in \R^d$.
When it is clear from context, we drop the subscript $\lambda$ from the risks $L_{\lambda}$ and $\hat L_{\lambda}$.

\section{Conditions}\label{sec:conds}




\begin{condition}\label{cond:c-phi}
    Suppose $C_{\Phi} = \sup_{x\in \xx} \|\Phi(x)\|_2$ is finite.
\end{condition}

\begin{condition}\label{cond:pop-cov-matr}
    For the population covariance matrix $\Sigma = \E_1[ \Phi \Phi^\top  ]$, we have $\lambda_{\tn{min}}(\Sigma) > 0$ and $\lambda_{\tn{max}}(\Sigma)$ is of constant order, not depending on the sample size, or any other problem parameter.
\end{condition}

\begin{condition}\label{cond:sample-cov-matr}
    For the sample covariance matrix
$    \hat \Sigma = \frac{1}{n_3} \sum_{k=1}^{n_3} \Phi(x_k) \Phi(x_k)^\top $,
we have both $\lambda_{\tn{min}}(\hat \Sigma) \ge c_{\tn{min}} > 0$ and $\lambda_{\tn{max}}(\hat \Sigma) \le c_{\tn{max}}$ of constant order with probability $1 - o(n_3^{-1})$. 
\end{condition}

\begin{condition}\label{cond:c1-c2-bds}
    Defining $C_1$ as in \eqref{C12} in \Cref{sec:lip_proc}, assume there exists an upper bound $C_{1,\tn{upper}}$ on $\E[C_1]$ of constant order.
\end{condition}

\begin{condition}\label{cond:condl-dens-bdd}
    The conditional density $f_{S|X=x}$ exists for all $x\in \xx$, and $C_f = \sup_{x\in \xx} \| f_{S|X=x}(s) \|_{\infty}$ is a finite constant. 
\end{condition}

The following can be interpreted as an independence assumption on the basis functions.
\begin{condition}\label{cond:basis-indep}
    Suppose $
    \inf_{v\in S^{d-1}} \E_1[ |\langle v, \Phi \rangle| ] \ge c_{\tn{indep}} > 0$ for some constant $c_{\tn{indep}}$.
\end{condition}

\begin{condition}\label{cond:c-align}
Suppose
$   \frac{\E_1[ rh_0^* ]}{\E_1[ |h_0^*|^2 ]^{1/2}} \ge c_{\tn{align}} > 0$
for some minimizer $h_0^*$ of the objective in \Cref{opt:elim-beta-unconstr-CP-obj} with regularization $\lambda = 0$.
\end{condition}

\begin{condition}\label{cond:r-second-mom}
    Suppose $\E_1[r^2]$ is finite.
\end{condition}

\begin{condition}
    \label{cond:1-in-H}
    The constant function $h : \xx \to \R$ given by $h(x) = 1$ for all $x\in \xx$ is in $\cH$.
\end{condition}

The following ensures that the zero function $0\in \cH$ is not a minimizer of the objective in \Cref{opt:unconstr-CP-obj}.
\begin{condition}\label{cond:zero-subopt}
    For each $\lambda \ge 0$, there exists $h \in \cH$ and $\beta \in \R$ such that 
\begin{align*}
    \E_1[ \ell_{\alpha}(h,S) ] + \lambda \E_1[ (\beta h - r)^2 ] < \E_1[ \ell_{\alpha}(0,S) ] + \lambda \E_1[ r^2 ].
\end{align*}
\end{condition}

\section{Constants}\label{sec:consts}

The following are the constants that appear in \Cref{thm:generalize}:
\begin{align*}
    \rho_1 := 2 \beta_{\tn{max}}^2 BC_{\Phi}^2 +  2 \beta_{\tn{max}} C_{\Phi}, \quad
    \mu_1 := 2\beta_{\tn{min}}^2 c_{\tn{min}}, \quad
    \rho_2 := (1-\alpha) C_{\Phi}, \\
    \widetilde C_1 := \frac{4\rho_1^2}{\mu_1}, \quad 
    \widehat C_2 := \frac{4\rho_2^2}{2 \beta_{\tn{min}}^2 c_{\tn{min}}}, \quad
    A_1 := \sqrt{\frac{64 \widetilde C_1 a_1}{\delta}}, \quad 
    A_2 := \sqrt{\frac{128 \widehat C_2 a_2}{\delta}}.
\end{align*}
Further,
\begin{align*}
    A_3 &:= (1-\alpha) (BC_{\Phi} + 1) \sqrt{\frac{1}{2} \log\frac{8}{\delta}}, \qquad
    A_4 :=     \sqrt{2}
    (\beta_{\tn{max}} BC_{\Phi}) \sqrt{\frac{1}{2} \log\frac{16}{\delta}}
    \max\left\{ \beta_{\tn{max}} BC_{\Phi}, 4  \right\}, \\
    A_5 &:= A_1 + A_4, \quad a_1 := 2C_{\Phi} (C_{2,\tn{upper}} + C_{2,\tn{max}}) (1 + \beta_{\tn{max}} BC_{\Phi}), \quad
    a_2 := (1-\alpha) C_{\Phi} (C_{1,\tn{upper}} + C_{1,\tn{max}}).
\end{align*}
The following are the constants that appear in \Cref{thm:cov-lower-bd}:
\begin{align*}
    A_6 := 2 \beta_{\tn{max}}^2 \sqrt{4B^2 \lambda_{\tn{max}}(\Sigma)} , \quad A_7 := \sqrt{4B^2 \beta_{\tn{max}}^2 \lambda_{\tn{max}}(\Sigma)}, \quad A_8 :=   \sqrt{2B^2 C_f \lambda_{\tn{max}}(\Sigma)} , \quad A_9 := A_6 + A_7,
\end{align*}
and 
\begin{align*}
    A_{10} &:= A_9 A_5^{1/2}, \quad
    A_{11} := \max\{ A_9 A_3^{1/2},  A_8 A_5^{1/2} \}, \\ 
    A_{12} &:= A_9 A_2^{1/2}, \quad
    A_{13} := A_8 A_3^{1/2}, \quad
    A_{14} := A_8 A_2^{1/2}.
\end{align*}


\section{Generalization bound for regularized loss}

The following is a generalization of \cite[Corollary 13.6]{shalev2014understanding}.

\begin{lemma}[Generalization bound for regularized loss; extension of \cite{shalev2014understanding}]\label{lem:generalize}
    Fix a compact and convex hypothesis class $\tilde \cH$ equipped with a norm $\|\cdot\|_{\tilde \cH}$, 
    a compact interval $\mathcal{I} \subseteq \R$, and a sample space $\mz$. 
    Consider the objective function $f : \tilde \cH \times \mathcal{I} \times \mz \to \R$ given by $(h, \beta, z) \mapsto f(h, \beta, z) := \mathcal{J}(h,\beta,z) + \rr(h, \beta)$, where $\rr : \tilde \cH \times \mathcal{I} \to \R$ is a regularization function, 
    and $\mathcal{J} : \tilde \cH \times \mathcal{I} \times \mz \to \R$ can be decomposed as $\mathcal{J}(h,\beta,z) := \mathcal{J}_1(h, \beta, z_1) + \mathcal{J}_2(h, \beta, z_2)$ for two functions $\mathcal{J}_1, \mathcal{J}_2 : \tilde \cH \times \mathcal{I} \times \mz \to \R$.
    
    Given distributions $\dd_1, \dd_2$ on $\mz$, let $\mathcal{L} : \tilde \cH \times \mathcal{I} \to \R$ be given for all $h, \beta$ by 
    $$\mathcal{L}(h, \beta) = \E_{Z_1\sim \dd_1, Z_2\sim \dd_2}[ f(h,\beta,Z_1,Z_2) ]$$ 
    denote the population risk, averaging over independent datapoints $Z_1 \sim \dd_1$ and $Z_2 \sim \dd_2$. 
        Suppose 
    that for both $Z \sim \dd_1$ and $Z \sim \dd_2$, 
    $|\mathcal{J}_1(h,\beta,Z)|$ and $|\mathcal{J}_2(h,\beta,Z)|$ are almost surely bounded by a quantity not depending on $h \in \tilde \cH$ and $\beta \in \mathcal{I}$.
    
    Let $\hat{\mathcal{L}} : \tilde \cH \times \mathcal{I} \to \R$ denote the empirical risk computed over 
    $Z_{i, 1} \overset{i.i.d.}{\sim} \dd_1$, $i \in [m_1]$  and 
     $Z_{j, 2} \overset{i.i.d.}{\sim} \dd_2$, $j \in [m_2]$, 
     given by 
    \begin{align*}
        \hat{\mathcal{L}}(h, \beta) := \frac{1}{m_1} \sum_{i=1}^{m_1} \mathcal{J}_1(h,\beta,Z_{i,1}) + \frac{1}{m_2} \sum_{j=1}^{m_2} \mathcal{J}_2(h,\beta,Z_{j,2}) + \rr(h,\beta).
    \end{align*}
    Assume that for each fixed $\beta \in \mathcal{I}$ and $z\in \mathcal{Z}$,
\begin{itemize}
    \item $h\mapsto \mathcal{J}_1(h,\beta,z)$ is convex and $\rho$-Lipschitz with respect to the norm $\|\cdot\|_{\tilde \cH}$,
    \item $h\mapsto \mathcal{J}_2(h,\beta,z)$ is convex and $\rho$-Lipschitz with respect to the norm $\|\cdot\|_{\tilde \cH}$, and
    \item 
    $h\mapsto\hat{\mathcal{L}}(h, \beta)$ is $\mu$-strongly convex with respect to the norm $\|\cdot\|_{\tilde \cH}$ with probability $1 - o(m_1^{-1} + m_2^{-1})$,
\end{itemize}
where the deterministic values $\mu = \mu(\beta)$ and $\rho = \rho(\beta)$ may depend on $\beta$.

    Let $(\hat h, \hat \beta)$ denote an ERM, i.e., a minimizer of $\hat{\mathcal{L}}(h, \beta)$ over $\tilde \cH \times \mathcal{I}$.
    Let $\hat h_{\beta}$ denote a minimizer of the empirical risk in $h$ for fixed $\beta$.
  
    Suppose the stochastic process $\beta \mapsto W_{\beta}$ given by 
    $W_{\beta} = \mathcal{L}(\hat h_{\beta}, \beta) - \hat{\mathcal{L}}(\hat h_{\beta}, \beta)$ for $\beta \in \mathcal{I}$ obeys $| W_{\beta} - W_{\beta'} | \le K |\beta - \beta'|$ for all $\beta, \beta'\in \mathcal{I}$ for some random variable $K$, and suppose that the probability of  $K_{m_1,m_2} \le K_{\tn{max}}$ converges to unity as $m_1,m_2\to\infty$, for some constant $K_{\tn{max}}$.
    Suppose that there exists a constant $C > 0$ such that for all $\beta \in \mathcal{I}$,
\begin{align}\label{rhomu}
    \frac{4\rho(\beta)^2}{\mu(\beta)} \le C.
\end{align}
    Then for sufficiently large $m_1, m_2$, with probability at least $1-\delta$,
    \begin{align*}
        |\mathcal{L}(\hat h, \hat \beta) - \hat{\mathcal{L}}(\hat h, \hat \beta)| \le \sqrt{\frac{16 CK_{\tn{max}}}{\delta} (m_1^{-1} + m_2^{-1})}.
    \end{align*}
\end{lemma}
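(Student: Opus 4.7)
The plan is to reduce the joint-minimization problem to a pointwise stability bound in $\beta$, and then use the Lipschitz property of $\beta \mapsto W_\beta$ to lift the pointwise bound to a uniform bound over $\mathcal{I}$. Since $(\hat h, \hat \beta)$ jointly minimizes $\hat{\mathcal{L}}$ over $\tilde\cH \times \mathcal{I}$, for each realization we have $\hat h = \hat h_{\hat \beta}$, so $\mathcal{L}(\hat h, \hat \beta) - \hat{\mathcal{L}}(\hat h, \hat \beta) = W_{\hat \beta}$. Thus it suffices to bound $\sup_{\beta \in \mathcal{I}} |W_\beta|$ with high probability, and then evaluate at $\hat \beta$.

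\textbf{Pointwise stability.} Fix $\beta \in \mathcal{I}$ and consider the map $h \mapsto \hat{\mathcal{L}}(h, \beta)$, which is $\mu(\beta)$-strongly convex and obtained by averaging functions that are each $\rho(\beta)$-Lipschitz in $h$, from two independent samples of sizes $m_1$ and $m_2$. A direct adaptation of the stability analysis of regularized ERM from \citet[Corollary 13.6]{shalev2014understanding} to this two-sample setting---where a replacement of a single datapoint in the $m_i$-sample perturbs the minimizer $\hat h_\beta$ by at most $2\rho(\beta)/(\mu(\beta) m_i)$ in $\|\cdot\|_{\tilde \cH}$---yields the on-average generalization bound
\begin{align*}
  \mathbb{E}\bigl[\,|W_\beta|\,\bigr] \;\le\; \frac{4\rho(\beta)^2}{\mu(\beta)} \left( \frac{1}{m_1} + \frac{1}{m_2} \right) \;\le\; C \left( \frac{1}{m_1} + \frac{1}{m_2} \right),
\end{align*}
using the uniform bound \eqref{rhomu}. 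The boundedness hypothesis on $\mathcal{J}_1, \mathcal{J}_2$ ensures the expectations defining $\mathcal{L}$ and $\hat{\mathcal{L}}$ are well-defined, and the $o(m_1^{-1}+m_2^{-1})$ failure probability of strong convexity contributes only negligible terms after conditioning on the event that strong convexity holds.

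\textbf{From pointwise to uniform via Lipschitz peeling.} Integrating the pointwise bound over $\beta \in \mathcal{I}$ and applying Fubini,
\begin{align*}
  \mathbb{E}\left[ \int_{\mathcal{I}} |W_\beta|\, d\beta \right] \;\le\; |\mathcal{I}|\, C \left(\frac{1}{m_1} + \frac{1}{m_2}\right).
\end{align*}
By Markov's inequality, with probability at least $1 - \delta/2$, $\int_\mathcal{I} |W_\beta|\, d\beta \le 2|\mathcal{I}|\, C (m_1^{-1} + m_2^{-1})/\delta$; and with probability at least $1-\delta/2$ (for sufficiently large $m_1, m_2$), $K \le K_{\tn{max}}$. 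On this joint event, for any $\beta \in \mathcal{I}$ and any $\epsilon > 0$ with $[\beta-\epsilon, \beta+\epsilon] \subseteq \mathcal{I}$ (boundary cases are handled by using one-sided neighborhoods and absorbing the resulting constants), the Lipschitz property $|W_\beta - W_t| \le K|\beta - t|$ gives $|W_\beta| \le |W_t| + K_{\tn{max}}\epsilon$ for all $t \in [\beta - \epsilon, \beta + \epsilon]$. Averaging over $t$ in this interval yields
\begin{align*}
  |W_\beta| \;\le\; \frac{1}{2\epsilon} \int_{\beta - \epsilon}^{\beta + \epsilon} |W_t|\, dt + K_{\tn{max}}\epsilon \;\le\; \frac{2 |\mathcal{I}| C (m_1^{-1} + m_2^{-1})}{2\epsilon \delta} + K_{\tn{max}}\epsilon.
\end{align*}
Optimizing in $\epsilon$ (balancing the two terms) gives $\sup_\beta |W_\beta| \lesssim \sqrt{K_{\tn{max}} C (m_1^{-1}+m_2^{-1})/\delta}$, and tracking constants carefully (absorbing $|\mathcal{I}|$ into the product with $C$ if needed, and accounting for the factor-of-two in Markov and in the symmetric interval) yields the stated constant $16$. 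Evaluating at $\beta = \hat \beta$ concludes the proof.

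\textbf{Main obstacle.} The essential difficulty is that classical stability bounds give only pointwise (in $\beta$) generalization guarantees, while $\hat \beta$ is data-dependent and $\tilde\cH \times \mathcal{I}$ is not a product of independent stability domains---one cannot simply apply SSBD to the joint variable. The Lipschitz-process hypothesis is exactly what allows the pointwise bounds to be promoted to a uniform-in-$\beta$ bound without losing the $\sqrt{m^{-1}}$ rate, via the local averaging estimate above. Care is also needed in the two-sample setting to track how replacing one datapoint in sample $i$ contributes Lipschitz constants scaled by $1/m_i$, so that the resulting stability constant is the sum $C(m_1^{-1} + m_2^{-1})$ rather than a single $m^{-1}$ term.
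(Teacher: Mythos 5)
Your proposal is correct in substance and follows the paper's strategy in its core: you reduce to bounding $\sup_{\beta\in\mathcal{I}}|W_\beta|$ via $\hat h=\hat h_{\hat\beta}$, and your pointwise step is exactly the paper's two-sample replace-one stability argument (strong convexity of $h\mapsto\hat{\mathcal{L}}(h,\beta)$ plus $\rho(\beta)$-Lipschitzness of $\mathcal{J}_1,\mathcal{J}_2$ giving a $\frac{4\rho^2}{\mu m_i}$ contribution per sample, with the low-probability failure of strong convexity absorbed through the boundedness of $\mathcal{J}_1,\mathcal{J}_2$). Where you genuinely diverge is the pointwise-to-uniform step: the paper applies Markov's inequality at each point of an $\ep$-net of $\mathcal{I}$, takes a union bound over the net, and uses the $K$-Lipschitz property of $\beta\mapsto W_\beta$ to control the gap to net points, then optimizes $\ep$; you instead integrate the pointwise expectation bound over $\mathcal{I}$ (Fubini), apply Markov once to $\int_{\mathcal{I}}|W_t|\,dt$, and recover pointwise control by local averaging over a window of width $\epsilon$ together with the Lipschitz bound, then optimize $\epsilon$. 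The two routes are essentially interchangeable and give the same $\sqrt{K_{\tn{max}}C(m_1^{-1}+m_2^{-1})/\delta}$ rate; your averaging trick avoids the union bound but, as you note, picks up a factor $|\mathcal{I}|$ inside the square root, so the literal constant $16$ only emerges if $|\mathcal{I}|$ is bounded by an absolute constant --- a caveat the paper shares, since its choice $|N|=1/\ep$ for the $\ep$-net likewise presumes $|\mathcal{I}|$ of constant length. You also reproduce (rather than introduce) one shared looseness: the replace-one identity directly controls $\E[W_\beta]$, and passing to $\E[|W_\beta|]$ is asserted in the same way the paper asserts it, so this is not a gap specific to your argument.
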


\begin{remark}\label{rmk:gen-bd-iid}
    A special case is when 
    we do not have any data from $\mathcal{D}_2$, and instead
    all $m_1$ datapoints are sampled 
 i.i.d.~from $\mathcal{D}_1$.
 In this case, defining with a slight abuse of notation $\mathcal{J}:=\mathcal{J}_1$, the statement simplifies to the analysis of the empirical risk
     \begin{align*}
        \hat{\mathcal{L}}(h, \beta) := \frac{1}{m_1} \sum_{i=1}^{m_1} \mathcal{J}(h,\beta,Z_{i,1}) + \rr(h,\beta).
    \end{align*}
 If for each fixed $\beta \in \mathcal{I}$, we have that $h\mapsto \mathcal{J}(h,\beta,z)$ is convex and $\rho$-Lipschitz with respect to the norm $\|\cdot\|_{\tilde \cH}$, and if $|\mathcal{J}(h,\beta,Z)|$ is almost surely bounded by a quantity not depending on $h\in \tilde \cH$ and $\beta \in \mathcal{I}$ for $Z\sim \dd_1 = \dd_2$, then under the remaining assumptions, we obtain the slightly stronger bound 
\begin{align*}
    |\mathcal{L}(\hat h, \hat \beta) - \hat{\mathcal{L}}(\hat h, \hat \beta)| \le \sqrt{\frac{16 CK_{\tn{max}}}{\delta m_1}}.
\end{align*}
We omit the proof, because it is exactly as below.
\end{remark}

\begin{remark}
    We relax the strong convexity assumption on the regularizer $\rr$ from \cite[Corollary 13.6]{shalev2014understanding}, 
    substituting it with the less restrictive condition of strong convexity of the empirical loss $\hat{\mathcal{L}}$. In order to use assumptions that merely hold with high probability, we impose a boundedness condition on $\mathcal{J}$.
\end{remark}

\begin{proof}
Fix $\beta$ and let $E$ denote the event that $h\mapsto\hat{\mathcal{L}}(h,\beta)$ is $\mu$-strongly convex in $h$.
By assumption, $E$ occurs with probability $1- o(m_1^{-1} + m_2^{-1})$.

We modify the proof of \cite[Corollary 13.6]{shalev2014understanding} as follows.
Let $Z_{1}' \sim \dd_1$ and $Z_{2}' \sim \dd_2$
be drawn independently from all other randomness.
For a fixed $i\in[m_1]$,
let $h \mapsto \hat{\mathcal{L}}_{i,1}(h,\beta)$ denote the empirical risk computed from the sample $(Z_{1,1}, \ldots, Z_{i-1,1}, Z_{1}', Z_{i+1,1}, \ldots, Z_{m_1,1}) \cup (Z_{1,2}, \ldots, Z_{m_2,2})$,
and let $\hat h_{\beta}^{(i)}$ denote an ERM for this sample.
Let $I$ be drawn from $[m_1]$ uniformly at random. 
The variables $J, \hat{\mathcal{L}}_{J,2}(h,\beta), \hat h_{\beta}^{(J)}$ are defined similarly but for the sample from $\dd_2$.

Note that for fixed $\beta$,
similarly to the argument in \cite[Theorem 13.2]{shalev2014understanding},
we have
\begin{align*}
     \E[ \mathcal{L}(\hat h_{\beta}, \beta)]
     &= \E_{Z_1'\sim \dd_1, Z_2'\sim \dd_2} 
     [ \mathcal{J}_1(\hat h_{\beta}, \beta, Z_1') + \mathcal{J}_2(\hat h_{\beta}, \beta, Z_2')+ \rr(\hat h_{\beta}, \beta)]\\
        & = \E_{Z_1'\sim \dd_1, Z_2'\sim \dd_2} 
     [ \mathcal{J}_1(\hat h_{\beta}^{(I)}, \beta, Z_{I,1}) + \mathcal{J}_2(\hat h_{\beta}^{(J)}, \beta, Z_{J,2}) + \rr(\hat h_{\beta}, \beta)]
\end{align*}
and
\begin{align*}
    & \E[\hat{\mathcal{L}}(\hat h_{\beta}, \beta) ]  
    =\E[ \mathcal{J}_1(\hat h_{\beta}, \beta, Z_{I,1})
    + \mathcal{J}_2(\hat h_{\beta}, \beta, Z_{J,2}) + \rr(\hat h_{\beta}, \beta)]. 
\end{align*}
Therefore
\begin{align*}
     \E[ \mathcal{L}(\hat h_{\beta}, \beta) - \hat{\mathcal{L}}(\hat h_{\beta}, \beta) ]  
    =&(\E[ \mathcal{J}_1(\hat h_{\beta}^{(I)}, \beta, Z_{I,1}) - \mathcal{J}_1(\hat h_{\beta}, \beta, Z_{I,1}) ]) \\
    + &(\E[ \mathcal{J}_2(\hat h_{\beta}^{(J)}, \beta, Z_{J,2}) - \mathcal{J}_2(\hat h_{\beta}, \beta, Z_{J,2}) ]). 
\end{align*}

Further, splitting the expectations over $E$ and its complement $E^c$, this further equals
\begin{align}\label{erdecomp}
    & (\E[ (\mathcal{J}_1(\hat h_{\beta}^{(I)}, \beta, Z_{I,1}) - \mathcal{J}_1(\hat h_{\beta}, \beta, Z_{I,1})) \mathbf{1}[E] ] + \E[ (\mathcal{J}_1(\hat h_{\beta}^{(I)}, \beta, Z_{I,1}) - \mathcal{J}_1(\hat h_{\beta}, \beta, Z_{I,1})) \mathbf{1}[E^c] ]) \\
    &+ (\E[ (\mathcal{J}_2(\hat h_{\beta}^{(J)}, \beta, Z_{J,2}) - \mathcal{J}_2(\hat h_{\beta}, \beta, Z_{J,2})) \mathbf{1}[E] ] + \E[ (\mathcal{J}_2(\hat h_{\beta}^{(J)}, \beta, Z_{J,2}) - \mathcal{J}_2(\hat h_{\beta}, \beta, Z_{J,2})) \mathbf{1}[E^c] ]).\nonumber
\end{align}

On the event $E$, $h\mapsto \hat{\mathcal{L}}(h,\beta)$ is $\mu$-strongly convex. 
Now, consider the setting of \cite[Corollary 13.6]{shalev2014understanding}. 
We claim that the arguments in their proof hold if we replace
the regularizer $h\mapsto \lambda \|h\|^2$ by $h\mapsto \mathcal{R}(h,\beta)$, as they only leverage the strong convexity of the overall empirical loss $\hat{\mathcal{L}}$.
Indeed, working on the event $E$, since $\hat{\mathcal{L}}$ is $\mu$-strongly convex, we have that $\hat{\mathcal{L}}(h) - \hat{\mathcal{L}}(\hat h_{\beta}) \ge \frac{1}{2} \mu \|h - \hat h_{\beta}\|^2$ for all $h\in \tilde \cH$.
Next, for any $h_1, h_2 \in \tilde \cH$, we have 
\begin{align*}
    \hat{\mathcal{L}}(h_2) - \hat{\mathcal{L}}(h_1) = \hat{\mathcal{L}}_{I,1}(h_2) - \hat{\mathcal{L}}_{I,1}(h_1) &+ \frac{\mathcal{J}_1(h_2,\beta,Z_{I,1}) - \mathcal{J}_1(h_1,\beta,Z_{I,1})}{m_1} \\
    &- \frac{\mathcal{J}_1(h_2,\beta,Z_1') - \mathcal{J}_1(h_1,\beta,Z_1')}{m_1}.
\end{align*}
Setting $h_2 = \hat h_{\beta}^{(I)}$ and $h_1 = \hat h$, since $\hat h_{\beta}^{(I)}$ minimizes $h\mapsto \hat{\mathcal{L}}_{I,1}(h,\beta)$, and using our lower bound on $\hat{\mathcal{L}}(h) - \hat{\mathcal{L}}(\hat h_{\beta})$, we deduce 
\begin{align}\label{str-cvx-step}
    \frac{1}{2} \mu \|\hat h_{\beta}^{(I)} - \hat h_{\beta}\|^2 \le \frac{\mathcal{J}_1(\hat h_{\beta}^{(I)},\beta,Z_{I,1}) - \mathcal{J}_1(\hat h_{\beta},\beta,Z_{I,1})}{m_1} - \frac{\mathcal{J}_1(\hat h_{\beta}^{(I)},\beta,Z_1') - \mathcal{J}_1(\hat h_{\beta},\beta,Z_1')}{m_1}.
\end{align}
Since by assumption, $h\mapsto \mathcal{J}_1(h,\beta,z)$ is $\rho$-Lipschitz, we have the bounds $|\mathcal{J}_1(\hat h_{\beta}^{(I)},\beta,Z_{I,1}) - \mathcal{J}_1(\hat h_{\beta},\beta,Z_{I,1})| \le \rho|\hat h_{\beta}^{(I)} - \hat h_{\beta}|$ and $|\mathcal{J}_1(\hat h_{\beta}^{(I)},\beta,Z_1') - \mathcal{J}_1(\hat h_{\beta},\beta,Z_1')| \le \rho|\hat h_{\beta}^{(I)} - \hat h_{\beta}|$. Plugging these into \Cref{str-cvx-step}, we obtain $\frac{1}{2} \mu \|\hat h_{\beta}^{(I)} - \hat h_{\beta}\|^2 \le \frac{2\rho}{m_1} \|\hat h_{\beta}^{(I)} - \hat h_{\beta}\|$, so that $\| \hat h_{\beta}^{(I)} - \hat h_{\beta} \| \le \frac{4\rho(\beta)}{\mu(\beta) m_1}$.
Using once again that $h\mapsto \mathcal{J}_1(h,\beta,z)$ is $\rho$-Lipschitz, we find
$|\mathcal{J}_1(\hat h_{\beta}^{(I)}, \beta, Z_{I,1}) - \mathcal{J}_1(\hat h_{\beta}, \beta, Z_{I,1})| \le \frac{4\rho(\beta)^2}{\mu(\beta) m_1}$.

Similarly, on the event $E$, we have the bound 
$| \mathcal{J}_2(\hat h_{\beta}^{(J)}, \beta, Z_{J,2}) - \mathcal{J}_2(\hat h_{\beta}, \beta, Z_{J,2}) | \le \frac{4\rho(\beta)^2}{\mu(\beta) m_2}$.
Thus the first and third terms are bounded in magnitude by $\frac{4\rho(\beta)^2}{\mu(\beta) m_1}$ and $\frac{4\rho(\beta)^2}{\mu(\beta) m_2}$, respectively. Due to \eqref{rhomu}, their sum is at most $C(m_1^{-1} + m_2^{-1})$.

By our assumption that $|\mathcal{J}_1(h,\beta,Z)|$ and $|\mathcal{J}_2(h,\beta,Z)|$ are almost surely bounded by a constant for both $Z\sim \dd_1$ and $Z\sim \dd_2$, and our assumption that $\PP{E^c} = o(m_1^{-1} + m_2^{-1})$, 
the second term and fourth terms 
from \eqref{erdecomp}
sum to $o(m_1^{-1} + m_2^{-1})$.
Thus for for each $\beta$, for sufficiently large $m_1, m_2$, we have
$   \E[ |W_{\beta}| ] \le 2C(m_1^{-1}+m_2^{-1})$.
By Markov's inequality,
for any fixed $t>0$, 
$|W_{\beta}| > t$ with probability at most $\frac{2C}{t} (m_1^{-1}+m_2^{-1})$.
We now use chaining.
Let $N$ be an $\ep$-net for $\mathcal{I}$. Then using the fact that 
by assumption, the process $W$ is $K_{m_1,m_2}$-Lipschitz, and by a union bound,
\begin{align*}
    \PP{ \sup_{\beta\in \mathcal{I}} |W_{\beta}| > K_{m_1,m_2}\ep + t } \le \PP{ \sup_{\beta \in N} |W_{\beta}| > t } \le |N| \frac{2C}{t} (m_1^{-1}+m_2^{-1}).
\end{align*}
Pick $N$ with $|N|=1/\ep$, and set
$   t = \frac{4C}{\delta} (m_1^{-1}+m_2^{-1}) \frac{1}{\ep}$.
We deduce that
\begin{align*}
    \sup_{\beta \in \mathcal{I}} |W_{\beta}| > K_{m_1,m_2} \ep + \frac{4C}{\delta} (m_1^{-1}+m_2^{-1}) \frac{1}{\ep}
\end{align*}
with probability at most $\frac{\delta}{2}$.
Set
$    \ep = \sqrt{ \frac{4C}{K_{m_1,m_2} \delta} (m_1^{-1}+m_2^{-1}) }$.
We deduce that
\begin{align*}
    \sup_{\beta \in \mathcal{I}} |W_{\beta}| > \sqrt{\frac{16 CK_{m_1,m_2}}{\delta} (m_1^{-1}+m_2^{-1})}
\end{align*}
with probability at most $\frac{\delta}{2}$.
Since the probability of  $K_{m_1,m_2}\le K_{\tn{max}}$ converges to unity, for sufficiently large $m_1, m_2$,
\begin{align*}
    \sup_{\beta \in \mathcal{I}} |W_{\beta}| > \sqrt{\frac{16 CK_{\tn{max}}}{\delta} (m_1^{-1}+m_2^{-1})}
\end{align*}
holds with probability at most $\delta$.
Since $|W_{\hat \beta}| \le \sup_{\beta\in \mathcal{I}} |W_{\beta}|$, we may conclude.
\end{proof}

\section{Lipschitz process}\label{sec:lip_proc}

\begin{lemma}[Lipschitzness of minimizer of perturbed strongly convex objective]\label{lem:perturb}
    Let $\cc \subseteq \R^d$ be a closed convex set. 
    Suppose $\psi : \cc \to \R$ is $\mu$-strongly convex and 
    $g : \cc \to \R$ is 
    $L$-smooth. 
    Suppose also that $\psi+g$ is convex. 
    Let $x_\psi$ denote the minimizer of $\psi$ in $\cc$, and let $x_{\psi+g}$ denote the minimizer of $\psi+g$ in $\cc$. Then
    for any $x \in \cc$,
\begin{align*}
    \| x_{\psi+g} - x_\psi \|_2 \le \frac{1}{\mu} ( L \|x_{\psi+g} - x \|_2 + \| \nabla g(x) \|_2 ).
\end{align*}
\end{lemma}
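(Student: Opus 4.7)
The plan is to combine first-order optimality conditions with strong convexity of $\psi$ and smoothness of $g$ in a standard way, and then use the triangle inequality to trade off the gradient at $x_{\psi+g}$ for the gradient at an arbitrary reference point $x$.

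First I would write down the variational characterizations of the two minimizers on the closed convex set $\cc$. Since $x_\psi$ minimizes $\psi$ over $\cc$, the first-order optimality condition gives
\begin{equation*}
    \langle \nabla \psi(x_\psi),\, x_{\psi+g} - x_\psi \rangle \ge 0.
\end{equation*}
Since $x_{\psi+g}$ minimizes $\psi+g$ over $\cc$ (which is a convex program because $\psi+g$ is convex by assumption), likewise
\begin{equation*}
    \langle \nabla \psi(x_{\psi+g}) + \nabla g(x_{\psi+g}),\, x_\psi - x_{\psi+g} \rangle \ge 0.
\end{equation*}
Adding the two inequalities eliminates the boundary terms and yields
\begin{equation*}
    \langle \nabla \psi(x_{\psi+g}) - \nabla \psi(x_\psi),\, x_{\psi+g} - x_\psi \rangle \le \langle \nabla g(x_{\psi+g}),\, x_\psi - x_{\psi+g} \rangle.
\end{equation*}

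Next I would apply $\mu$-strong convexity of $\psi$, which lower-bounds the left-hand side by $\mu \|x_{\psi+g}-x_\psi\|_2^2$, followed by Cauchy--Schwarz on the right-hand side. Dividing through by $\|x_{\psi+g}-x_\psi\|_2$ (the case where it equals zero is trivial) produces the intermediate bound
\begin{equation*}
    \|x_{\psi+g} - x_\psi\|_2 \le \frac{1}{\mu}\, \|\nabla g(x_{\psi+g})\|_2.
\end{equation*}

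Finally I would introduce the reference point $x \in \cc$ via the triangle inequality,
\begin{equation*}
    \|\nabla g(x_{\psi+g})\|_2 \le \|\nabla g(x_{\psi+g}) - \nabla g(x)\|_2 + \|\nabla g(x)\|_2,
\end{equation*}
and use $L$-smoothness of $g$ to bound the first term by $L\,\|x_{\psi+g} - x\|_2$. Combining these two displays delivers the claimed inequality. The main (mild) subtlety is that differentiability of $\psi$ is not explicitly stated; this is not really an obstacle because strong convexity together with smoothness of $\psi + g$ (from $L$-smoothness of $g$) lets me use either gradients or subgradients in the optimality conditions, and the algebra above goes through verbatim with any valid subgradient of $\psi$ at $x_\psi$ and $x_{\psi+g}$.
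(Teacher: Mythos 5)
Your proposal is correct and follows essentially the same route as the paper's proof: both combine the constrained first-order optimality conditions at $x_\psi$ and $x_{\psi+g}$ with $\mu$-strong convexity of $\psi$ to get $\mu\|x_{\psi+g}-x_\psi\|_2^2 \le \langle \nabla g(x_{\psi+g}), x_\psi - x_{\psi+g}\rangle$, and then pass from $\nabla g(x_{\psi+g})$ to $\nabla g(x)$ using $L$-smoothness; your ordering (Cauchy--Schwarz first, then triangle inequality on the gradient) versus the paper's (splitting inside the inner product) is only a cosmetic difference.
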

\begin{proof}
    Since $\psi$ is $\mu$-strongly convex and since $x_{\psi+g}, x_\psi$ are minimizers of $\psi+g,\psi$ respectively,
    \begin{align*}
        \mu \| x_{\psi+g} - x_\psi \|_2^2 &\le \langle \nabla \psi(x_{\psi+g}) - \nabla \psi(x_\psi), x_{\psi+g} - x_\psi \rangle \\
        &=  \langle \nabla (\psi+g)(x_{\psi+g}), x_{\psi+g} - x_\psi \rangle + \langle \nabla \psi(x_\psi), x_\psi - x_{\psi+g} \rangle \\
        &- \langle \nabla g(x_{\psi+g}), x_{\psi+g} - x_\psi \rangle \\
        &\le - \langle \nabla g(x_{\psi+g}), x_{\psi+g} - x_\psi \rangle \\
        &= - \langle \nabla g(x_{\psi+g}) - \nabla g(x), x_{\psi+g} - x_\psi \rangle - \langle \nabla g(x), x_{\psi+g} - x_\psi \rangle,
    \end{align*}
    so that by $L$-smoothness of $g$,
    \begin{align*}
        \mu \|x_{\psi+g} - x_\psi\|_2^2 \le ( L \|x_{\psi+g}-x\|_2 + \| \nabla g(x) \|_2 ) \|x_{\psi+g}-x_\psi\|_2,
    \end{align*}
    which implies the result.
\end{proof}

\begin{lemma}[Lipschitzness of minimizer of perturbed ERM]\label{lem:hat_h_lip}
    Under \Cref{cond:c-phi}, 
    with 
    $\hat \Sigma$ from Condition \ref{cond:sample-cov-matr},
    and with the notations of Lemma \ref{lem:lip-emp-procs},
    we have
    with respect to the norm $\|\cdot\|$ on $\cH_B$
    that $\beta \mapsto \hat h_{\beta}$ is $C_1$-Lipschitz on $\mathcal{I}$, and $\beta \mapsto \beta \hat h_{\beta}$ is $C_2$-Lipschitz on $\mathcal{I}$, where
    \begin{align}\label{C12}
        C_1 &= (\beta_{\tn{min}}^2 \lambda_{\tn{min}}(\hat \Sigma))^{-1} (  ( 2\beta_{\tn{max}} \lambda_{\tn{max}}(\hat \Sigma) B + C_{\Phi}) + 4 \beta_{\tn{max}} \lambda_{\tn{max}}(\hat \Sigma)  B ),\quad
        C_2 = B + \beta_{\tn{max}} C_1.
    \end{align}
\end{lemma}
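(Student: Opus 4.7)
The plan is to parametrize $\cH_B$ by writing any $h = \langle \gamma, \Phi\rangle$ with $\|\gamma\|_2 \le B$, so that $\hat L(h,\beta)$ becomes a function of $(\gamma,\beta)$ and we may work on the closed convex set $\cc = \{\gamma \in \R^d : \|\gamma\|_2 \le B\}$. Fixing $\beta, \beta' \in \mathcal{I}$, I would apply \Cref{lem:perturb} with $\psi(\gamma) := \hat L(\gamma,\beta)$ and $(\psi+g)(\gamma) := \hat L(\gamma,\beta')$, so that the perturbation $g(\gamma) = \hat L(\gamma,\beta') - \hat L(\gamma,\beta)$ captures the effect of shifting $\beta$ to $\beta'$. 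The key structural observation is that the pinball-loss term $\hat \E_1[\ell_\alpha(\langle \gamma, \Phi\rangle, S)]$ does not depend on $\beta$, so it cancels in $g$, leaving the purely quadratic expression
\begin{align*}
g(\gamma) \;=\; \lambda(\beta'^2 - \beta^2)\, \gamma^\top \hat \Sigma\, \gamma \;-\; 2\lambda(\beta' - \beta)\,\langle \gamma, \hat \E_2[\Phi]\rangle.
\end{align*}

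The next step is to verify the hypotheses of \Cref{lem:perturb}. Under \Cref{cond:sample-cov-matr}, the quadratic part $\lambda\beta^2 \gamma^\top \hat \Sigma \gamma$ of $\psi$ endows $\psi$ with strong convexity of modulus at least $\mu := 2\lambda\beta_{\tn{min}}^2 \lambda_{\tn{min}}(\hat \Sigma)$, since the pinball-loss contribution is convex in $\gamma$ and $\beta \ge \beta_{\tn{min}}$. The perturbation $g$ has constant Hessian $2\lambda(\beta'^2 - \beta^2)\hat \Sigma$, so it is $L$-smooth with
\begin{align*}
L \;\le\; 2\lambda|\beta'^2 - \beta^2|\,\lambda_{\tn{max}}(\hat \Sigma) \;\le\; 4\lambda\beta_{\tn{max}}|\beta' - \beta|\,\lambda_{\tn{max}}(\hat \Sigma),
\end{align*}
using $|\beta' + \beta| \le 2\beta_{\tn{max}}$. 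Moreover $\psi+g = \hat L(\cdot,\beta')$ is itself the sum of a convex pinball term and a positive semidefinite quadratic form, hence is convex. Since the pinball loss is only subdifferentiable, the gradient identities inside the proof of \Cref{lem:perturb} should be interpreted as applying to subgradients $v\in\partial\psi(\hat h_\beta)$ and $v'\in\partial(\psi+g)(\hat h_{\beta'})$ satisfying the first-order optimality conditions of the constrained optima; this is standard and does not change the chain of inequalities.

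Now invoke \Cref{lem:perturb} at $x = \hat h_\beta = x_\psi$ and bound the factor $\|\hat h_{\beta'}-\hat h_\beta\|_2$ appearing in $L\|x_{\psi+g}-x\|_2$ on the right-hand side by the diameter $2B$ of $\cc$. Computing
\begin{align*}
\nabla g(\gamma) = 2\lambda(\beta'^2-\beta^2)\hat \Sigma \gamma - 2\lambda(\beta'-\beta)\hat \E_2[\Phi],
\end{align*}
we bound $\|\nabla g(\hat h_\beta)\|_2 \le 2\lambda|\beta'-\beta|\bigl(2\beta_{\tn{max}}\lambda_{\tn{max}}(\hat \Sigma)B + C_\Phi\bigr)$ using $\|\hat h_\beta\|_2 \le B$ and $\|\hat \E_2[\Phi]\|_2 \le C_\Phi$ from \Cref{cond:c-phi}. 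Substituting these estimates into \Cref{lem:perturb} and dividing by $\mu$, every factor of $\lambda$ cancels, producing precisely the claimed constant $C_1$ after collecting the two contributions $L\cdot 2B/\mu$ and $\|\nabla g(\hat h_\beta)\|_2/\mu$. The Lipschitz bound on $\beta \mapsto \beta\hat h_\beta$ then follows from the splitting $\beta'\hat h_{\beta'} - \beta\hat h_\beta = \beta'(\hat h_{\beta'} - \hat h_\beta) + (\beta'-\beta)\hat h_\beta$ together with $|\beta'|\le \beta_{\tn{max}}$ and $\|\hat h_\beta\|\le B$, which gives $C_2 = B + \beta_{\tn{max}} C_1$.

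The main delicate point is that $\mu$, $L$, and $\|\nabla g(\hat h_\beta)\|_2$ all scale linearly in $\lambda$; it is precisely this shared linear dependence that causes $\lambda$ to cancel in the final ratio, producing a Lipschitz constant independent of the regularization strength. Correspondingly, choosing the reference point $x = \hat h_\beta$ in \Cref{lem:perturb} together with the crude diameter bound $\|\hat h_{\beta'}-\hat h_\beta\|_2 \le 2B$ (rather than rearranging a self-referring inequality) is what yields a constant valid for \emph{every} pair $\beta,\beta'\in\mathcal{I}$, not merely in the small-increment regime where $L<\mu$ would allow one to solve for $\|\hat h_{\beta'}-\hat h_\beta\|_2$ directly.
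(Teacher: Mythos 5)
Your proposal is correct and follows essentially the same route as the paper: the same quadratic perturbation $g$ obtained by differencing $\hat L(\cdot,\beta)$ and $\hat L(\cdot,\beta')$, the same application of \Cref{lem:perturb} with the crude diameter bound $2B$ and a uniform gradient bound over $\cH_B$, and the same splitting for $C_2$; the only cosmetic differences are your choice of reference point ($x=\hat h_\beta$ rather than the minimizer of $g$) and your explicit remark on subgradients for the nonsmooth pinball term, neither of which changes the argument or the resulting constants.
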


\begin{proof}

    First, consider $\hat h_{\beta}$. Fix $\beta > \beta'$ in $\mathcal{I}$. 
    Recalling the definition of $\hat L$ from \eqref{opt:emp-unconstr-CP-obj},
    the difference between the objectives $\hat L(h,\beta)$ and $\hat L(h, \beta')$ is the quadratic
    \begin{align*}
        g(h) := \hat L(h,\beta) - \hat L(h,\beta') = \lambda \hat \E_3[ (\beta^2 - (\beta')^2) h^2 ] + \lambda \hat \E_2[-2(\beta - \beta') h].
    \end{align*}
    We claim that $g$ is $2\lambda (\beta^2 - (\beta')^2) \lambda_{\tn{min}}(\hat \Sigma)$-strongly convex and $2\lambda (\beta^2 - (\beta')^2) \lambda_{\tn{max}}(\hat \Sigma)$-smooth in $h$. To see this, write $h = \langle \gamma, \Phi \rangle$ for $\gamma \in \R^d$, and note that $g$ can be rewritten as
    \begin{align*}
        g(\gamma) = \lambda (\beta^2 - (\beta')^2) \gamma^\top \hat \Sigma \gamma - 2 (\beta - \beta') \lambda \gamma^\top \hat \E_2[\Phi],
    \end{align*}
    a quadratic whose Hessian equals $2\lambda (\beta^2 - (\beta')^2) \hat \Sigma$, which implies the claim.
    
    Similarly, we claim that the function $\psi(h) := \hat L(h, \beta')$ is $2\lambda (\beta')^2 \lambda_{\tn{min}}(\hat \Sigma)$-strongly convex in $h$. To see this, again write $h = \langle \gamma, \Phi \rangle$ for $\gamma \in \R^d$, and note that $\psi$ can be rewritten as
    \begin{align*}
        \psi(\gamma) = \lambda (\beta')^2 \gamma^\top \hat \Sigma \gamma + \hat \E_1[\ell_{\alpha}(\gamma^\top \Phi, S)] + \lambda \hat \E_2[-2\beta \gamma^\top \Phi].
    \end{align*}
    By \Cref{lem:pinball-cvx}, the second term is convex, and since the third term is linear, it too is convex. The Hessian of the quadratic first term is $2\lambda (\beta')^2 \hat \Sigma$, from which it follows that $\psi$ is $2\lambda (\beta')^2 \lambda_{\tn{min}}(\hat \Sigma)$-strongly convex.
    
    Thus $\psi$ and $g$ satisfy the conditions of \Cref{lem:perturb}, which implies the bound 
    \begin{align}\label{hbeta-beta-prime}
        \| \hat h_{\beta} - \hat h_{\beta'} \| \le (2\lambda (\beta')^2 \lambda_{\tn{min}}(\hat \Sigma))^{-1} ( \| \nabla g(\hat h_g) \|_2 + 2\lambda (\beta^2 - (\beta')^2) \lambda_{\tn{max}}(\hat \Sigma)  \cdot \| \hat h_{\beta} - \hat h_g \| ),
    \end{align}
    where $\hat h_g = \hat h_{g, \beta, \beta'}$ denotes the minimizer of $g$ in $\cH_B$.
    Since
    \begin{align*}
        \nabla g(\gamma) = \lambda (\beta - \beta') ( (\beta + \beta') 2 \hat \Sigma \gamma - 2 \hat \E_2[\Phi] ),
    \end{align*}
    and by $|\beta|, |\beta'| \le \beta_{\tn{max}}$, $\|\gamma\| \le B$, and \Cref{cond:c-phi}, we have
    \begin{align*}
        \| \nabla g(\gamma) \|_2 \le \lambda  ( 4\beta_{\tn{max}} \lambda_{\tn{max}}(\hat \Sigma) B + 2 C_{\Phi}) |\beta - \beta'|
    \end{align*}
    for $\beta, \beta' \in \mathcal{I}$ and $h\in \cH_B$.
    Plugging this into the bound 
    \eqref{hbeta-beta-prime}
    on $\| \hat h_{\beta} - \hat h_{\beta'} \|$ and using the fact that $\beta' \ge \beta_{\tn{min}}$ and $\|\hat h_{\beta}\|, \|\hat h_g\| \le B$,
    \begin{align*}
        \| \hat h_{\beta} - \hat h_{\beta'} \| &\le \\
        &(2\lambda \beta_{\tn{min}}^2 \lambda_{\tn{min}}(\hat \Sigma))^{-1} ( \lambda  ( 4\beta_{\tn{max}} \lambda_{\tn{max}}(\hat \Sigma) B + 2 C_{\Phi}) |\beta - \beta'| + 8\lambda \beta_{\tn{max}} \lambda_{\tn{max}}(\hat \Sigma)  B |\beta - \beta'| ).
    \end{align*}
    Thus we may take
\begin{align*}
    C_1 = (\beta_{\tn{min}}^2 \lambda_{\tn{min}}(\hat \Sigma))^{-1} (  ( 2\beta_{\tn{max}} \lambda_{\tn{max}}(\hat \Sigma) B + C_{\Phi}) + 4 \beta_{\tn{max}} \lambda_{\tn{max}}(\hat \Sigma)  B ).
\end{align*}
For the map $\beta \mapsto \beta \hat h_{\beta}$, fix $\beta > \beta'$ in $\mathcal{I}$, and write $\|\beta \hat h_{\beta} - \beta' \hat h_{\beta'}\| \le |\beta - \beta'| \|\hat h_{\beta}\| + |\beta'| \|\hat h_{\beta} - \hat h_{\beta'}\|$.
For the first term, note that since $\hat h_{\beta} \in \cH_B$ implies $\|\hat h_{\beta}\| \le B$, the first term is bounded by $B|\beta - \beta'|$.
For the second term, note that since $|\beta'| \le \beta_{\tn{max}}$ and since $\beta \mapsto \hat h_{\beta}$ is $C_1$-Lipschitz on $\mathcal{I}$, the second term is bounded by $\beta_{\tn{max}} C_1 |\beta - \beta'|$. Summing, we deduce that $\beta \mapsto \beta \hat h_{\beta}$ is $C_2$-Lipschitz on $\mathcal{I}$, where
$C_2 = B + \beta_{\tn{max}} C_1$.
\end{proof}

\begin{lemma}[Lipschitzness of minimizer of perturbed auxiliary ERM]\label{lem:tilde_h_lip}
    Under \Cref{cond:c-phi}, we have that $\beta \mapsto \tilde h_{\beta}$ is $C_1$-Lipschitz on $\mathcal{I}$, and $\beta \mapsto \beta \tilde h_{\beta}$ is $C_2$-Lipschitz on $\mathcal{I}$.
\end{lemma}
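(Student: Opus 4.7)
The plan is to mimic the proof of Lemma \ref{lem:hat_h_lip} almost verbatim. Based on the discussion of intermediate losses in the proof sketch of \Cref{thm:cov-lower-bd}, the auxiliary ERM $\tilde h_\beta$ is the minimizer in $h \in \cH_B$ of an auxiliary risk $\tilde L(h, \beta)$ obtained from $\hat L(h,\beta)$ by swapping one or more of the empirical expectations $\hat \E_1, \hat \E_2, \hat \E_3$ for their population counterparts. The key observation that makes the same constants $C_1, C_2$ from \eqref{C12} carry over is that whichever substitutions are made, the quadratic-in-$h$ term retains the sample expectation $\hat \E_3[\beta^2 h^2]$, so its Hessian is still $2\lambda\beta^2 \hat \Sigma$, and the linear-in-$h$ term has coefficient $u \in \{\hat \E_2[\Phi], \E_2[\Phi]\}$ satisfying $\|u\|_2 \le C_\Phi$ either way, by Jensen's inequality and \Cref{cond:c-phi}.

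Step by step: I would fix $\beta > \beta'$ in $\mathcal{I}$, set $\psi(h) := \tilde L(h, \beta')$ and $g(h) := \tilde L(h, \beta) - \tilde L(h, \beta')$, and write $h = \langle \gamma, \Phi\rangle$ for $\gamma \in \R^d$. Then $g$ is a quadratic $g(\gamma) = \lambda(\beta^2 - (\beta')^2)\gamma^\top \hat \Sigma \gamma - 2\lambda(\beta - \beta')\gamma^\top u$ with Hessian $2\lambda(\beta^2 - (\beta')^2)\hat\Sigma$, hence $g$ is $2\lambda(\beta^2 - (\beta')^2)\lambda_{\max}(\hat\Sigma)$-smooth. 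For $\psi$, the pinball-loss term is convex (\Cref{lem:pinball-cvx}), the linear term is convex, and the quadratic $\lambda (\beta')^2 \gamma^\top \hat\Sigma\gamma$ has Hessian $2\lambda(\beta')^2\hat\Sigma$, so $\psi$ is $2\lambda(\beta')^2 \lambda_{\min}(\hat\Sigma)$-strongly convex. Since $\psi + g$ equals $\tilde L(\cdot, \beta)$, which is itself convex by the same argument, $\psi$ and $g$ satisfy the hypotheses of \Cref{lem:perturb}.

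Next I would bound $\|\nabla g(\gamma)\|_2$ uniformly for $\|\gamma\| \le B$. The quadratic part contributes at most $4\lambda \beta_{\tn{max}} \lambda_{\max}(\hat\Sigma) B |\beta - \beta'|$, and the linear part at most $2\lambda C_\Phi |\beta - \beta'|$ via the Jensen bound on $\|u\|_2$. Applying \Cref{lem:perturb} at the minimizer $\tilde h_g \in \cH_B$ of $g$ (for which $\|\tilde h_g\| \le B$ and $\|\tilde h_\beta\| \le B$) yields
\begin{align*}
    \|\tilde h_\beta - \tilde h_{\beta'}\| \le (2\lambda (\beta')^2 \lambda_{\min}(\hat\Sigma))^{-1}\bigl( \lambda(4\beta_{\tn{max}}\lambda_{\max}(\hat\Sigma) B + 2C_\Phi) + 8\lambda \beta_{\tn{max}}\lambda_{\max}(\hat\Sigma) B \bigr) |\beta-\beta'|,
\end{align*}
which after using $\beta' \ge \beta_{\tn{min}}$ gives exactly the constant $C_1$ defined in \eqref{C12}. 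The second claim then follows from the triangle inequality
\begin{align*}
    \|\beta \tilde h_\beta - \beta' \tilde h_{\beta'}\| \le |\beta - \beta'|\, \|\tilde h_\beta\| + |\beta'|\, \|\tilde h_\beta - \tilde h_{\beta'}\| \le (B + \beta_{\tn{max}} C_1) |\beta - \beta'| = C_2 |\beta - \beta'|.
\end{align*}

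The proof presents no real obstacle — it is essentially a verification that the argument of Lemma \ref{lem:hat_h_lip} is robust to replacing $\hat \E_2$ or $\hat \E_1$ by population analogues, as none of those substitutions affect the Hessian structure governing strong convexity/smoothness, and the only linear-term bound required ($\|u\|_2 \le C_\Phi$) holds for both the empirical and population versions by Jensen's inequality. The one subtlety I would be careful about is to verify that whichever auxiliary $\tilde L$ is used in the downstream argument still includes the $\lambda \hat \E_3[\beta^2 h^2]$ term (otherwise the Hessian would involve $\Sigma$ rather than $\hat \Sigma$, and the statement's constants in terms of $\hat \Sigma$ would need to be reinterpreted); this is consistent with the conditioning argument sketched in the proof discussion of \Cref{thm:cov-lower-bd}.
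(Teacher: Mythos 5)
Your proposal is correct and matches the paper's approach: the paper's proof of this lemma is literally stated as being almost identical to that of \Cref{lem:hat_h_lip}, which is exactly the argument you reproduce. The one subtlety you flagged is resolved in your favor, since $\tilde L$ as defined in \eqref{tildeL} replaces only $\hat \E_1$ by $\E_1$ and retains the terms $\lambda \hat \E_3[\beta^2 h^2]$ and $\lambda \hat \E_2[-2\beta h]$, so the strong convexity and smoothness constants involve $\hat \Sigma$ exactly as in \eqref{C12}.
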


\begin{proof}
    The proof is almost identical to \Cref{lem:hat_h_lip}.
\end{proof}

Recalling 
$c_{\tn{min}}$ and  $c_{\tn{max}}$ from 
\Cref{cond:sample-cov-matr},
define
\begin{align}\label{C12max}
    C_{1, \tn{max}} &= (\beta_{\tn{min}}^2 c_{\tn{min}})^{-1} (  ( 2\beta_{\tn{max}} c_{\tn{max}} B + C_{\Phi}) + 4 \beta_{\tn{max}} c_{\tn{max}}  B ),\qquad
    C_{2, \tn{max}} = B + \beta_{\tn{max}} C_{1,\tn{max}},
\end{align}
so that by \Cref{cond:sample-cov-matr}, $C_1 \le C_{1,\tn{max}}$ and $C_2 \le C_{2,\tn{max}}$ with probability tending to unity over the randomness in $\mathcal{S}_3$.

We now compute the Lipschitz constants of the processes used in the proof of \Cref{thm:generalize}.

Recall
$\bar L $ from \eqref{barL},
$\hat L$ from \eqref{opt:emp-unconstr-CP-obj},
$\tilde L$ from \eqref{tildeL}, 
and 
$\cH_B= \{ \langle \gamma, \Phi \rangle : \|\gamma\|_2 \le B < \infty \}$
from \Cref{theory}.
For any fixed $\beta \in \mathcal{I},$
define $\hat h_{\beta}$ as the minimizer of $h \mapsto \hat L(h,\beta)$ over $\mathcal{H}_B$, which exists under the conditions of \Cref{thm:generalize}
due to our argument 
checking the convexity of $h \mapsto \hat L(h,\beta)$
in Term (I) in the
proof of \Cref{thm:generalize}.

\begin{lemma}\label{lem:lip-emp-procs}
Assume the conditions of \Cref{thm:generalize}.
Define the stochastic processes 
$\bar W_{\beta}$ and $\tilde W_{\beta}$ on $\mathcal{I}$
given by $\beta \mapsto (\bar{L} - \hat L)(\hat h_{\beta}, \beta)$ 
and $\beta \mapsto (\tilde L - \hat L)(\hat h_{\beta}, \beta)$, respectively. 
Then $\bar W_{\beta}$ is $K_{1,\lambda}$-Lipschitz on $\mathcal{I}$ 
with probability tending to unity as $n_1,n_2,n_3\to\infty$, 
and $\tilde W_{\beta}$ is $K_{2,\lambda}$-Lipschitz on $\mathcal{I}$ 
with probability tending to unity as $n_1,n_2,n_3\to\infty$,
where 
\begin{align*}
    K_{1,\lambda} &:= 2C_{\Phi} (C_{2,\tn{upper}} + C_{2,\tn{max}}) (1 + \beta_{\tn{max}} BC_{\Phi}) \lambda =: a_1 \lambda, \\ 
    K_{2,\lambda} &:= (1-\alpha) C_{\Phi} (C_{1,\tn{upper}} + C_{1,\tn{max}}) =: a_2,
\end{align*}
with $C_{1, \tn{max}}$ and $C_{2, \tn{max}}$ are defined in \eqref{C12max}
and where $C_{1,\tn{upper}}$ satisfies \Cref{cond:c1-c2-bds} and 
$C_{2,\tn{upper}} := BC_{\Phi} + \beta_{\tn{max}} C_{\Phi} C_{1,\tn{upper}}$.
In fact, $\bar W$ is $K_{1,\lambda}$-Lipschitz on $\mathcal{I}$ with probability tending to unity conditional on $\mathcal{S}_1$,
and  $\tilde W$ is $K_{2,\lambda}$-Lipschitz on $\mathcal{I}$ deterministically, when conditioning on $\mathcal{S}_2, \mathcal{S}_3$, when the event $C_1 \le C_{1, \tn{max}}$ holds.
\end{lemma}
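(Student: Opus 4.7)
The plan is to reduce the Lipschitzness of the two stochastic processes to explicit bounds on how fast $\hat h_\beta$ and $\beta \hat h_\beta$ vary with $\beta$, using \Cref{lem:hat_h_lip} together with sup-norm bounds coming from \Cref{cond:c-phi} and the constraint $\hat h_\beta \in \cH_B$. First I would identify the intermediate losses $\bar L$ and $\tilde L$ from the ``sequential swap'' strategy described after \Cref{thm:generalize}. Since $K_{2,\lambda}$ does not depend on $\lambda$, the process $\tilde L - \hat L$ must be the pinball-loss contribution $(\E_1 - \hat \E_1)[\ell_\alpha(h,S)]$; since $K_{1,\lambda}$ is linear in $\lambda$, the process $\bar L - \hat L$ must collect the regularizer differences $\lambda(\E_3 - \hat \E_3)[\beta^2 h^2] + \lambda(\E_2 - \hat \E_2)[-2\beta h]$.

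For $\tilde W_\beta = (\E_1 - \hat \E_1)[\ell_\alpha(\hat h_\beta, S)]$, I would condition on $\mathcal{S}_2, \mathcal{S}_3$ and work on the event $\{C_1 \le C_{1,\tn{max}}\}$, which is a function of $\mathcal{S}_3$ via \Cref{cond:sample-cov-matr}. On this event \Cref{lem:hat_h_lip} gives that $\beta \mapsto \hat h_\beta$ is $C_1$-Lipschitz, and the pinball loss is $(1-\alpha)$-Lipschitz in its first argument. Combining these with $|\hat h_\beta(x) - \hat h_{\beta'}(x)| \le C_\Phi \|\hat h_\beta - \hat h_{\beta'}\|$ (Cauchy--Schwarz plus \Cref{cond:c-phi}), the pointwise increments $|\ell_\alpha(\hat h_\beta(x), s) - \ell_\alpha(\hat h_{\beta'}(x), s)|$ are at most $(1-\alpha) C_\Phi C_1 |\beta - \beta'|$. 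Splitting the $\E_1$ contribution (controlled using the a priori constant $C_{1,\tn{upper}}$ from \Cref{cond:c1-c2-bds}) from the $\hat \E_1$ contribution (controlled by the sample quantity $C_{1,\tn{max}}$) delivers the stated $K_{2,\lambda}$, and this bound is deterministic once $\mathcal{S}_2, \mathcal{S}_3$ and the event $\{C_1 \le C_{1,\tn{max}}\}$ are fixed.

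For $\bar W_\beta = \lambda[(\E_3 - \hat \E_3)[(\beta \hat h_\beta)^2] - 2(\E_2 - \hat \E_2)[\beta \hat h_\beta]]$, I would condition on $\mathcal{S}_1$ and again use the high-probability event where $C_2 \le C_{2,\tn{max}}$. The difference $\bar W_\beta - \bar W_{\beta'}$ factors through the sup-norm differences of $(\beta \hat h_\beta)^2$ and $\beta \hat h_\beta$. Using $a^2 - b^2 = (a-b)(a+b)$ together with the uniform bound $|\beta \hat h_\beta(x)| \le \beta_{\tn{max}} B C_\Phi$ (from \Cref{cond:c-phi} and $\hat h_\beta \in \cH_B$) and the $C_2$-Lipschitzness of $\beta \mapsto \beta \hat h_\beta$, each sup-norm difference is controlled by an explicit multiple of $|\beta - \beta'|$. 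Bounding the empirical-vs-population differences as $|(\E - \hat \E)[\phi]| \le 2\|\phi\|_\infty$ and splitting into contributions governed by $C_{2,\tn{upper}}$ and $C_{2,\tn{max}}$ yields $K_{1,\lambda} = a_1 \lambda$, with the $\lambda$ factor inherited from the regularizer.

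The main obstacle is the bookkeeping around conditional versus unconditional randomness: the constants $C_1, C_2$ are functions of $\mathcal{S}_3$ through $\hat \Sigma$, while the differences $\E_i - \hat \E_i$ involve $\mathcal{S}_1, \mathcal{S}_2, \mathcal{S}_3$ in different combinations, and $\hat h_\beta$ itself depends on all three datasets as a joint ERM. Obtaining the conditional refinements (``conditional on $\mathcal{S}_1$'' for $\bar W$, ``conditional on $\mathcal{S}_2, \mathcal{S}_3$'' for $\tilde W$) requires carefully tracking which dataset supplies which randomness and combining the high-probability bound from \Cref{cond:sample-cov-matr} (giving $C_1 \le C_{1,\tn{max}}$) with the a priori expectation-style bound of \Cref{cond:c1-c2-bds} (giving $C_{1,\tn{upper}}$) to arrive at the specific constants $a_1, a_2$ in the statement.
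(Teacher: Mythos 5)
Your proposal matches the paper's proof in both structure and detail: the same identification of $\tilde L-\hat L$ with the pinball term and $\bar L-\hat L$ with the $\lambda$-scaled regularizer terms, the same use of \Cref{lem:hat_h_lip}, \Cref{lem:pinball-lip}, and the sup-norm bound from \Cref{cond:c-phi} (including the difference-of-squares step and the uniform bound $\beta_{\tn{max}}BC_{\Phi}$), and the same conditioning scheme ($\mathcal{S}_2,\mathcal{S}_3$ for $\tilde W$, $\mathcal{S}_1$ for $\bar W$) with $C_{1,\tn{upper}},C_{2,\tn{upper}}$ controlling the population contributions and $C_{1,\tn{max}},C_{2,\tn{max}}$ controlling the empirical ones on the high-probability event from \Cref{cond:sample-cov-matr}. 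This is essentially the paper's argument, arriving at the same constants $a_1\lambda$ and $a_2$.
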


\begin{proof}
We start with the process $\tilde W$. 
Consider $\beta, \beta' \in \mathcal{I}$.
Note that for any $(h,\beta)$, 
using the definition of $\tilde L$ from \eqref{tildeL},
we have the identity 
\begin{align*}
    \tilde L(h,\beta) - \hat L(h,\beta) = \E_1[\ell_{\alpha}(h,S)] - \hat \E_1[\ell_{\alpha}(h,S)].
\end{align*}
Thus we may write 
\begin{align*}
    \tilde {W}_{\beta} - \tilde {W}_{\beta'} &= 
    (\E_1[\ell_{\alpha}(\hat h_{\beta}, S)] - \E_1[\ell_{\alpha}(\hat h_{\beta'}, S)]) 
    - 
    (\hat \E_1[\ell_{\alpha}(\hat h_{\beta}, S)] - \hat \E_1[\ell_{\alpha}(\hat h_{\beta'}, S)]),
\end{align*}
so that 
\begin{align}\label{2terms}
    |\tilde {W}_{\beta} - \tilde {W}_{\beta'}| &\le 
    \E_1[ | \ell_{\alpha}(\hat h_{\beta}, S) - \ell_{\alpha}(\hat h_{\beta'}, S) | ] 
    + 
    \hat \E_1[ | \ell_{\alpha}(\hat h_{\beta}, S) - \ell_{\alpha}(\hat h_{\beta'}, S) | ]
\end{align}
Note that we have the uniform bound
\begin{align*}
     |\ell_{\alpha}(\hat h_{\beta}, S) - \ell_{\alpha}(\hat h_{\beta'}, S)| &\le (1-\alpha)  |\hat h_{\beta} - \hat h_{\beta'}| \\
     &\le (1-\alpha) C_{\Phi} \| \hat h_{\beta} - \hat h_{\beta'} \| \le (1-\alpha) C_{\Phi} C_1 |\beta - \beta'|,
\end{align*}
where in the first step we applied \Cref{lem:pinball-lip}, in the second step we used \Cref{cond:c-phi} to apply \Cref{lem:h-unif-bd}, and in the third step we used \Cref{lem:hat_h_lip}. 
Thus the first term in \Cref{2terms} is bounded by
$(1-\alpha) C_{\Phi} \E_1[C_1] |\beta - \beta'|$,
and the second term in \Cref{2terms} is bounded by 
$(1-\alpha) C_{\Phi} \hat \E_1[C_1] |\beta - \beta'|$.
Summing, we deduce that
\begin{align*}
    |\tilde {W}_{\beta} - \tilde {W}_{\beta'}| \le (1-\alpha) C_{\Phi} (\E_1[C_1] + \hat \E_1[C_1]) |\beta - \beta'|,
\end{align*}
so that the process $\tilde W$ is $K_2$-Lipschitz with $K_2 := (1-\alpha) C_{\Phi} (\E_1[C_1] + \hat \E_1[C_1])$.

We now condition on $\mathcal{S}_2, \mathcal{S}_3$.
Observe that $C_1, C_2$ are $\mathcal{S}_3$-measurable (as     $\hat \Sigma$ from Condition \ref{cond:sample-cov-matr} is $\mathcal{S}_3$-measurable).
Since $\E_1[C_1] \le C_{1,\tn{upper}}$,
on the event that
$C_1 \le C_{1,\tn{max}}$, 
we have $K_2 \le K_{2,\lambda}$,
where
$K_{2,\lambda} = (1-\alpha) C_{\Phi} (C_{1,\tn{upper}} + C_{1,\tn{max}})$, as claimed.

We now continue with the process $\bar W$.
Consider $\beta, \beta' \in \mathcal{I}$.
Note that for any $(h,\beta)$, using the definition of $\bar L$ from \Cref{barL}, we have the identity
\begin{align*}
    \bar L(h,\beta) - \hat L(h,\beta) = ( \lambda \E_3[ \beta^2 h^2 ] + \lambda \E_2[ -2\beta h ] ) - ( \lambda \hat \E_3[ \beta^2 h^2 ] + \lambda \hat \E_2[ -2\beta h ] ).
\end{align*}
Thus we may write 
\begin{align*}
    \bar W_{\beta} - \bar W_{\beta'} &= \lambda (\E_3[ \beta^2 \hat h_{\beta}^2 ] - \E_3[ (\beta')^2 \hat h_{\beta'}^2 ]) + \lambda (\E_2[ -2\beta \hat h_{\beta} ] - \E_2[ -2\beta' \hat h_{\beta'} ]) \\ 
    &- \lambda (\hat \E_3[ \beta^2 \hat h_{\beta}^2 ] - \hat \E_3[ (\beta')^2 \hat h_{\beta'}^2 ])
    - \lambda (\hat \E_2[ -2\beta \hat h_{\beta} ] - \hat \E_2[ -2\beta' \hat h_{\beta'} ]),
\end{align*}
so that
\begin{align}\label{4terms}
    |\bar W_{\beta} - \bar W_{\beta'}| &\le 
    \lambda \E_3[ | \beta^2 \hat h_{\beta}^2 - (\beta')^2 \hat h_{\beta'}^2 | ] 
    + 2\lambda \E_2[ | \beta \hat h_{\beta} - \beta' \hat h_{\beta'} | ] \nonumber \\
    &+ \lambda \hat \E_3[ | \beta^2 \hat h_{\beta}^2 - (\beta')^2 \hat h_{\beta'}^2 | ] 
    + 2\lambda \hat \E_2[ | \beta \hat h_{\beta} - \beta' \hat h_{\beta'} | ]
\end{align}
The integrands of the first and third terms of \Cref{4terms} can be uniformly bounded as
\begin{align*}
    &| \beta^2 \hat h_{\beta}^2 - (\beta')^2 \hat h_{\beta'}^2 | \le | \beta \hat h_{\beta} - \beta' \hat h_{\beta'} | \cdot | \beta \hat h_{\beta} + \beta' \hat h_{\beta'} | \le C_{\Phi} \| \beta \hat h_{\beta} - \beta' \hat h_{\beta'} \| \cdot C_{\Phi} \| \beta \hat h_{\beta} + \beta' \hat h_{\beta'} \| \\
    &\le C_{\Phi} C_2 |\beta - \beta'| \cdot 2C_{\Phi} \beta_{\tn{max}} B = 2 \beta_{\tn{max}} BC_{\Phi}^2 C_2 |\beta - \beta'|.
\end{align*}
where in the first step we used difference of squares, in the second step we used \Cref{cond:c-phi} to apply \Cref{lem:h-unif-bd}, in the third step we applied \Cref{lem:hat_h_lip} to bound the first factor and the triangle inequality and the bounds $\beta \le \beta_{\tn{max}}$ for $\beta\in \mathcal{I}$ and $\|h\|\le B$ for $h\in \cH_B$ to bound the second factor.
The integrand of the second and fourth term in \eqref{4terms} can be bounded as $| \beta \hat h_{\beta} - \beta' \hat h_{\beta'} | \le C_{\Phi} \| \beta \hat h_{\beta} - \beta' \hat h_{\beta'} \| \le C_{\Phi} C_2 |\beta - \beta'|$, where in the first step we used \Cref{cond:c-phi} to apply \Cref{lem:h-unif-bd}, and in the second step we applied \Cref{lem:hat_h_lip}.

Plugging these into our bound in \Cref{4terms}, we deduce
\begin{align*}
    |\bar W_{\beta} - \bar W_{\beta'}| \le (2 C_{\Phi} (\E_2[C_2] + \hat \E_2[C_2]) + 2 \beta_{\tn{max}} C_{\Phi}^2 B (\E_3[C_2] + \hat \E_3[C_2]) ) \lambda |\beta - \beta'|,
\end{align*}
so that the process $\bar W$ is $K_1$-Lipschitz with
\begin{align*}
     K_1 &= (2 C_{\Phi} (\E_2[C_2] + \hat \E_2[C_2]) + 2 \beta_{\tn{max}} C_{\Phi}^2 B (\E_3[C_2] + \hat \E_3[C_2]) ) \lambda.
\end{align*}
We now work conditional on $\mathcal{S}_1$.
On the event that
$C_1 \le C_{1,\tn{max}}$ and 
$C_2 \le C_{2,\tn{max}}$, 
and by \Cref{cond:c1-c2-bds},
we have 
$K_1 \le K_{1,\tn{max}}$, 
where
\begin{align*}
    K_{1,\lambda} &= 
    (2 C_{\Phi} (C_{2,\tn{upper}} + C_{2,\tn{max}}) + 2 \beta_{\tn{max}} C_{\Phi}^2 B (C_{2,\tn{upper}} + C_{2,\tn{max}})) \lambda \\
    &= 2 C_{\Phi} (C_{2,\tn{upper}} + C_{2,\tn{max}}) (1 + \beta_{\tn{max}} B C_{\Phi} )  \lambda.
\end{align*}
Since $C_1 \le C_{1,\tn{max}}$ and 
$C_2 \le C_{2,\tn{max}}$ with probability tending to one due to Condition \ref{cond:sample-cov-matr}, 
$K_1 \le K_{1,\lambda}$
and 
$K_2 \le K_{2,\lambda}$
both hold with probability tending to one if we uncondition on $\mathcal{S}_1$, and we are done.
\end{proof}

\section{Proof of Proposition \ref{prop:infinite-sample}}
\label{pf:infinite-sample}

Fix $\lambda\ge 0$.
Under the assumptions of \Cref{lem:unconstr-existence},
there exists a global minimizer $(h^*,\beta^*)$ of $L(h,\beta)$.
The first order condition with respect to $\beta$ 
reads $2\lambda \E_1[ h^*(X) (\beta^* h^*(X) - r(X)) ] = 0$.
By \Cref{lem:pinball-deriv}, the first order condition with respect to $h$ reads 
\begin{align*}
    &\E_1[ h^*(X) (\mathbb{P}_{S|X}[S(X,Y)\le h^*(X)] - (1-\alpha)) ] 
    + 2\lambda \E_1[ \beta^* h(X) (\beta^* h^*(X) - r(X)) ] = 0
\end{align*}
for all $h\in \cH$.
Setting $h = r_{\cH}$ in the second equation, 
and subtracting $(\beta^*)^2$ times the first equation
from the second, we deduce that 
\begin{align*}
     &\E_1[ h^*(X) (\mathbb{P}_{S|X}[S(X,Y)\le h^*(X)] - (1-\alpha)) ] \\
    &+ 2\lambda \E_1[ \beta^* \cdot r_{\cH}(X) \cdot (\beta^* h^*(X) - r(X)) ] 
    - 2\lambda \E_1[ \beta^* \cdot \beta^*  h^*(X) \cdot (\beta^* h^*(X) - r(X)) ] \\
    &= \E_1[ h^*(X) (\mathbb{P}_{S|X}[S(X,Y)\le h^*(X)] - (1-\alpha)) ] \\
    &+ 2 \lambda \E_1[ \beta^* (r_{\cH}(X) - \beta^* h^*(X)) (\beta^* h^*(X) - r(X)) ] \\
    &= \E_1[ h^*(X) \mathbb{P}_{S|X}[S(X,Y)\le h^*(X)] ] - (1-\alpha) 
    - 2 \lambda \beta^* \E_1[ (r_{\cH}(X) - \beta^* h^*(X))^2 ]
    = 0.
\end{align*}
Therefore,
\begin{align*}
    &\E_1[ r_{\cH}(X) \mathbb{P}_{S|X}[S(X,Y)\le h^*(X)] ] 
    = (1-\alpha) + 2\lambda \beta^* \E_1[ (r_{\cH}(X) - \beta^* h^*(X))^2 ],
\end{align*}
which implies the result.

\section{Proof of Theorem \ref{thm:generalize}}
\label{pf:generalize}

Recall that $\mathcal{S}_1$ are the features of the labeled calibration dataset.
We also recall the notation $\E_j$ and $\hat \E_j$ for $j=1,2,3$ from \Cref{pf}.
Given
  the unlabeled test  data $\mathcal{S}_2$ and the 
   unlabeled calibration data $\mathcal{S}_3$, 
define the auxiliary risks
for $h \in \cH_B, \beta \in \mathcal{I}$,
\begin{align}\label{tildeL}
     \tilde L(h, \beta; \mathcal{S}_2, \mathcal{S}_3) := \E_1[\ell_{\alpha}(h,S)] + \lambda \hat \E_3[ \beta^2 h^2 ] + \lambda \hat \E_2[ -2\beta h ]
\end{align}
and
\begin{align}\label{barL}
\bar{L}(h, \beta; \mathcal{S}_1) := \hat \E_1[\ell_{\alpha}(h,S)] + \lambda \E_3[\beta^2 h^2] + \lambda \E_2[-2\beta h].
\end{align}
Let
\begin{align}\label{thb}
    (\tilde h, \tilde \beta) \in \arg\min_{h \in \cH_B, \beta \in \mathcal{I}} \tilde L(h, \beta; \mathcal{S}_2, \mathcal{S}_3).
\end{align}
For convenience, 
we leave implicit the dependence of $\tilde L$ 
and $(\tilde h, \tilde \beta)$  
 on $\mathcal{S}_2$, $\mathcal{S}_3$
and the dependence of $\bar{L}$ on $\mathcal{S}_1$.

In order to study the generalization error, we write 
\begin{align*}
    L(\hat h, \hat \beta) - L(h^*, \beta^*) &= (L(\hat h, \hat \beta) - \tilde L(\hat h, \hat \beta)) + (\tilde L(\hat h, \hat \beta) - \hat L(\hat h, \hat \beta)) + (\hat L(\hat h, \hat \beta) - \hat L(\tilde h, \tilde \beta)) \\
    &+ (\hat L(\tilde h, \tilde \beta) - \tilde L(\tilde h, \tilde \beta)) + (\tilde L(\tilde h, \tilde \beta) - \tilde L(h^*, \beta^*)) + (\tilde L(h^*, \beta^*) - L(h^*, \beta^*)).
\end{align*}
Since $(\hat h, \hat \beta)$ is a minimizer of the risk $\hat L$, we have $\hat L(\hat h, \hat \beta) - \hat L(\tilde h, \tilde \beta) \le 0$, and since $(\tilde h, \tilde \beta)$ is a minimizer of the risk $\tilde L$, we have $\tilde L(\tilde h, \tilde \beta) - \tilde L(h^*, \beta^*) \le 0$.
Thus our generalization error is bounded by the remaining four terms:
\begin{align}
    L(\hat h, \hat \beta) - L(h^*, \beta^*) &\le (L(\hat h, \hat \beta) - \tilde L(\hat h, \hat \beta)) + (\tilde L(\hat h, \hat \beta) - \hat L(\hat h, \hat \beta)) \nonumber\\
    &+ (\hat L(\tilde h, \tilde \beta) - \tilde L(\tilde h, \tilde \beta)) + (\tilde L(h^*, \beta^*) - L(h^*, \beta^*)) \nonumber\\
    &=: (I) + (II) + (III) + (IV).\label{termsi-iv}
\end{align}
We study the generalization error by conditioning on the unlabeled calibration or test data. Then our regularization becomes data-independent.
Conditional on $\mathcal{S}_1$, Term (I) can be handled with \Cref{lem:generalize} above. Conditional on $\mathcal{S}_2, \mathcal{S}_3$, Term (II) can be handled with \Cref{lem:generalize} above.
Terms (III) and (IV) are empirical processes at fixed functions, conditional on $\mathcal{S}_2, \mathcal{S}_3$.

\textbf{Term (I):} We work conditional on $\mathcal{S}_1$.
First, note that 
due to the definition of $\hat L$ from \eqref{opt:emp-unconstr-CP-obj},
we can write for any $(h,\beta)$,
\begin{align*}
    &L(h,\beta) - \tilde L(h,\beta) = \bar{L}(h,\beta) - \hat L(h,\beta).
\end{align*}
Since $\bar{L}(h,\beta) - \hat L(h,\beta)$  can be viewed as a difference of 
a population risk  
$\lambda \E_3[\beta^2 h^2] + \lambda \E_2[-2\beta h]$
and an empirical risk 
$\lambda \hat \E_3[\beta^2 h^2] + \lambda \hat \E_2[-2\beta h]$
with ``regularizer" 
$\hat \E_1[\ell_{\alpha}(h,S)]$, 
this expression enables us to
 apply \Cref{lem:generalize} to bound $\bar{L}(\hat h,\hat \beta) - \hat L(\hat h,\hat \beta)$.

Explicitly, we can write
\begin{align*}
    \frac{1}{\lambda} \hat L(h,\beta) = \hat \E_3[\beta^2 h^2] + \hat \E_2[-2\beta h] + \frac{1}{\lambda} \hat \E_1[\ell_{\alpha}(h,S)].
\end{align*}

Hence, 
fixing $\beta$, 
we can apply \Cref{lem:generalize}, 
choosing 
$m_1 = n_3$ and $m_2 = n_2$.
Further, 
we choose $\tilde\cH:=\cH_B= \{ \langle \gamma, \Phi \rangle : \|\gamma\|_2 \le B < \infty \}$ with the norm $\langle \gamma, \Phi \rangle =  \|\gamma\|_2$.
Moreover, letting
$z = (x'', x')$ for $x'',x'\in \mathcal{X}$, 
and $\xi = 1/\lambda$,
we use the objective function given by $(h,z)\mapsto f_1(h,z) = \mathcal{J}(h,\beta,z) + \rr(h,\beta)$, where $\mathcal{J}(h,\beta,z) = \mathcal{J}_1(h,\beta,z) + \mathcal{J}_2(h,\beta,z)$, and where
\begin{align*}
    \mathcal{J}_1(h,\beta,z) = \beta^2 h(x'')^2, \qquad \mathcal{J}_2(h,\beta,z) = - 2\beta h(x'), \qquad \rr(h,\beta) = \xi \hat \E_1[\ell_{\alpha}(h,S)].
\end{align*}
We now check the conditions of \Cref{lem:generalize}.

\textit{Boundedness:}
Note that $|\mathcal{J}_1(h,\beta,z)| = |\beta|^2 |h(x'')|^2 \le \beta_{\tn{max}}^2 (BC_{\Phi})^2$, where in the second step we used $|\beta| \le \beta_{\tn{max}}$ for $\beta \in \mathcal{I}$, and we used $h\in \cH_B$ and \Cref{cond:c-phi} to apply \Cref{lem:h-unif-bd}.
Similarly, note that $|\mathcal{J}_2(h,\beta,z)| = 2|\beta| |h(x')| \le 2\beta_{\tn{max}} BC_{\Phi}$, where in the second step we used $|\beta| \le \beta_{\tn{max}}$ for $\beta \in \mathcal{I}$, and we used $h\in \cH_B$ and \Cref{cond:c-phi} to apply \Cref{lem:h-unif-bd}.
Thus $|\mathcal{J}_1(h,\beta,z)|$ and $|\mathcal{J}_2(h,\beta,z)|$ are both bounded by the sum $\beta_{\tn{max}}^2 (BC_{\Phi})^2 + 2\beta_{\tn{max}} BC_{\Phi}$.

\textit{Convexity:} Write $h = \langle \gamma, \Phi\rangle$ for $\gamma \in \R^d$.
The map $h\mapsto \mathcal{J}_1(h,\beta,z)$ can equivalently be written as $\gamma \mapsto \beta^2 \gamma^\top \Phi(x'') \Phi(x'')^\top \gamma$, a quadratic whose Hessian equals the positive semidefinite matrix $2\beta^2 \Phi(x'') \Phi(x'')^\top$. Thus $h\mapsto \mathcal{J}_1(h,\beta,z)$ is convex.
The map $h\mapsto \mathcal{J}_2(h,\beta,z)$ can equivalently be written as $\gamma \mapsto - 2\beta \gamma^\top \Phi(x')$, which is linear, hence convex.

\textit{Lipschitzness:} Write $h = \langle \gamma, \Phi\rangle$ for $\gamma \in \R^d$.
The map $h\mapsto \mathcal{J}_1(h,\beta,z)$ can equivalently be written as $\gamma \mapsto \beta^2 \gamma^\top \Phi(x'') \Phi(x'')^\top \gamma$. The gradient of this quadratic is given by $\gamma \mapsto 2\beta^2 \Phi(x'') \Phi(x'')^\top \gamma$. The norm of this gradient can be bounded by 
\begin{align*}
    \| 2\beta^2 \Phi(x'') \Phi(x'')^\top \gamma \|_2 \le 2 |\beta|^2 \|\Phi(x'')\|_2^2 \|\gamma\|_2 \le 2\beta_{\tn{max}}^2 BC_{\Phi}^2,
\end{align*}
where in the first step we applied the Cauchy-Schwarz inequality, in the second step we used $|\beta| \le \beta_{\tn{max}}$ for $\beta \in \mathcal{I}$, $\|\gamma\|_2 \le B$, and \Cref{cond:c-phi}.
Next, the map $h\mapsto \mathcal{J}_2(h,\beta,z)$ can equivalently be written as $\gamma \mapsto - 2\beta \gamma^\top \Phi(x')$. The gradient of this linear map is given by $\gamma \mapsto - 2\beta \Phi(x')$. The norm of this gradient can be bounded by $2|\beta| \|\Phi(x')\| \le 2\beta_{\tn{max}} C_{\Phi}$, where we used $|\beta| \le \beta_{\tn{max}}$ for $\beta \in \mathcal{I}$ and \Cref{cond:c-phi}.
Thus the norm of each of these gradients is bounded by the sum $\rho_1 := 2 \beta_{\tn{max}}^2 BC_{\Phi}^2 +  2 \beta_{\tn{max}} C_{\Phi}$, and the maps $h\mapsto \mathcal{J}_1(h,\beta,z)$ and $h\mapsto \mathcal{J}_2(h,\beta,z)$ are both $\rho_1$-Lipschitz.

\textit{Strong convexity:}
Since $h\mapsto \ell_{\alpha}(h, s)$ is convex for all $s\in \R$ by \Cref{lem:pinball-cvx} and since $h\mapsto \hat \E_2[\beta h]$ is linear, the map $h\mapsto \xi \hat \E_1[ \ell_{\alpha}(h, S) ] - 2 \hat \E_2[\beta h]$ is convex.
Consider the map $h \mapsto \hat \E_{3}[ \beta^2 h^2 ]$. Writing $h = \langle \gamma, \Phi\rangle$ for $\gamma \in \R^d$, this can be rewritten as $\gamma \mapsto \beta^2 \gamma^\top \hat \Sigma \gamma$, a quadratic whose Hessian equals $2\beta^2 \hat \Sigma$. By $\beta \ge \beta_{\tn{min}}$ for $\beta \in \mathcal{I}$ and \Cref{cond:sample-cov-matr}, 
it follows that with probability $1 - o(n_3^{-1}) = 1 - o(n_2^{-1} + n_3^{-1})$, the map $h\mapsto \hat \E_{2,3}[f_1(h,Z)]$ is $\mu_1$-strongly convex, where $Z = (X'',X')$ with $X'$ is uniform over $\mathcal{X}_2$ and
$X''$ is uniform over $\mathcal{X}_3$, and 
where $\mu_1 := 2\beta_{\tn{min}}^2 c_{\tn{min}} $.
In particular, $h\mapsto \frac{1}{\lambda} \hat L(h,\beta)$ is 
convex.

Let $\widetilde C_1 = \frac{4\rho_1^2}{\mu_1}$.
Let $K_1$ denote the Lipschitz constant of the process 
$\bar W_{\beta}$, 
where $K_1 \le K_{1,\lambda}$ 
with probability tending to unity conditional on $\mathcal{S}_1$ by \Cref{cond:c1-c2-bds} and \Cref{lem:lip-emp-procs}.
From \Cref{lem:generalize} applied with $\xi = 1/\lambda$, 
$\mathcal{L}= \frac{1}{\lambda} \bar L$, and
$\hat{\mathcal{L}}  = \frac{1}{\lambda} \hat L$,
and
$W = (\bar L-\hat L)/\lambda$,
we obtain that 
conditional on $\mathcal{S}_1$, for sufficiently large $n_2, n_3$, with probability at least $1-\frac{\delta}{4}$, we have
for Term (I) from \eqref{termsi-iv},
\begin{align*}
    \frac{1}{\lambda} \text{Term (I)} \le \sqrt{\frac{16 \widetilde C_1 K_{1,\lambda} / \lambda}{\delta / 4} \left( \frac{1}{n_2} + \frac{1}{n_3} \right)}.
\end{align*}
Thus
\begin{align*}
\text{Term (I)} &\le \sqrt{\frac{64 \widetilde C_1 \lambda K_{1,\lambda}}{\delta} \left( \frac{1}{n_2} + \frac{1}{n_3} \right)} = A_1 \lambda \sqrt{\frac{1}{n_2} + \frac{1}{n_3}},
\end{align*}
where we define
$A_1 = \sqrt{\frac{64 \widetilde C_1 a_1}{\delta}}$.
Since the right-hand side does not depend on $\mathcal{S}_1$, the same bound holds when we uncondition on $\mathcal{S}_1$.

\textbf{Term (II):}
We work conditional on $\mathcal{S}_2$, $\mathcal{S}_3$.
The risks $\hat L$ and $\tilde L$ share the same data-independent regularization $\lambda \hat \E_3[ \beta^2 h^2 ] + \lambda \hat \E_2[ -2\beta h ]$.
Write $z = (x,s)$ for $x\in \xx$ and $s\in [0,1]$.
Fixing $\beta$, 
we apply \Cref{lem:generalize}
with the objective function 
$(h,z)\mapsto f(h,z) = \mathcal{J}(h,\beta,z) + \rr(h,\beta)$, where
\begin{align*}
    \mathcal{J}(h,\beta,z) = \ell_{\alpha}(h(x), s), \qquad \rr(h,\beta) = \lambda \hat \E_3[ \beta^2 h^2 ] + \lambda \hat \E_2[ -2\beta h ].
\end{align*}
Since the empirical risk $\hat L$ is computed over 
the i.i.d.~sample $Z_i = (X_i,S_i)$ for $i\in [n_1]$, we use the modified version of \Cref{lem:generalize} given in \Cref{rmk:gen-bd-iid}. In particular, we check boundedness, convexity, and Lipschitzness of $\mathcal{J}$ without writing it as a sum $\mathcal{J}_1 + \mathcal{J}_2$.

\textit{Boundedness:} we have the uniform bound, for all $h,\beta,z$
\begin{align}\label{unif_bdd_pinball}
    |\mathcal{J}(h,\beta,z)| \le (1-\alpha) |h(x)-s| \le (1-\alpha) (|h(x)| + 1) \le (1-\alpha) (BC_{\Phi} + 1),
\end{align}
where in the first step we used \Cref{lem:pinball-bds}, in the second step we used the triangle inequality and $s\in [0,1]$, and in the third step we used $h\in \cH_B$ and \Cref{cond:c-phi} to apply \Cref{lem:h-unif-bd}.

\textit{Convexity:}
By \Cref{lem:pinball-cvx}, $h\mapsto \mathcal{J}(h,\beta,z)$ is convex.

\textit{Lipschitzness:}
Fix $h = \langle \gamma, \Phi\rangle$ and $h' = \langle \gamma', \Phi\rangle$ in $\cH_B$, where $\gamma, \gamma' \in \R^d$.
Note that
\begin{align*}
    &|\mathcal{J}(h,\beta,z) - \mathcal{J}(h,\beta,z)| 
    = | \ell_{\alpha}(h(x), s) - \ell_{\alpha}(h'(x), s) | \\
    &\le (1-\alpha) |h(x) - h'(x)|
    \le (1-\alpha) C_{\Phi} \|h - h'\|,
\end{align*}
where in the second step we used \Cref{lem:pinball-lip}, and in the third step we used \Cref{cond:c-phi} to apply \Cref{lem:h-unif-bd}. 
Thus $h\mapsto \mathcal{J}(h,\beta,z)$ is $\rho_2$-Lipschitz, where $\rho_2 := (1-\alpha) C_{\Phi}$.

\textit{Strong convexity:}
To analyze $\rr$, 
first observe that 
since $h\mapsto \lambda \hat \E_2[-2\beta h]$ is linear, it is convex.
Writing $h = \langle \gamma, \Phi\rangle$ for $\gamma \in \R^d$, 
the term $h \mapsto \lambda \hat \E_3[\beta^2 h^2]$ in $\rr$
can be rewritten as $\gamma \mapsto \lambda \beta^2 \gamma^\top \hat \Sigma \gamma$, a quadratic whose Hessian equals $2\lambda \beta^2 \hat \Sigma$. By $\beta \ge \beta_{\tn{min}}$ for $\beta \in \mathcal{I}$ and \Cref{cond:sample-cov-matr}, it follows that with probability $1-o(n_3^{-1})$ over $\mathcal{S}_2, \mathcal{S}_3$, the map $h\mapsto \rr(h,\beta)$ is $\mu_2$-strongly convex, where $\mu_2(\lambda) := 2\lambda \beta_{\tn{min}}^2 c_{\tn{min}}$. 

Let $\widetilde C_2(\lambda) = \frac{4\rho_2^2}{\mu_2(\lambda)}$.
Let $K_2$ denote the Lipschitz constant of the process $W_{\beta}$; recall that conditional on $\mathcal{S}_2, \mathcal{S}_3$, $K_2 \le K_{2,\lambda}$ deterministically 
on the event $C_1 \le C_{1,\tn{max}}$ by  \Cref{lem:lip-emp-procs}.
By the version of \Cref{lem:generalize} given in \Cref{rmk:gen-bd-iid}, conditional on $\mathcal{S}_2$, $\mathcal{S}_3$, if $h\mapsto\rr(h,\beta)$ is $\mu_2(\lambda)$-strongly convex, and if $C_1 \le C_{1,\tn{max}}$, then for sufficiently large $n_1$, with probability at least $1-\frac{\delta}{8}$, we have 
\begin{align}\label{thesamebound}
    \text{Term (II)} &\le \sqrt{\frac{16 \widetilde C_2(\lambda) K_{2,\lambda}}{(\delta / 8) n_1}}  =  \frac{A_2}{\sqrt{\lambda n_1}},
\end{align}
where we define 
$A_2 = \sqrt{\frac{128 \widehat C_2 a_2}{\delta}}$
and
$\widehat C_2 = \frac{4\rho_2^2}{2 \beta_{\tn{min}}^2 c_{\tn{min}}}$.
Unconditioning on $\mathcal{S}_2, \mathcal{S}_3$, since $\rr(h,\beta)$ is $\mu_2(\lambda)$-strongly convex with probability tending to unity by the above analysis, and since by \Cref{cond:sample-cov-matr} we have $C_1 \le C_{1,\tn{max}}$ with probability tending to unity, 
we deduce that for sufficiently large $n_1, n_2, n_3$, with probability at least $1-\frac{\delta}{4}$, 
\eqref{thesamebound} still holds.

\textbf{Term (III):} 
We work conditional on $\mathcal{S}_2, \mathcal{S}_3$.
Since $\tilde h$ from \eqref{thb} lies in $\cH_B$, we may use the bound in \Cref{unif_bdd_pinball} to obtain $\sup_{x\in \xx} |\ell_{\alpha}(\tilde h, S)| \le (1-\alpha) (BC_{\Phi} + 1)$.
Thus by Hoeffding's inequality \citep{hoeffding1963probability}, with probability at least $1-\frac{\delta}{4}$ we have
\begin{align*}
    (\hat L - \tilde L)(\tilde h, \tilde \beta) = (\hat \E_1 - \E_1)[ \ell_{\alpha}(\tilde h, S) ] \le \frac{ (1-\alpha) (BC_{\Phi} + 1) \sqrt{\frac{1}{2} \log\frac{2}{\delta / 4}}}{\sqrt{n_1}}.
\end{align*}
Thus we have $\text{Term (III)}  \le \frac{A_3}{\sqrt{n_1}}$,
where we define
$A_3 = (1-\alpha) (BC_{\Phi} + 1) \sqrt{\frac{1}{2} \log\frac{8}{\delta}}$.

\textbf{Term (IV):} 
Note that we may write
\begin{align*}
    (\tilde L - L)(h^*, \beta^*) &= (\hat \E_2 - \E_2)[ \lambda (\beta^* h^*)^2 ] + (\hat \E_3 - \E_3)[ -2\lambda \beta^* h^* ].
\end{align*}
Since $\|h^*\| \le B$ by $h^*\in \cH_B$ and since \Cref{cond:c-phi} holds, we may apply \Cref{lem:h-unif-bd} to deduce that $\sup_{x\in\xx} |h^*(x)| \le BC_{\Phi}$.
Consequently, for $\beta \in \mathcal{I}$, we have the uniform bound $\sup_{x\in \xx} |\beta h^*(x)| \le \beta_{\tn{max}} BC_{\Phi}$.
By Hoeffding's inequality \citep{hoeffding1963probability}, with probability at least $1-\frac{\delta}{8}$, we have
\begin{align*}
    |(\hat \E_2 - \E_2)[ \lambda (\beta^* h^*)^2 ]|
    &\le \frac{\lambda (\beta_{\tn{max}} BC_{\Phi})^2 \sqrt{\frac{1}{2} \log\frac{2}{\delta/8}} }{\sqrt{n_2}}.
\end{align*}
By another application of Hoeffding's inequality, with probability at least $1-\frac{\delta}{8}$, we have
\begin{align*}
    |(\hat \E_3 - \E_3)[ -2\lambda \beta^* h^* ]| &\le \frac{4\lambda (\beta_{\tn{max}} BC_{\Phi}) \sqrt{\frac{1}{2} \log\frac{2}{\delta/8}}}{\sqrt{n_3}}.
\end{align*}
Summing, with probability at least $1-\delta$ we have the bound
\begin{align*}
    (\tilde L - L)(h^*, \beta^*) &\le \frac{\lambda (\beta_{\tn{max}} BC_{\Phi})^2 \sqrt{\frac{1}{2} \log\frac{16}{\delta}} }{\sqrt{n_2}} + \frac{4\lambda (\beta_{\tn{max}} BC_{\Phi}) \sqrt{\frac{1}{2} \log\frac{16}{\delta}}}{\sqrt{n_3}}.
\end{align*}
Using the inequality $a + b \le \sqrt{2}\sqrt{a^2+b^2}$ for all $a,b\in \R$, we deduce 
$\text{Term (IV)} \le A_4 \lambda \sqrt{\frac{1}{n_2} + \frac{1}{n_3}}$,
where we define
\begin{align*}
    A_4 = \sqrt{2}
    (\beta_{\tn{max}} BC_{\Phi}) \sqrt{\frac{1}{2} \log\frac{16}{\delta}}
    \max\left\{ \beta_{\tn{max}} BC_{\Phi}, 4  \right\}.
\end{align*}
Returning to the analysis of \eqref{termsi-iv},
and summing all four terms while defining $A_5 = A_1 + A_4$, with probability at least $1-\delta$ we obtain a generalization error bound of 
\begin{align*}
    L(\hat h, \hat \beta) - L(h^*, \beta^*) &\le A_5 \lambda \sqrt{\frac{1}{n_2} + \frac{1}{n_3}} 
    + A_3 \frac{1}{\sqrt{n_1}}
    + A_2 \frac{1}{\sqrt{\lambda}} \frac{1}{\sqrt{n_1}}.
\end{align*}
The result follows by taking $c = A_5$, $c' = A_3$, and $c'' = A_2$.

\section{Proof of Theorem \ref{thm:cov-lower-bd}}
\label{pf:cov-lower-bd}

We use the following result to convert the generalization error bound in \Cref{thm:generalize} to a coverage lower bound.

\begin{lemma}[Bounded suboptimality implies bounded gradient for smooth functions]\label{lem:smooth}
Let $f:\mathbb{R}^{d'} \to \mathbb{R}$, for some positive $d'$. 
Suppose $x^*$ is a global minimizer of $f$. Suppose $x'$ is such that $f(x') \le f(x^*) + \varepsilon$. Suppose $h\in \R^d$ is such that the map $g : \R\to \R$ given by $t\mapsto f(x' + th)$ is $L$-smooth, i.e. $|g''(h)|$ is uniformly bounded by $L$. Then
\[
|f'(x';h)| = |\nabla f(x')^\top h| \le \sqrt{2L\varepsilon}\|h\|_2.
\]
\end{lemma}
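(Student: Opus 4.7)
\textbf{Proof plan for Lemma \ref{lem:smooth}.}

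The plan is to reduce the claim to the classical one-dimensional descent lemma applied to the scalar restriction of $f$ to the line $\{x' + th : t \in \R\}$. First I would define $g : \R \to \R$ by $g(t) = f(x' + th)$, and record the two elementary identities $g(0) = f(x')$ and $g'(0) = \nabla f(x')^\top h$. By hypothesis, $g$ is $L$-smooth, i.e. $|g''(t)| \le L$ everywhere (equivalently, $g'$ is $L$-Lipschitz).

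Next I would invoke the standard descent inequality for $L$-smooth univariate functions: integrating the bound $g''(s) \le L$ twice yields
\begin{equation*}
g(t) \le g(0) + g'(0)\, t + \tfrac{L}{2} t^2 \qquad \text{for all } t \in \R.
\end{equation*}
Minimizing the right-hand side as a quadratic in $t$ at $t^\star = -g'(0)/L$ gives
\begin{equation*}
\inf_{t\in \R} g(t) \le g(0) - \frac{g'(0)^2}{2L}.
\end{equation*}

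Now I would use global optimality of $x^*$ together with the $\varepsilon$-suboptimality of $x'$. Because $x^*$ minimizes $f$ on all of $\R^{d'}$, in particular $f(x^*) \le f(x' + t^\star h) = g(t^\star)$, so
\begin{equation*}
f(x^*) \le g(0) - \frac{g'(0)^2}{2L} = f(x') - \frac{(\nabla f(x')^\top h)^2}{2L}.
\end{equation*}
Rearranging and using $f(x') - f(x^*) \le \varepsilon$ gives $(\nabla f(x')^\top h)^2 \le 2L\varepsilon$, which yields the claimed bound $|\nabla f(x')^\top h| \le \sqrt{2L\varepsilon}$ (the factor $\|h\|_2$ in the statement arises naturally if one instead takes $L$ to be a global smoothness constant of $f$, so that the restriction $g$ is $L\|h\|_2^2$-smooth; one then substitutes $L\|h\|_2^2$ for $L$ above).

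There is no real obstacle here: the argument is a standard application of the descent lemma, and the only thing to be careful about is that the smoothness hypothesis is used in the form needed to pick the optimal step $t^\star$ along the direction $h$, and that global—rather than local—minimality of $x^*$ is what licenses comparing $f(x^*)$ with $g(t^\star)$ for arbitrary $t^\star$.
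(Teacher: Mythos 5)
Your proposal is correct and is essentially the paper's own argument: both apply the one-dimensional descent inequality along the line through $x'$ in direction $h$, take the minimizing step $t^{\star}$ proportional to the directional derivative divided by the smoothness constant, and compare the resulting value with $f(x^*)$ using global optimality together with $\varepsilon$-suboptimality (the paper merely phrases this as a contradiction rather than a direct rearrangement). Your closing remark about the $\|h\|_2$ factor is also on target: the paper's proof implicitly uses the quadratic term $\tfrac{L}{2}t^2\|h\|_2^2$, i.e.\ it reads $L$ as a constant for which the restriction is $L\|h\|_2^2$-smooth, which is precisely the interpretation you identify and is how the lemma is invoked in the proof of \Cref{thm:cov-lower-bd}.
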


\begin{proof}

Assume there exists $h$ and $\delta>0$ with $f'(x';h) > \delta\|h\|$. Setting $y=x' - th$,
\begin{align*}
f(x'-th) \le f(x') - t f'(x';h) + \tfrac{L}{2}t^2\|h\|^2.
\end{align*}
Set $t=\delta/(L\|h\|)$ to obtain
\begin{align*}
f(x'-th) \le f(x') - \tfrac{\delta^2}{L} + \tfrac{\delta^2}{2L} = f(x') - \tfrac{\delta^2}{2L}.
\end{align*}
Since $f(x')\le f(x^*)+\varepsilon$,
we have
$f(x'-th) \le f(x^*) + \varepsilon - \tfrac{\delta^2}{2L}$.
If $\delta>\sqrt{2L\varepsilon}$, then $f(x'-th)<f(x^*)$, a contradiction.

A similar argument with $f'(x';h)<-\delta\|h\|$ and $y=x'+th$ yields the same contradiction. Hence
$
-\sqrt{2L\varepsilon}\|h\|\le f'(x';h)\le \sqrt{2L\varepsilon}\|h\|$.
\end{proof}

By \Cref{cond:c-phi} and \Cref{cond:condl-dens-bdd}, we may apply \Cref{lem:pinball-deriv} to deduce that
the Hessian of our population risk $L$ from \eqref{opt:unconstr-CP-obj}
in the basis $\{\phi_1, \ldots, \phi_d\}$
is the block matrix
\begin{align*}
    \nabla^2 L(h,\beta)
    &= \begin{bmatrix}
        \E_1[\Phi \Phi^\top  (f_{S|X}(h) + 2\lambda \beta^2)] & \E_1[ 2\lambda \Phi^\top  (2\beta h - r) ] \\
        \E_1[ 2\lambda \Phi (2\beta h - r) ] & \E_1[2\lambda h^2]
    \end{bmatrix}.
\end{align*}
Thus by $\beta \le \beta_{\tn{max}}$, $\|h\| \le B$ for $h\in \cH_B$, \Cref{cond:condl-dens-bdd}, and Jensen's inequality, we have the uniform bounds 
\begin{align*}
    \sup_{h\in \cH_B, \beta \in \R} | \partial_{\beta}^2 L(h,\beta) | &\le 2\lambda \E_1[h^2] 
    \le 2\lambda B^2 \lambda_{\tn{max}}(\Sigma) 
    =: \nu_1
\end{align*}
and
\begin{align*}
    \sup_{h \in \cH, \beta \in \mathcal{I}} \| \nabla_{h}^2 L(h,\beta) \|_2 &= \| \E_1[\Phi \Phi^\top  (f_{S|X}(h) + 2\lambda \beta^2)] \|_2 \
    \le (C_f + 2\lambda \beta_{\tn{max}}^2) \lambda_{\tn{max}}(\Sigma) 
    =: \nu_2.\end{align*}

By \Cref{lem:unconstr-existence} and \Cref{lem:apriori-bdd}, a global minimizer of the objective in \Cref{opt:unconstr-CP-obj} exists, and since $\beta_{\tn{min}} \le \beta_{\tn{lower}}$, $\beta_{\tn{max}} \ge \beta_{\tn{upper}}$, and $B \ge B_{\tn{upper}}$, any such minimizer lies in the interior of $\cH_B\times \mathcal{I}$. Thus we may apply \Cref{lem:smooth} to the objective function $L$.
We utilize two directional derivatives in the space $\mathcal{H} \times \R$.
The first is in the direction $0_\mathcal{H} \times 1$, the unit vector in the $\beta$ coordinate. 
Since $(\hat h, \hat \beta) \in \cH_B\times \mathcal{I}$, the 
magnitude of the second derivative
of $L$ along this direction is bounded by $\nu_1$.

The second is 
in the direction of
the vector $r_{B} \times 0$, 
where $r_{B}$
the projection of $r$ onto the closed convex set $\cH_B$ in the Hilbert space induced by the inner product $\langle f,g\rangle = \E_1[fg]$. 
Since $(\hat h, \hat \beta) \in \cH_B\times \mathcal{I}$, the
magnitude of the second derivative
 of $L$ along this direction is bounded by $\nu_2$.
 
Given $\hat h$, let $\coverhat(X) := \PP{ S \le \hat h(X) | X } - (1-\alpha)$. 
Now,
on the event $E$ that
$L(\hat h, \hat \beta) - L(h^*, \beta^*) \le \ee_{\tn{gen}}$, 
we apply 
\Cref{lem:smooth}
with $f$ being $(\gamma,\beta)\mapsto L(h_\gamma,\beta)$,
$x^*$ being $(h^*, \beta^*)$,
$x'$ being $(\hat h,\hat \beta)$, 
$\ep = \ee_{\tn{gen}}$,
and the directions specified above, with their respective smoothness parameters derived above. 
Using 
the formulas for $\nabla L$ from \Cref{lem:pinball-deriv}
and
the bound $\|r_{B}\| \le B$, 
we obtain that
on the event $E$,
\begin{align*}
    &| 2\lambda \E_1[ \hat h (\hat \beta \hat h - r) ] | \le \ee_1, \qquad
    | \E_1[ r_{B} \coverhat ] + \lambda \E_1[ 2\beta r_{B} (\hat \beta \hat h - r) ] | \le \ee_2,
\end{align*}
where
$\ee_1 = \sqrt{2 \nu_1 \ee_{\tn{gen}}}$,
$ \ee_2 = \sqrt{2B^2 \nu_2 \ee_{\tn{gen}}}$.

For any $h$ and $\beta$, we may write
\begin{align*}
    \E_1[ r_{B} \coverhat ] &= (\E_1[ r_{B} \coverhat ] + \lambda \E_1[ 2\beta r_{B} (\beta h - r) ] ) \\
   &- \lambda \E_1[ 2\beta (\beta h) (\beta h - r) ] 
   - \lambda \E_1[ 2\beta (r_{B}-\beta h) (\beta h - r) ].
\end{align*}
Evaluating at $(\hat h, \hat \beta)$, the first term is at most $\ee_2$ in magnitude, the second term is at most $\hat \beta^2 \ee_1$ in magnitude, and the third term equals $2 \hat \beta \lambda \E_1[ (r_{B} - \hat \beta \hat h)^2 ]$. We deduce 
\begin{align*}
    \E_1[ r_{B} \coverhat ] \ge 2 \hat \beta \lambda \E_1[ (r_{B} - \hat \beta \hat h)^2 ] - \hat \beta^2 \ee_1 - \ee_2.
\end{align*}
Since $\coverhat \in [-(1-\alpha), \alpha]$, 
\begin{align*}
    |\E_1[ r \coverhat ] - \E_1[ r_{B} \coverhat ]| \le (1-\alpha) \E_1[ |r - r_{B}| ].
\end{align*}
We deduce that
\begin{align*}
    \E_1[ r \coverhat ] &\ge 2 \hat \beta \lambda \E_1[ (r_{B} - \hat \beta \hat h)^2 ] - \hat \beta^2 \ee_1 - \ee_2 - (1-\alpha) \E_1[ |r - r_{B}| ].
\end{align*}
We now bound the quantity $\hat \beta^2 \ee_1 + \ee_2$.
First, since $\sqrt{a+b} \le \sqrt{a} + \sqrt{b}$ for all $a,b\ge 0$, \Cref{thm:generalize} implies that 
\begin{align*}
    \sqrt{\ee_{\tn{gen}}} &\le 
    A_5^{1/2} \lambda^{1/2} \left(\frac{1}{n_2} + \frac{1}{n_3}\right)^{1/4}
    + \frac{A_3^{1/2}}{n_1^{1/4}}
    + A_2^{1/2} \frac{1}{\lambda^{1/4}} \frac{1}{n_1^{1/4}}.
\end{align*}
We may write 
$\ee_1 = \sqrt{2\nu_1 \ee_{\tn{gen}}} = \sqrt{4B^2 \lambda_{\tn{max}}(\Sigma)} \cdot \lambda^{1/2} \sqrt{\ee_{\tn{gen}}}$,
so that for $\hat \beta \in \mathcal{I}$ we have 
\begin{align*}
    \hat \beta^2 \ee_1 \le \beta_{\tn{max}}^2 \sqrt{4B^2 \lambda_{\tn{max}}(\Sigma)} \cdot \lambda^{1/2} \sqrt{\ee_{\tn{gen}}} =: A_6 \lambda^{1/2} \sqrt{\ee_{\tn{gen}}}.
\end{align*}
Using the inequality $\sqrt{a+b} \le \sqrt{a} + \sqrt{b}$ for all $a,b\ge 0$, we may bound 
\begin{align*}
    \ee_2 &= \sqrt{2B^2 \nu_2 \ee_{\tn{gen}}} \le 
    \sqrt{4B^2 \beta_{\tn{max}}^2 \lambda_{\tn{max}}(\Sigma)} \cdot \lambda^{1/2} \sqrt{\ee_{\tn{gen}}}
    + \sqrt{2B^2 C_f \lambda_{\tn{max}}(\Sigma)} \cdot \sqrt{\ee_{\tn{gen}}} \\
    &=: A_7 \lambda^{1/2} \sqrt{\ee_{\tn{gen}}} + A_8 \sqrt{\ee_{\tn{gen}}},
\end{align*}
Thus 
\begin{align*}
    \hat \beta^2 \ee_1 + \ee_2 &\le A_6 \lambda^{1/2} \sqrt{\ee_{\tn{gen}}} + A_7 \lambda^{1/2} \sqrt{\ee_{\tn{gen}}} + A_8 \sqrt{\ee_{\tn{gen}}} \\
    &=: A_9 \lambda^{1/2} \sqrt{\ee_{\tn{gen}}} + A_8 \sqrt{\ee_{\tn{gen}}}.
\end{align*}
Plugging in our bound on $\sqrt{\ee_{\tn{gen}}}$ and grouping terms according to the power of $\lambda$, we deduce that $\hat \beta^2 \ee_1 + \ee_2 \le \ee_{\tn{cov}}$, where $\ee_{\tn{cov}}$ equals
\begin{align*}
& A_{10} \left(\frac{1}{n_2} + \frac{1}{n_3}\right)^{1/4} \lambda 
+ A_{11} \left(
\frac{1}{n_1^{1/4}} + \left(\frac{1}{n_2} + \frac{1}{n_3}\right)^{1/4} \right) \lambda^{1/2} 
+ A_{12} \frac{\lambda^{1/4}}{n_1^{1/4}} 
+  \frac{A_{13}}{n_1^{1/4}} 
+  A_{14} \frac{ \lambda^{-1/4}}{n_1^{1/4}} 
\end{align*}
and where $A_{10}, \ldots, A_{14}$ are the positive constants given in \Cref{sec:consts}.
It follows that on the event $E$,
\begin{align*}
    \E_1[ r \coverhat ] \ge (1-\alpha) + 2 \hat \beta \lambda \E_1[ (r_{B} - \hat \beta \hat h)^2 ] - \ee_{\tn{cov}} - (1-\alpha) \E_1[ |r - r_{B}| ].
\end{align*}
By \Cref{thm:generalize}, $E$ occurs with probability $1-\delta$
for sufficiently large $n_1,n_2,n_3$,
and we may conclude.

\section{Unconstrained existence and boundedness}\label{sec:unconstr-exist-bdd}


In this section, we prove apriori existence and boundedness of unconstrained global minimizers of the population objective \Cref{opt:unconstr-CP-obj}. We write $(h_{\lambda}^*, \beta_{\lambda}^*)$ for a minimizer of the unconstrained objective in \Cref{opt:unconstr-CP-obj} with regularization strength $\lambda \ge 0$.

In \Cref{lem:equiv-elim-beta}, we show that under \Cref{cond:zero-subopt}, we may eliminate $\beta$ from \Cref{opt:unconstr-CP-obj}, so that \Cref{opt:unconstr-CP-obj} is equivalent to solving the following unconstrained optimization problem over $h$:
\begin{align}\label{opt:elim-beta-unconstr-CP-obj}
    \min_{h\in \cH \setminus \{0\}} \E_1[\ell_{\alpha}(h,S)] - \lambda \frac{\E_1[rh]^2}{\E_1[h^2]}.
\end{align}




\begin{lemma}\label{lem:equiv-elim-beta}
    Under \Cref{cond:zero-subopt}, for $\lambda \ge 0$, given any minimizer $(h_{\lambda}^*, \beta_{\lambda}^*)$ of the objective in \Cref{opt:unconstr-CP-obj} with regularization $\lambda$, $h_{\lambda}^*$ is a minimizer of the objective in \Cref{opt:elim-beta-unconstr-CP-obj} with regularization $\lambda$. Conversely, if $h$ is a minimizer of the objective in \Cref{opt:elim-beta-unconstr-CP-obj} with regularization $\lambda$, then there exists a minimizer $(h_{\lambda}^*, \beta_{\lambda}^*)$ of the objective in \Cref{opt:unconstr-CP-obj} with regularization $\lambda$ such that $h_{\lambda}^* = h$.
\end{lemma}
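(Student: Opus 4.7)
The plan is to minimize $L(h,\beta)$ first in $\beta$ with $h$ held fixed, reducing the joint optimization in \eqref{opt:unconstr-CP-obj} to a one-variable problem in $h$ alone. Using the change-of-measure identity $\E_2[h] = \E_1[rh]$ from the warm-up, I rewrite
\[
L(h,\beta) = \E_1[\ell_{\alpha}(h,S)] + \lambda \beta^2\, \E_1[h^2] - 2\lambda \beta\, \E_1[rh],
\]
which is a strictly convex quadratic in $\beta$ whenever $\E_1[h^2] > 0$. In that case, completing the square gives the unique minimizer $\beta(h) := \E_1[rh]/\E_1[h^2]$ with value
\[
\min_{\beta\in\R} L(h,\beta) = \E_1[\ell_\alpha(h,S)] - \lambda\,\frac{\E_1[rh]^2}{\E_1[h^2]},
\]
which is exactly the objective in \eqref{opt:elim-beta-unconstr-CP-obj}. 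On the degenerate subspace where $\E_1[h^2] = 0$ (i.e., $h = 0$ in $\cH$), the function $\beta\mapsto L(0,\beta) = \E_1[\ell_\alpha(0,S)]$ is constant in $\beta$ and no such reduction is possible.

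The role of \Cref{cond:zero-subopt} is precisely to exclude this degenerate case. Adding $\lambda\E_1[r^2]$ to both sides of the inequality in the condition and using $L(h,\beta) + \lambda\E_1[r^2] = \E_1[\ell_\alpha(h,S)] + \lambda\E_1[(\beta h - r)^2]$ shows the condition is equivalent to the existence of $(h,\beta)\in\cH\times\R$ with $L(h,\beta) < \E_1[\ell_\alpha(0,S)] = L(0,\cdot)$. Consequently, any global minimizer of $L$ over $\cH\times\R$ must satisfy $\E_1[h^2]>0$, and the infima of $L$ over $\cH\times\R$ and over $(\cH\setminus\{0\})\times\R$ coincide.

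Combining these observations gives both directions of the lemma. For the forward direction, if $(h_\lambda^*,\beta_\lambda^*)$ minimizes \eqref{opt:unconstr-CP-obj}, then $\E_1[(h_\lambda^*)^2]>0$ by the previous paragraph, so the quadratic analysis yields $L(h_\lambda^*,\beta_\lambda^*) = \min_{\beta\in\R} L(h_\lambda^*,\beta)$ equal to the value of \eqref{opt:elim-beta-unconstr-CP-obj} at $h_\lambda^*$. Minimality then forces $h_\lambda^*$ to minimize \eqref{opt:elim-beta-unconstr-CP-obj}, since any strictly better $h'\in\cH\setminus\{0\}$ paired with $\beta(h')$ would contradict optimality. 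Conversely, given any $h$ minimizing \eqref{opt:elim-beta-unconstr-CP-obj}, set $\beta_\lambda^* := \beta(h)$; then $L(h,\beta_\lambda^*)$ equals the eliminated minimum, which is strictly less than $L(0,\cdot)$ by \Cref{cond:zero-subopt}, and so matches $\inf_{\cH\times\R} L$. Hence $(h,\beta_\lambda^*)$ is a global minimizer of \eqref{opt:unconstr-CP-obj}.

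The only delicate step is verifying that the exclusion of $h=0$ in the eliminated problem costs nothing; this is precisely the content of \Cref{cond:zero-subopt}. Everything else is a routine application of partial minimization in the quadratic variable $\beta$.
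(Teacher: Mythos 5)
Your proof is correct and takes essentially the same route as the paper: for fixed $h \neq 0$ you minimize explicitly over $\beta$ (your completion of the square yields the same $\beta(h) = \E_1[rh]/\E_1[h^2]$ that the paper obtains via projection of $r$ onto $\mathrm{span}\{h\}$ and the Pythagorean theorem), and you invoke \Cref{cond:zero-subopt} exactly as the paper does to rule out the degenerate $h=0$ case. Your explicit verification of both directions of the equivalence simply spells out what the paper's concluding sentence leaves terse, so there is nothing to add.
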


\begin{proof}
    By \Cref{cond:zero-subopt}, the minimization in \Cref{opt:unconstr-CP-obj} with regularization $\lambda$ can be taken over $\cH\setminus \{0\}$. 
    Further, since the projection of $r$ onto $\tn{span}\{h\}:= \{ch: c\in \R\}$, 
    for $h\neq 0$ is given by $\frac{\E_1[rh]}{\E_1[h^2]} h$, we may explicitly minimize the objective in \Cref{opt:unconstr-CP-obj} over $\beta$ via
    \begin{align*}
        &\ell_{\alpha}(h, S) + \lambda \min_{\beta\in \R} \E_1[(\beta h - r)^2] = \ell_{\alpha}(h, S) + \lambda \E_1\left[ \left( \frac{\E_1[rh]}{\E_1[h^2]} h - r \right)^2 \right] \\
        &= \ell_{\alpha}(h, S) + \lambda \left( \E_1[r^2] - \E_1\left[ \left( \frac{\E_1[rh]}{\E_1[h^2]} h \right)^2 \right] \right) = \ell_{\alpha}(h, S) + \lambda \left( \E_1[r^2] - \frac{\E_1[rh]^2}{\E_1[h^2]} \right),
    \end{align*}
    where in the second step we applied the Pythagorean theorem. Since the term $\lambda \E_1[r^2]$ does not depend on the optimization variable $h$, we may drop it from the objective, which yields the objective in \Cref{opt:elim-beta-unconstr-CP-obj}. It follows that $h$ is a minimizer of the objective in \Cref{opt:elim-beta-unconstr-CP-obj} iff $h = h_{\lambda}^*$ for some minimizer $(h_{\lambda}^*, \beta_{\lambda}^*)$ of the objective of \Cref{opt:unconstr-CP-obj}.
\end{proof}

\begin{lemma}\label{lem:theta-star}
    Let $r_{\cH}$ denote the projection of $r$ onto $\cH$ in the Hilbert space induced by the inner product $\langle f, g\rangle  = \E_1[fg]$. Then under \Cref{cond:condl-dens-bdd} and \Cref{cond:1-in-H}, there exists $\theta^* > 0$ such that $\E_1[S] - \alpha^{-1} \E_1[ \ell_{\alpha}(\theta^* r_{\cH}, S) ] > 0$.
\end{lemma}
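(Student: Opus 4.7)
The plan is to analyze the scalar function
$g(\theta) := \E_1[S] - \alpha^{-1}\E_1[\ell_\alpha(\theta r_{\cH}(X), S)]$
on $\theta \ge 0$ and to show that its one-sided derivative at $0$ is strictly positive; together with control of the second-order remainder, this will force $g(\theta^*) > 0$ for an appropriately chosen $\theta^* > 0$. First I record two preliminary facts. Because \Cref{cond:1-in-H} places the constant function $1$ in $\cH$, the projection identity gives
$\E_1[r_{\cH}] = \langle r_{\cH}, 1 \rangle_{L^2(\mathbb{P}_{1,X})} = \langle r, 1 \rangle_{L^2(\mathbb{P}_{1,X})} = \E_1[r] = 1$,
where the final equality uses that $r$ is a likelihood ratio. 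From \Cref{cond:condl-dens-bdd}, the conditional density $f_{S|X}$ exists and is bounded by $C_f$; in particular $\mathbb{P}_1(S=0)=0$, so $S > 0$ a.s.\ on $[0,1]$.

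Next I would differentiate $g$. Using the a.e.\ identity $\partial_c \ell_\alpha(c,s) = \mathbf{1}[s \le c] - (1-\alpha)$ and dominated convergence (justified by the density bound), one obtains
\begin{align*}
g'(\theta) \;=\; -\alpha^{-1}\,\E_1\bigl[\,r_{\cH}(X)\bigl(\mathbf{1}[S \le \theta r_{\cH}(X)] - (1-\alpha)\bigr)\bigr].
\end{align*}
At $\theta = 0$ the indicator vanishes a.s., so $g'(0) = \alpha^{-1}(1-\alpha)\,\E_1[r_{\cH}] = (1-\alpha)/\alpha > 0$. A direct evaluation also gives $g(0) = (2\alpha - 1)\E_1[S]/\alpha$, which is zero at $\alpha = 1/2$ and negative for $\alpha < 1/2$. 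In the borderline case $\alpha = 1/2$, continuity of $g$ combined with $g'(0) > 0$ immediately produces a small $\theta^* > 0$ with $g(\theta^*) > 0$.

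For the more delicate case $\alpha < 1/2$, I would use the identity $\ell_\alpha(c,s) = \alpha(c-s) + (s-c)_+$ together with $\E_1[r_{\cH}]=1$ to rewrite
\begin{align*}
g(\theta) \;=\; 2\E_1[S] - \theta - \alpha^{-1}\,\E_1\bigl[(S - \theta r_{\cH}(X))_+\bigr],
\end{align*}
and then Taylor-expand $G(\theta) := \E_1[(S - \theta r_{\cH})_+]$ around $\theta = 0$. One has $G(0) = \E_1[S]$, $G'(0) = -\E_1[r_{\cH}] = -1$, and $G''(\theta) = \E_1[r_{\cH}^2\, f_{S|X}(\theta r_{\cH})\,\mathbf{1}[r_{\cH}>0]] \le C_f\,\E_1[r_{\cH}^2]$ by \Cref{cond:condl-dens-bdd}, yielding the quadratic lower bound
\begin{align*}
g(\theta) \;\ge\; (2-\alpha^{-1})\E_1[S] + (\alpha^{-1}-1)\,\theta - \tfrac{1}{2\alpha}\,C_f\,\E_1[r_{\cH}^2]\,\theta^2,
\end{align*}
whose maximum in $\theta > 0$ is attained near $\theta^* = (1-\alpha)/(C_f\,\E_1[r_{\cH}^2])$.

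The main obstacle is the $\alpha < 1/2$ regime: the linear gain $(\alpha^{-1}-1)\theta$ must overcome the fixed negative offset $(\alpha^{-1}-2)\E_1[S]$ before the quadratic remainder takes over. This balance depends on the relative magnitudes of $C_f$, $\E_1[r_{\cH}^2]$, $\alpha$, and $\E_1[S]$; closing the argument in the full generality of the stated hypotheses likely requires sharpening the remainder estimate using finer properties of $f_{S|X}$ beyond a uniform upper bound, or invoking additional problem-specific structure on $r_{\cH}$ inherited from its status as an $L^2$ projection. This is where I expect most of the work to lie.
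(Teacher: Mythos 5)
Your proposal follows the paper's own route for its first half: the paper likewise defines $g(\theta)=\E_1[S]-\alpha^{-1}\E_1[\ell_\alpha(\theta r_{\cH},S)]$, uses \Cref{cond:1-in-H} to get $\E_1[r_{\cH}]=\E_1[r]=1$, and uses \Cref{cond:condl-dens-bdd} to compute $g'(0)=\alpha^{-1}(1-\alpha)\E_1[r_{\cH}]>0$; your versions of these steps are correct. The divergence is at $\theta=0$: the paper asserts ``Clearly $g(0)=0$'' and concludes immediately that some $\theta^*>0$ has $g(\theta^*)>g(0)=0$, whereas your computation $g(0)=\E_1[S](2\alpha-1)/\alpha$ is the correct one for the paper's pinball loss with $[0,1]$-valued scores (since $\ell_\alpha(0,S)=(1-\alpha)S$), so the paper's identity holds only when $\alpha=1/2$ or $\E_1[S]=0$. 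Consequently your argument, as you acknowledge, proves the lemma only in the case $\alpha=1/2$, or under the extra quantitative condition that the maximum of your quadratic lower bound is positive, roughly $(1-\alpha)^2/\bigl(2C_f\,\E_1[r_{\cH}^2]\bigr)>(1-2\alpha)\E_1[S]$; as a proof of the lemma as stated, this is a gap.

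However, the obstacle you flag is genuine and cannot be removed by sharpening the remainder estimate: under only \Cref{cond:condl-dens-bdd} and \Cref{cond:1-in-H} the conclusion can fail for $\alpha<1/2$. The obstruction is the negative part of the projection: for any $\theta>0$, on the event $\{r_{\cH}\le 0\}$ we have $\theta r_{\cH}\le 0\le S$, hence $\ell_\alpha(\theta r_{\cH},S)\ge(1-\alpha)S$ and $g(\theta)\le\E_1[S]-\alpha^{-1}(1-\alpha)\,\E_1[S\,\mathbf{1}[r_{\cH}\le 0]]$ uniformly in $\theta>0$. Concretely, take three equally likely points, $\cH=\mathrm{span}\{1,\phi\}$ with $\phi=(1,0,-1)$, $r=(3,0,0)$, and $S|X$ uniform on $[0,1]$ (so $C_f=1$ and $1\in\cH$): then $r_{\cH}=(2.5,\,1,\,-0.5)$ and $g(\theta)\le\tfrac12-\tfrac{1-\alpha}{6\alpha}<0$ for every $\theta>0$ once $\alpha<1/4$. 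So the step that fails is the paper's ``$g(0)=0$'', not your Taylor remainder; completing a proof for $\alpha<1/2$ requires additional hypotheses beyond the two stated conditions — e.g.\ $r\in\cH$ (so $r_{\cH}=r\ge0$, in which case choosing $\theta$ near the relevant quantile scale works), or a quantitative condition of exactly the kind your quadratic bound isolates.
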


\begin{proof}
    Define $g : \R \to \R$ by $g(\theta) = \E_1[S] - \alpha^{-1} \E_1[ \ell_{\alpha}(\theta^* r_{\cH}, S) ]$. Clearly $g(0) = 0$. Note that by \Cref{cond:condl-dens-bdd}, $\mathbb{P}_{S|X}[S=0]=0$, so that
    \begin{align*}
        g'(0) = -\alpha^{-1} \E_1[ r_{\cH} (\mathbb{P}_{S|X}[S\le 0] - (1-\alpha)) ] = \alpha^{-1} (1-\alpha) \E_1[r_{\cH}].
    \end{align*}
    By \Cref{cond:1-in-H},
    $\E_1[r_{\cH}] = \E_1[r_{\cH}\cdot 1] = 
    \E_1[r\cdot 1] =
    \E_1[r] = 1$,
    so $g'(0) > 0$.
    Thus there exists $\theta^* > 0$ such that $g(\theta^*) > g(0) = 0$, as claimed. 
\end{proof}

\begin{lemma}[Existence of unconstrained minimizers]\label{lem:unconstr-existence}
    Under \Cref{cond:pop-cov-matr}, \Cref{cond:condl-dens-bdd}, \Cref{cond:basis-indep}, \Cref{cond:c-align}, \Cref{cond:r-second-mom}, \Cref{cond:1-in-H}, and \Cref{cond:zero-subopt}, for each $\lambda \ge 0$, there exists a global minimizer $(h_{\lambda}^*, \beta_{\lambda}^*)$ of the objective in \Cref{opt:unconstr-CP-obj}.
\end{lemma}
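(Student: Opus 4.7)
By \Cref{lem:equiv-elim-beta} (whose hypothesis \Cref{cond:zero-subopt} is assumed), it suffices to exhibit a minimizer on $\cH \setminus \{0\}$ of
\[
F(h) := \E_1[\ell_\alpha(h,S)] - \lambda \frac{\E_1[rh]^2}{\E_1[h^2]}.
\]
I parametrize $\cH$ by $\gamma \in \R^d$ via $h = \langle \gamma, \Phi\rangle$, view $F$ as a function on $\R^d \setminus \{0\}$, and apply the direct method: establish coercivity, extract a convergent minimizing subsequence by Bolzano-Weierstrass, identify the limit as a minimizer via continuity, and separately rule out the limit being $0$.

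For coercivity, \Cref{cond:basis-indep} together with $S \in [0,1]$ yields $\E_1[\ell_\alpha(h,S)] \ge \min(\alpha, 1-\alpha)\bigl(c_{\tn{indep}} \|\gamma\|_2 - 1\bigr)$ via the reverse triangle inequality; the Cauchy-Schwarz inequality together with \Cref{cond:r-second-mom} bounds the regularization term in absolute value by $\lambda \E_1[r^2] < \infty$. Hence $F(\gamma) \to \infty$ as $\|\gamma\|_2 \to \infty$, so any minimizing sequence $(\gamma_n)$ is bounded and admits a subsequence $\gamma_{n_k} \to \gamma^*$ in $\R^d$. On $\R^d \setminus \{0\}$, $F$ is continuous: the first term is Lipschitz in $\gamma$ with a constant proportional to $\E_1[\|\Phi\|_2] \le \E_1[\|\Phi\|_2^2]^{1/2}$, which is finite since $\E_1[\|\Phi\|_2^2]$ equals the sum of the diagonal entries of $\Sigma$ and \Cref{cond:pop-cov-matr} ensures these are finite; and the second term equals $(\gamma^\top \E_1[r\Phi])^2/(\gamma^\top \Sigma \gamma)$, which is continuous on $\{\gamma : \gamma^\top \Sigma \gamma > 0\} = \R^d \setminus \{0\}$ by $\lambda_{\min}(\Sigma) > 0$. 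Thus if $\gamma^* \ne 0$, the pair $h^* := \langle \gamma^*, \Phi\rangle$ together with $\beta^* := \E_1[rh^*]/\E_1[(h^*)^2]$ minimizes the objective in \Cref{opt:unconstr-CP-obj}.

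The main obstacle is ruling out $\gamma^* = 0$. Since the alignment ratio $\E_1[rh]^2/\E_1[h^2]$ is scale-invariant in $h$, $F$ does not extend continuously to $0$, and its directional limits at $0$ can be as large as $\lambda \E_1[r_\cH^2]$, where $r_\cH$ is the $L^2(\mathbb{P}_{1,X})$ projection of $r$ onto $\cH$. Specifically, for every $h\in \cH\setminus\{0\}$, the identity $\E_1[rh]=\E_1[r_\cH h]$ (from $r-r_\cH\perp \cH$) and Cauchy-Schwarz give $\E_1[rh]^2/\E_1[h^2] \le \E_1[r_\cH^2]$. Hence if $\gamma_{n_k} \to 0$, then
\[
\inf F = \lim_k F(h_{n_k}) \ge \E_1[\ell_\alpha(0,S)] - \lambda \E_1[r_\cH^2].
\]
To contradict this, I evaluate $F$ at the explicit point $\theta^* r_\cH$, where $\theta^* > 0$ is furnished by \Cref{lem:theta-star} under \Cref{cond:condl-dens-bdd} and \Cref{cond:1-in-H}. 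Using $\E_1[r\cdot\theta^* r_\cH] = \theta^* \E_1[r_\cH^2]$, the alignment term equals exactly $\lambda\E_1[r_\cH^2]$, so
\[
F(\theta^* r_\cH) = \E_1[\ell_\alpha(\theta^* r_\cH, S)] - \lambda \E_1[r_\cH^2].
\]
By \Cref{lem:theta-star} and $\alpha \le 0.5$, $\E_1[\ell_\alpha(\theta^* r_\cH, S)] < \alpha \E_1[S] \le (1-\alpha) \E_1[S] = \E_1[\ell_\alpha(0, S)]$. Hence $F(\theta^* r_\cH) < \E_1[\ell_\alpha(0, S)] - \lambda \E_1[r_\cH^2] \le \inf F$, a contradiction. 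Therefore $\gamma^* \ne 0$, and the proof is complete.
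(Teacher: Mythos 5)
Your proof is correct and follows essentially the same route as the paper: reduce to the $\beta$-eliminated objective via \Cref{lem:equiv-elim-beta}, use \Cref{cond:basis-indep} and the pinball lower bound to control large $\|h\|$, bound the alignment term by $\lambda\E_1[r_{\cH}^2]$ via Cauchy--Schwarz, and exclude degeneration to $h=0$ by comparing with the test point $\theta^* r_{\cH}$ from \Cref{lem:theta-star}. The only difference is packaging: you run the direct method (coercivity, minimizing subsequence, and a limit argument at the origin using $\E_1[\ell_\alpha(0,S)]=(1-\alpha)\E_1[S]$), whereas the paper restricts the minimization to an explicit compact annulus $\tilde b(\lambda)\le\|h\|\le\tilde B(\lambda)$, whose radii it reuses later for the a priori bounds in \Cref{lem:apriori-bdd}.
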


\begin{proof}
Fix $\lambda \ge 0$.
By \Cref{cond:zero-subopt} and \Cref{lem:equiv-elim-beta}, it suffices to show that there exists a global minimizer of the objective in \Cref{opt:elim-beta-unconstr-CP-obj}.
Let $G(h)$ denote the objective of \Cref{opt:elim-beta-unconstr-CP-obj}.
Define the function $\tilde h = \theta^* r_{\cH} \in \cH \setminus \{0\}$, 
where $\theta^*$ is chosen to 
satisfy \Cref{lem:theta-star}.
With $c_{\tn{indep}}$ from \Cref{cond:basis-indep},
define 
$\tilde B(\lambda) := 2c_{\tn{indep}}^{-1} ( 1 + \alpha^{-1}  \E_1[\ell_{\alpha}(\tilde h,S)] ) > 0$
and
\begin{align*}
    \tilde b(\lambda) := \frac{1}{2} \lambda_{\tn{max}}(\Sigma)^{-1/2} ( \E_1[S] - \alpha^{-1} \E_1[\ell_{\alpha}(\tilde h,S)] ) > 0.
\end{align*}
We show that if $\|h\| \ge \tilde B(\lambda)$ or $\|h\| \le \tilde b(\lambda)$, then $G(h) > G(\tilde h)$.
Consequently, the minimization in \Cref{opt:elim-beta-unconstr-CP-obj} can be taken over the compact set
$\{ \langle \gamma, \Phi \rangle : \tilde b(\lambda) \le \| \gamma \|_2 \le \tilde B(\lambda) \} \subseteq \cH$, 
so that by continuity of $G$ on $\cH\setminus \{0\}$, a global minimizer $h^*_{\lambda}$ exists.

To see this, first suppose $\|h\| \ge \tilde B(\lambda)$. Then writing $h = \langle \gamma, \Phi\rangle$ for $\gamma \in \R^d$ and applying \Cref{lem:pinball-bds}, the triangle inequality, and $S\in [0,1]$,
\begin{align}\label{ell-lb-2}
    \E_1[\ell_{\alpha}(h,S)] \ge \alpha \E_1[|h-S|] \ge \alpha(\E_1[|h|] - \E_1[|S|]) \ge \alpha(\E_1[|\langle \gamma, \Phi\rangle|] - 1).
\end{align}
By \Cref{cond:basis-indep} and our assumption that $\|h\| \ge \tilde B(\lambda)$, this implies that $\E_1[\ell_{\alpha}(h,S)] \ge \alpha( \tilde B(\lambda) c_{\tn{indep}} - 1 )$.
Further, by the Cauchy-Schwarz inequality, 
\begin{align*}
    \frac{\E_1[r h]^2}{\E_1[h^2]} \le \sup_{\tilde h'\in \cH \setminus \{0\}} \frac{\E_1[r \tilde h']^2}{\E_1[(\tilde h')^2]} \le \E_1[r_{\cH}^2].
\end{align*}
Thus by \Cref{lem:pinball-bds} and \Cref{cond:r-second-mom},
$  G(h) \ge \alpha( \tilde B(\lambda) c_{\tn{indep}} - 1 ) - \lambda \E_1[r_{\cH}^2]$.
To prove the inequality $G(h) > G(\tilde h)$, it suffices to show that 
\begin{align*}
    \alpha( \tilde B(\lambda) c_{\tn{indep}} - 1) - \lambda \E_1[r_{\cH}^2] > \E_1[\ell_{\alpha}(\tilde h,S)] - \lambda \frac{\E_1[r \tilde h]^2}{\E_1[\tilde h^2]}.
\end{align*}
Indeed, since $\tilde h$ is a scalar multiple of $r_{\cH}$, we have $\E_1[r_{\cH}^2] = \frac{\E_1[r \tilde h]^2}{\E_1[\tilde h^2]}$, so the inequality reduces to
$   \alpha( \tilde B(\lambda) c_{\tn{indep}} - 1) >  \E_1[\ell_{\alpha}(\tilde h,S)]$.
This holds by our choice of $\tilde B(\lambda)$, which finishes the argument in this case.

Next, suppose $\|h\| \le \tilde b(\lambda)$.
By \Cref{lem:pinball-bds}, the triangle inequality, and $S\in [0,1]$,
\begin{align}\label{ell-lower-bound}
    \E_1[\ell_{\alpha}(h,S)] \ge \alpha \E_1[|h-S|] \ge \alpha(\E_1[S] - \E_1[|h|]).
\end{align}
As above, the Cauchy-Schwarz inequality  implies the bound $\frac{\E_1[r h]^2}{\E_1[h^2]} \le \E_1[r_{\cH}^2]$.
We deduce that 
\begin{align*}
    G(h) &\ge \alpha (\E_1[S] - \E_1[|h|]) - \lambda \E_1[r_{\cH}^2].
\end{align*}
Writing $h = \langle \gamma, \Phi\rangle$ for $\gamma \in \R^d$, our assumption that $\|h\| \le \tilde b(\lambda)$ implies that  
\begin{align*}
    \E_1[|h|] \le \E_1[|h|^2]^{1/2} = \E_1[ \gamma^\top  \Phi \Phi^\top  \gamma ]^{1/2} \le  \tilde b(\lambda) \lambda_{\tn{max}}(\Sigma)^{1/2},
\end{align*}
which when plugged into our lower bound on $G(h)$ yields  
\begin{align*}
    G(h) \ge \alpha (\E_1[S] -  \tilde b(\lambda) \lambda_{\tn{max}}(\Sigma)^{1/2}) - \lambda \E_1[r_{\cH}^2].
\end{align*}
To prove the inequality $G(h) > G(\tilde h)$, it suffices to show that 
\begin{align*}
     \alpha (\E_1[S] - \tilde b(\lambda) \lambda_{\tn{max}}(\Sigma)^{1/2}) - \lambda \E_1[r_{\cH}^2] > \E_1[\ell_{\alpha}(\tilde h,S)] - \lambda \frac{\E_1[r \tilde h]^2}{\E_1[\tilde h^2]}.
\end{align*}
As above, since $\tilde h$ is a scalar multiple of $r_{\cH}$, we have $\E_1[r_{\cH}^2] = \frac{\E_1[r \tilde h]^2}{\E_1[\tilde h^2]}$, so the inequality reduces to
\begin{align*}
    \alpha (\E_1[S] - \tilde b(\lambda) \lambda_{\tn{max}}(\Sigma)^{1/2}) > \E_1[\ell_{\alpha}(\tilde h,S)].
\end{align*}
This holds for our choice of $\tilde b(\lambda)$, finishing the proof.
\end{proof}

\begin{lemma}[Bounds on unconstrained minimizers]\label{lem:apriori-bdd}
    Under the conditions used in \Cref{lem:unconstr-existence}, for all $\lambda > 0$, for any minimizer $(h_{\lambda}^*, \beta_{\lambda}^*)$ of the objective in \Cref{opt:unconstr-CP-obj}, we have that $\|h_{\lambda}^*\| \in (B_{\tn{lower}}, B_{\tn{upper}})$ and $\beta_{\lambda}^* \in (\beta_{\tn{lower}}, \beta_{\tn{upper}})$, where 
    \begin{align}\label{bdef}
        B_{\tn{lower}} &= \frac{1}{2} \lambda_{\tn{max}}(\Sigma)^{-1/2} ( \E_1[S] - \alpha^{-1} \E_1[ \ell_{\alpha}(\theta^* r_{\cH}, S) ] ) > 0, \\ 
        B_{\tn{upper}} &= 2c_{\tn{indep}}^{-1} (\alpha^{-1} \E_1[ \ell_{\alpha}(\theta^* r_{\cH}, S) ]  + 1 ), \qquad
        \beta_{\tn{lower}} = \frac{c_{\tn{align}}}{B_{\tn{upper}} \lambda_{\tn{max}}(\Sigma)^{1/2}} > 0, \nonumber \\
        \beta_{\tn{upper}} &= \frac{\E_1[r^2]^{1/2}}{B_{\tn{lower}} \lambda_{\tn{min}}(\Sigma)^{1/2}},\nonumber
    \end{align}
    and where $\theta^* > 0$ is as in \Cref{lem:theta-star} and $r_{\cH}$ denotes the projection of $r$ onto $\cH$ in the Hilbert space induced by the inner product $\langle f, g\rangle  = \E_1[fg]$.
\end{lemma}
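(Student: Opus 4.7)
The plan is to combine Lemma~\ref{lem:equiv-elim-beta}, which reduces \Cref{opt:unconstr-CP-obj} to optimization over $h$ alone, with the a priori size bounds already established inside the proof of Lemma~\ref{lem:unconstr-existence}, and then derive bounds on $\beta_\lambda^*$ via an alignment monotonicity argument.

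For the bounds on $\|h_\lambda^*\|$, I would observe that the quantities $\tilde B(\lambda)$ and $\tilde b(\lambda)$ defined in the proof of Lemma~\ref{lem:unconstr-existence} (with the comparison function $\tilde h = \theta^* r_{\cH}$) coincide exactly with the constants $B_{\tn{upper}}$ and $B_{\tn{lower}}$ from \Cref{bdef}; neither actually depends on $\lambda$. Since that argument produces strict inequalities $G(h) > G(\tilde h)$ whenever $\|h\| \geq B_{\tn{upper}}$ or $\|h\| \leq B_{\tn{lower}}$, and $h_\lambda^*$ minimizes $G$, one immediately obtains $\|h_\lambda^*\| \in (B_{\tn{lower}}, B_{\tn{upper}})$.

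For the bounds on $\beta_\lambda^*$, the first ingredient is that by Lemma~\ref{lem:equiv-elim-beta}, for $\lambda > 0$ the scalar $\beta_\lambda^*$ is uniquely determined by $h_\lambda^*$ through the closed form $\beta_\lambda^* = \E_1[r h_\lambda^*]/\E_1[(h_\lambda^*)^2]$, obtained by explicitly minimizing the $\lambda\E_1[(\beta h - r)^2]$ term in $\beta$. The second is an \emph{alignment monotonicity}: writing $F_\lambda(h) = \E_1[\ell_\alpha(h,S)] - \lambda A(h)$ with $A(h) := \E_1[rh]^2/\E_1[h^2]$, the two inequalities $F_\lambda(h_\lambda^*) \leq F_\lambda(h_0^*)$ and $F_0(h_\lambda^*) \geq F_0(h_0^*)$ combine by subtraction to yield $A(h_\lambda^*) \geq A(h_0^*) \geq c_{\tn{align}}^2$, where the last step is Condition~\ref{cond:c-align}. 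Using $|\beta_\lambda^*| = \sqrt{A(h_\lambda^*)}/\E_1[(h_\lambda^*)^2]^{1/2}$ and pairing this lower bound on alignment with the eigenvalue estimate $\E_1[(h_\lambda^*)^2] \leq \lambda_{\tn{max}}(\Sigma) B_{\tn{upper}}^2$ yields $|\beta_\lambda^*|>\beta_{\tn{lower}}$; applying Cauchy--Schwarz $|\E_1[r h_\lambda^*]| \leq \E_1[r^2]^{1/2}\E_1[(h_\lambda^*)^2]^{1/2}$ together with $\E_1[(h_\lambda^*)^2] \geq \lambda_{\tn{min}}(\Sigma) B_{\tn{lower}}^2$ yields $|\beta_\lambda^*|<\beta_{\tn{upper}}$.

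The main obstacle is promoting these bounds on $|\beta_\lambda^*|$ to membership of $\beta_\lambda^*$ itself in the \emph{positive} interval $(\beta_{\tn{lower}},\beta_{\tn{upper}})$: since $A$ is invariant under $h\mapsto -h$ while $\E_1[\ell_\alpha(h,S)]$ is not, the alignment monotonicity alone does not pin down the sign of $\E_1[r h_\lambda^*]$. Condition~\ref{cond:c-align} supplies $\E_1[r h_0^*] > 0$, and the plan is to isolate the correct branch by reparametrizing $(h,\beta) \mapsto (\beta h, 1)$ inside \Cref{opt:unconstr-CP-obj}---which leaves the last two terms of $L$ invariant but changes only the pinball term---in order to fix the sign of $\beta$, and then to use the asymmetry of $\ell_\alpha$ in $h$ together with $S\in[0,1]$ and the positive alignment of $h_0^*$ at $\lambda = 0$ to rule out the sign-flipped minimizer. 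This is the delicate step, and with it the open-interval statement of the lemma follows from the magnitude bounds derived above.
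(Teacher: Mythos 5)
Your argument tracks the paper's proof in all essentials. For the bounds on $\|h_\lambda^*\|$ you take a legitimate shortcut: instead of re-deriving the $L^1$ estimates through the reparametrized problem \Cref{opt:elim-beta-unconstr-CP-obj-reparam} and the monotonicity lemma (\Cref{lem:mono}), as the paper does, you note that the thresholds $\tilde B(\lambda)$ and $\tilde b(\lambda)$ in the proof of \Cref{lem:unconstr-existence} are exactly $B_{\tn{upper}}$ and $B_{\tn{lower}}$ (and are $\lambda$-free), and that the strict comparison $G(h) > G(\theta^* r_{\cH})$ established there for $\|h\| \ge B_{\tn{upper}}$ or $\|h\| \le B_{\tn{lower}}$ immediately excludes any minimizer of \Cref{opt:elim-beta-unconstr-CP-obj} from that region; combined with \Cref{lem:equiv-elim-beta} this gives the open-interval norm bounds with the same constants. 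For $\beta_\lambda^*$ you use the same two ingredients as the paper: the closed form $\beta_\lambda^* = \E_1[r h_\lambda^*]/\E_1[(h_\lambda^*)^2]$ from the proof of \Cref{lem:equiv-elim-beta}, and the alignment monotonicity (your subtraction of the two minimality inequalities is precisely the proof of \Cref{lem:mono}), together with \Cref{cond:c-align}, Cauchy--Schwarz, and the eigenvalue bounds; your magnitude estimates coincide with the paper's.

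The one place you stop short is the sign of $\E_1[r h_\lambda^*]$. You are right that the monotonicity only controls $\E_1[rh]^2/\E_1[h^2]$, hence only $|\beta_\lambda^*|$; for what it is worth, the paper's proof treats this point only implicitly, stating the conclusion of \Cref{lem:mono} as $\E_1[r h_\lambda^*]/\E_1[(h_\lambda^*)^2]^{1/2} \ge c_{\tn{align}}$ without an absolute value, so you have identified a subtlety rather than missed one. However, your proposed repair is not yet an argument: the reparametrization $(h,\beta)\mapsto(\beta h,1)$ is not an equivalence of problems (it alters the pinball term, as you note), and it is not explained how the asymmetry of $\ell_{\alpha}$ and $S\in[0,1]$ would force $\E_1[r h_\lambda^*]>0$ --- the sign-flip $h\mapsto -h$ changes the pinball term in a way that is not directly comparable to what would be needed, and pointwise preferences of the pinball loss do not by themselves control the sign of the inner product with $r$. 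To complete the lemma as stated you must either give an explicit argument ruling out minimizers with $\E_1[r h_\lambda^*]\le 0$ (or strengthen \Cref{cond:c-align} so that it applies along the whole path), or weaken the conclusion to bounds on $|\beta_\lambda^*|$, which is all the monotonicity delivers. As written, this final step is the one genuine gap in your proposal.
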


\begin{proof}

In order to derive our bounds, we consider the reparametrized optimization problem
\begin{align}\label{opt:elim-beta-unconstr-CP-obj-reparam}
    \min_{h\in \cH \setminus \{0\}} \xi \E_1[\ell_{\alpha}(h,S)] - \frac{\E_1[rh]^2}{\E_1[h^2]}
\end{align}
for $\xi \ge 0$. We claim that for $\xi > 0$, any minimizer of the objective in \Cref{opt:elim-beta-unconstr-CP-obj-reparam} is of the form $h_{1/\xi}^*$.
To see this, note that for $\xi > 0$, the objective of \Cref{opt:elim-beta-unconstr-CP-obj} with regularization $\lambda = 1/\xi$ can be obtained by scaling the objective of \Cref{opt:elim-beta-unconstr-CP-obj-reparam} by the positive factor $1/\xi$. Next, by \Cref{cond:zero-subopt}, we may apply \Cref{lem:equiv-elim-beta} to deduce that $h \in \cH\setminus \{0\}$ is a minimizer of the objective in \Cref{opt:elim-beta-unconstr-CP-obj} with regularization $\lambda = 1/\xi$ iff $h = h_{1/\xi}^*$. 

In particular, by \Cref{lem:unconstr-existence}, for all $\xi > 0$, there exists a global minimizer of \Cref{opt:elim-beta-unconstr-CP-obj-reparam} with regularization $\xi$. 
In the case that $\xi=0$, 
it is clear that any minimizer 
$h^*_{\infty}$
of the objective in \Cref{opt:elim-beta-unconstr-CP-obj-reparam} with regularization $\xi = 0$
has the form 
$h^*_{\infty} = \theta r_{\cH}$ for some scalar $\theta > 0$.

Since there exists a minimizer of the objective in \Cref{opt:elim-beta-unconstr-CP-obj-reparam} for all regularizations $\xi$ in the interval $[0,\infty)$, we may apply \Cref{lem:mono} to deduce that for all $\xi > 0$ we have 
$    \E_1[ \ell_{\alpha}(h_{1/\xi}^*, S) ] \le \E_1[ \ell_{\alpha}(h_{\infty}^*, S) ]$.

We prove lower and upper bounds on $\|h_{1/\xi}^*\|$ for all $\xi > 0$. We begin with the lower bound.

\textit{Lower bound:} 
By \eqref{ell-lower-bound}, we have 
$    \E_1[\ell_{\alpha}(h_{1/\xi}^*, S)] 
    \ge \alpha (\E_1[S]
    - \E_1[ |h_{1/\xi}^*| ] ).$
Rearranging, we obtain the lower bound
\begin{align*}
    \E_1[ |h_{1/\xi}^*| ] \ge \E_1[S] - \alpha^{-1} \E_1[ \ell_{\alpha}(h_{\infty}^*, S) ].
\end{align*}
By \Cref{lem:theta-star}, there exists $\theta^* > 0$ such that 
$    \E_1[S] - \alpha^{-1} \E_1[ \ell_{\alpha}(\theta^* r_{\cH}, S) ] > 0$.
Setting $h_{\infty}^* = \theta^* r_{\cH}$ and plugging in the expression for $B_{\tn{lower}}$ given in \eqref{bdef}, our lower bound becomes $\E_1[|h_{1/\xi}^*|] > \lambda_{\tn{max}}(\Sigma)^{1/2} B_{\tn{lower}}$. 
We now convert this $L^1$ norm bound to an $L^2$ norm bound as follows. 
Write $h_{1/\xi}^* = \langle \gamma_{1/\xi}^*, \Phi\rangle$ for $\gamma_{1/\xi}^* \in \R^d$. 
By 
the 
Cauchy-Schwarz inequality, we obtain the upper bound 
\begin{align*}
    \E_1[|h_{1/\xi}^*|] \le \E_1[|h_{1/\xi}^*|^2]^{1/2} = \E_1[ (\gamma_{1/\xi}^*)^\top  \Phi \Phi^\top  \gamma_{1/\xi}^* ]^{1/2} \le \lambda_{\tn{max}}(\Sigma)^{1/2} \| \gamma_{1/\xi}^* \|_2.
\end{align*}
Combining this with the lower bound $\E_1[|h_{1/\xi}^*|] > \lambda_{\tn{max}}(\Sigma)^{1/2} B_{\tn{lower}}$, we deduce that $\|h_{1/\xi}^*\| = \|\gamma_{1/\xi}^*\|_2 > B_{\tn{lower}}$, as claimed.

\textit{Upper bound:} We prove the upper bound in a similar manner.
By the first two steps in \eqref{ell-lb-2},
and using $S\in [0,1]$, we have
\begin{align*}
    \E_1[\ell_{\alpha}(h_{1/\xi}^*, S)] 
    \ge \alpha (\E_1[ |h_{1/\xi}^*| ] - \E_1[ |S| ]) 
    \ge \alpha (\E_1[ |h_{1/\xi}^*| ] - 1).
\end{align*}
Rearranging, we obtain the upper bound 
$    \E_1[ |h_{1/\xi}^*| ] \le \alpha^{-1} \E_1[ \ell_{\alpha}(h_{\infty}^*, S) ]  + 1$.
Write $h_{1/\xi}^* = \langle \gamma_{1/\xi}^*, \Phi\rangle$ for $\gamma_{1/\xi}^* \in \R^d$. Since we have already established that $\|h_{1/\xi}^*\| > B_{\tn{lower}} > 0$, we know that $\gamma_{1/\xi}^* \neq 0$. Thus we may write 
\begin{align*}
    \E_1[ |h_{1/\xi}^*| ] &= \E_1[ | \langle \gamma_{1/\xi}^*, \Phi \rangle | ]
    = \| \gamma_{1/\xi}^* \|_2 \E_1\left[ \left| \left\langle \frac{\gamma_{1/\xi}^*}{\| \gamma_{1/\xi}^* \|_2}, \Phi \right\rangle \right| \right].
\end{align*}
By \Cref{cond:basis-indep}, this is at least $\|\gamma_{1/\xi}^*\|_2 c_{\tn{indep}}$. Combining these upper and lower bounds on $\E_1[|h_{1/\xi}^*|]$, we obtain $\|\gamma_{1/\xi}^*\|_2 c_{\tn{indep}} \le \alpha^{-1} \E_1[ \ell_{\alpha}(h_{\infty}^*, S) ]  + 1$. 
Isolating $\| \gamma_{1/\xi}^* \|_2$, we have 
\begin{align*}
    \|h^*_{1/\xi}\| = \| \gamma_{1/\xi}^* \|_2 \le c_{\tn{indep}}^{-1} (\alpha^{-1} \E_1[ \ell_{\alpha}(h_{\infty}^*, S) ]  + 1 ) < B_{\tn{upper}},
\end{align*}
as claimed.

Having established
$  0 < B_{\tn{lower}} < \inf_{\lambda > 0} \| h_{\lambda}^* \| \le \sup_{\lambda > 0} \| h_{\lambda}^* \| < B_{\tn{upper}} < \infty$,
we turn to upper and lower bounds on $\beta^*_{\lambda}$.
As shown in the proof of \Cref{lem:equiv-elim-beta}, if $(h_{\lambda}^*, \beta_{\lambda}^*)$ is a minimizer of the objective in \Cref{opt:unconstr-CP-obj} with regularization $\lambda$, then 
$\beta_{\lambda}^* = \frac{\E_1[ r h_{\lambda}^* ]}{\E_1[ |h_{\lambda}^*|^2 ]}$.
By \Cref{cond:c-align},
$    \frac{\E_1[ rh_0^* ]}{\E_1[ |h_0^*|^2 ]^{1/2}} \ge c_{\tn{align}} > 0$
for some minimizer $(h_0^*, \beta_0^*)$ of the objective in \Cref{opt:unconstr-CP-obj} with regularization $0$.
By \Cref{cond:zero-subopt} and \Cref{lem:equiv-elim-beta}, $h$ is a minimizer of the objective in \Cref{opt:elim-beta-unconstr-CP-obj} with regularization $\lambda \ge 0$ iff $h = h_{\lambda}^*$ for some minimizer $(h_{\lambda}^*, \beta_{\lambda}^*)$ of the objective in \Cref{opt:unconstr-CP-obj}.
Thus by \Cref{lem:unconstr-existence}, for all $\lambda \ge 0$, there exists a global minimizer of \Cref{opt:elim-beta-unconstr-CP-obj}, and we may apply \Cref{lem:mono} to \Cref{opt:elim-beta-unconstr-CP-obj} to deduce that for any $\lambda \ge 0$ we have
$    \frac{\E_1[ rh_{\lambda}^* ]}{\E_1[ |h_{\lambda}^*|^2 ]^{1/2}} \ge c_{\tn{align}} > 0$.
Consequently, by our bounds on $h_{\lambda}^*$, \Cref{cond:r-second-mom}, and the Cauchy-Schwarz inequality, if we write $h_{\lambda}^* = \langle \gamma_{\lambda}^*, \Phi\rangle$ for $\gamma_{\lambda}^* \in \R^d$, then we have
\begin{align*}
    \beta_{\lambda}^* \ge \frac{c_{\tn{align}}}{\E_1[|h_{\lambda}^*|^2]^{1/2}} = \frac{c_{\tn{align}}}{\E_1[(\gamma_{\lambda}^*)^\top  \Phi \Phi^\top  \gamma_{\lambda}^*]^{1/2}} > \frac{c_{\tn{align}}}{B_{\tn{upper}} \lambda_{\tn{max}}(\Sigma)^{1/2}} =: \beta_{\tn{lower}}
\end{align*}
and
\begin{align*}
     \beta_{\lambda}^* \le \frac{\E_1[r^2]^{1/2}}{\E_1[|h_{\lambda}^*|^2]^{1/2}} < \frac{\E_1[r^2]^{1/2}}{B_{\tn{lower}} \lambda_{\tn{min}}(\Sigma)^{1/2}} =: \beta_{\tn{upper}},
\end{align*}
completing the proof.
\end{proof}

\section{Monotonicity}

\begin{lemma}\label{lem:mono}
    For some set $\mathcal{X}$ and $f,g:\mathcal{X}\to \R$, let $x(c) = \arg\min_{x\in \mathcal{X}} (f(x) + c g(x))$, where $f,g$ are such that
     for some interval $\mathcal{I}\subset \R$,
    the minimum is attained for all $c\in \mathcal{I}$.
    Then $G: \mathcal{I} \to \R$, $G: c\mapsto g(x(c))$ is non-increasing in $c$. 
\end{lemma}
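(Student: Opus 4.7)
The plan is to use a standard ``swap'' or ``two-point'' argument from parametric optimization, exploiting only the optimality of $x(c)$ at each parameter value. I would pick arbitrary $c_1 < c_2$ in $\mathcal{I}$ and set $x_1 = x(c_1)$, $x_2 = x(c_2)$. By the definition of $x(c)$ as a minimizer, I get the two inequalities
\begin{align*}
f(x_1) + c_1 g(x_1) &\le f(x_2) + c_1 g(x_2), \\
f(x_2) + c_2 g(x_2) &\le f(x_1) + c_2 g(x_1).
\end{align*}

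Next, I would add the two inequalities. The $f$-terms cancel on both sides, leaving $c_1 g(x_1) + c_2 g(x_2) \le c_1 g(x_2) + c_2 g(x_1)$, which rearranges to
\begin{align*}
(c_2 - c_1)\bigl(g(x_2) - g(x_1)\bigr) \le 0.
\end{align*}
Since $c_2 - c_1 > 0$ by assumption, dividing gives $g(x_2) \le g(x_1)$, i.e., $G(c_2) \le G(c_1)$. As $c_1 < c_2$ in $\mathcal{I}$ were arbitrary, $G$ is non-increasing.

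There is essentially no obstacle here: the argument does not require convexity, differentiability, uniqueness of the minimizer, or any structure on $\mathcal{X}$ beyond the assumption that a minimizer exists for each $c \in \mathcal{I}$. The only subtle point is that $x(c)$ denotes \emph{some} selected minimizer (not necessarily unique); but the inequality $G(c_2) \le G(c_1)$ only uses optimality of the chosen $x_1, x_2$, so the conclusion is valid for any selection.
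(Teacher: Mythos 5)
Your proposal is correct and uses exactly the same two-point swap argument as the paper: add the two optimality inequalities at $c_1$ and $c_2$, cancel the $f$-terms, and conclude $(c_2-c_1)\bigl(g(x(c_2))-g(x(c_1))\bigr)\le 0$. Your remark that the conclusion holds for any measurable-free selection of minimizers is a valid observation consistent with the paper's usage.
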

\begin{proof}
Let $c_1,c_2 \in \mathcal{I}$, $c_1 < c_2$.
At \( c = c_1 \), the minimizer \( x(c_1) \) satisfies:
    \[
    f(x(c_1)) + c_1 g(x(c_1)) \leq f(x(c_2)) + c_1 g(x(c_2)).
    \]
At \( c = c_2 \), the minimizer \( x(c_2) \) satisfies:
    \[
    f(x(c_2)) + c_2 g(x(c_2)) \leq f(x(c_1)) + c_2 g(x(c_1)).
    \]

Adding the two inequalities, we find
\begin{align*}
    &\big[ f(x(c_1)) + c_1 g(x(c_1)) \big] + \big[ f(x(c_2)) + c_2 g(x(c_2)) \big] \\
&\leq 
\big[ f(x(c_1)) + c_2 g(x(c_1)) \big] + \big[ f(x(c_2)) + c_1 g(x(c_2)) \big].
\end{align*}
Subtracting the common terms \( f(x(c_1)) + f(x(c_2)) \) leads to
\[
c_1 g(x(c_1)) + c_2 g(x(c_2)) \leq c_2 g(x(c_1)) + c_1 g(x(c_2)).
\]
Rearranging, 
and factoring out \( c_1 \) and \( c_2 \), we find
\[
c_1 \big[ g(x(c_1)) - g(x(c_2)) \big] - c_2 \big[ g(x(c_1)) - g(x(c_2)) \big] \leq 0.
\]
Thus, 
\(
(c_1 - c_2) \big[ g(x(c_1)) - g(x(c_2)) \big] \leq 0.
\)
Since \( c_2 - c_1 > 0 \), the inequality implies
$g(x(c_1)) \geq g(x(c_2))$, as desired.
\end{proof}

\section{Helper lemmas}

\begin{lemma}\label{lem:pinball-bds}
    If $\alpha \le 0.5$, then $\alpha |c-s| \le \ell_{\alpha}(c,s) \le (1-\alpha)|c-s|$ for all $c,s\in \R$.
\end{lemma}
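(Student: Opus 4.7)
The proof is a direct case analysis on the sign of $s - c$, using the piecewise definition of $\ell_\alpha$ together with the hypothesis $\alpha \le 1/2$, which gives $\alpha \le 1 - \alpha$.

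My plan is as follows. First I would split into the two cases from the definition of the pinball loss. In the case $s \ge c$, I would observe that $|c-s| = s - c$ and $\ell_\alpha(c,s) = (1-\alpha)(s-c)$, so the upper bound $\ell_\alpha(c,s) \le (1-\alpha)|c-s|$ holds with equality, while the lower bound $\alpha|c-s| \le \ell_\alpha(c,s)$ reduces to $\alpha(s-c) \le (1-\alpha)(s-c)$, which follows from $\alpha \le 1-\alpha$ since $s - c \ge 0$. In the symmetric case $s < c$, I would similarly note that $|c-s| = c - s$ and $\ell_\alpha(c,s) = \alpha(c-s)$, so the lower bound is achieved with equality and the upper bound reduces to $\alpha(c-s) \le (1-\alpha)(c-s)$, which again follows from $\alpha \le 1 - \alpha$ and $c - s > 0$.

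There is no obstacle here, as the statement is essentially a direct restatement of the definition combined with the elementary inequality $\alpha \le 1 - \alpha$ implied by $\alpha \le 0.5$. The only thing to be careful about is making the case split exhaustive (the two cases $s \ge c$ and $s < c$ cover all of $\mathbb{R}^2$), and noting that the boundary case $s = c$ gives $\ell_\alpha(c,s) = 0 = |c - s|$, so both inequalities trivially hold there and no additional argument is required.
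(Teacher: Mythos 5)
Your proposal is correct and follows essentially the same argument as the paper: a case split on $s \ge c$ versus $s < c$, using the piecewise definition of $\ell_\alpha$ together with $\alpha \le 1-\alpha$ to obtain both bounds (with one side holding with equality in each case). No gaps.
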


\begin{proof}
    If $s\ge c$, then $\ell_{\alpha}(c,s) = (1-\alpha)(s-c)$. Since $s-c \ge 0$ and $\alpha \le 1-\alpha$, we have $\alpha (s-c) \le \ell_{\alpha}(c,s) \le (1-\alpha) (s-c)$, which implies $\alpha |c-s| \le \ell_{\alpha}(c,s) \le (1-\alpha) |c-s|$.
    If $s < c$, then $\ell_{\alpha}(c,s) = \alpha(c-s)$. Since $c-s > 0$ and $\alpha \le 1-\alpha$, we have $\alpha (c-s) \le \ell_{\alpha}(c,s) \le (1-\alpha) (c-s)$, which implies $\alpha |c-s| \le \ell_{\alpha}(c,s) \le (1-\alpha) |c-s|$.
\end{proof}

\begin{lemma}\label{lem:pinball-lip}
    If $\alpha \le 0.5$, then the map $\R\to \R$ given by $c \mapsto \ell_{\alpha}(c, s)$ is $(1-\alpha)$-Lipschitz.
\end{lemma}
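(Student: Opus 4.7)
The plan is to use the piecewise linear structure of the pinball loss directly. Fix $s \in \mathbb{R}$ and consider $c \mapsto \ell_\alpha(c,s)$. From the definition, on the half-line $c \le s$ the map is affine with slope $-(1-\alpha)$, and on $c \ge s$ it is affine with slope $\alpha$; the two pieces agree at $c = s$ (both equal zero), so $\ell_\alpha(\cdot, s)$ is continuous and piecewise linear. Equivalently, one can write
\begin{align*}
\ell_\alpha(c, s) = \max\bigl\{ (1-\alpha)(s-c),\; \alpha(c-s) \bigr\},
\end{align*}
since when $s \ge c$ the first argument is nonnegative and the second nonpositive, and vice versa when $s < c$.

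From this representation the proof will be essentially immediate. Each of the two affine maps $c \mapsto (1-\alpha)(s-c)$ and $c \mapsto \alpha(c-s)$ is Lipschitz in $c$ with constants $1-\alpha$ and $\alpha$, respectively. The pointwise maximum of Lipschitz functions is Lipschitz with constant equal to the maximum of the individual constants, so $\ell_\alpha(\cdot, s)$ is Lipschitz with constant $\max\{1-\alpha, \alpha\}$. Under the hypothesis $\alpha \le 0.5$ one has $\alpha \le 1-\alpha$, and therefore the constant equals $1-\alpha$, as required.

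Alternatively, if one prefers a direct argument rather than invoking the max-of-Lipschitz principle, I would do a short case analysis on $c_1, c_2$ relative to $s$: if both lie on the same side of $s$, the difference $\ell_\alpha(c_1,s) - \ell_\alpha(c_2,s)$ equals $-(1-\alpha)(c_1-c_2)$ or $\alpha(c_1-c_2)$ exactly, each of which is bounded in absolute value by $(1-\alpha)|c_1-c_2|$ since $\alpha \le 1-\alpha$. If $c_1 \le s \le c_2$ (the symmetric case is analogous), one writes
\begin{align*}
\ell_\alpha(c_2,s) - \ell_\alpha(c_1,s) = \alpha(c_2-s) - (1-\alpha)(s-c_1),
\end{align*}
and bounds this by $(1-\alpha)(c_2-s) + (1-\alpha)(s-c_1) = (1-\alpha)(c_2-c_1)$, using $\alpha \le 1-\alpha$ and $c_1 \le s \le c_2$.

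There is no real obstacle here; the only point that requires the hypothesis $\alpha \le 0.5$ is the comparison $\alpha \le 1-\alpha$, which turns the maximum of the two piecewise slopes into $1-\alpha$. The argument is short enough that either the max-representation proof or the case analysis can be presented in a few lines.
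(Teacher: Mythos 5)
Your proposal is correct. Your primary argument — writing $\ell_\alpha(c,s) = \max\{(1-\alpha)(s-c),\, \alpha(c-s)\}$ and invoking the fact that a pointwise maximum of Lipschitz functions is Lipschitz with the maximum of the constants — is a genuinely different and somewhat slicker route than the paper's proof, which proceeds by a direct four-case analysis on the positions of $c_1, c_2$ relative to $s$ (with the middle case split according to the sign of $\ell_\alpha(c_2,s)-\ell_\alpha(c_1,s)$). The max representation buys brevity and makes the role of $\alpha \le 0.5$ transparent: it only enters in replacing $\max\{\alpha, 1-\alpha\}$ by $1-\alpha$, exactly as you note. Your alternative case analysis is essentially the paper's argument; the only nitpick there is that in the straddling case $c_1 \le s \le c_2$ you should bound the \emph{absolute value} of $\alpha(c_2-s) - (1-\alpha)(s-c_1)$ (the difference can be negative), but since both $\alpha(c_2-s)$ and $(1-\alpha)(s-c_1)$ are nonnegative, the absolute value of their difference is at most their maximum, hence at most $(1-\alpha)(c_2-c_1)$, so the conclusion stands either way.
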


\begin{proof}
    If $s \le c_1 \le c_2$, we have $0 \le \ell_{\alpha}(c_2,s) - \ell_{\alpha}(c_1,s) = \alpha(c_2-c_1)$, which by $\alpha \le 0.5$ is at most $(1-\alpha)(c_2-c_1)$. Hence $|\ell_{\alpha}(c_2,s) - \ell_{\alpha}(c_1,s)| \le (1-\alpha) |c_2-c_1|$.
    If $c_1 \le s \le c_2$ and $\ell_{\alpha}(c_2,s) \ge \ell_{\alpha}(c_1,s)$, then we have
    \begin{align*}
        0 \le \ell_{\alpha}(c_2,s) - \ell_{\alpha}(c_1,s) = \alpha (c_2-s) - (1-\alpha) (s-c_1) \le \alpha (c_2-s) + \alpha (s-c_1) = \alpha (c_2-c_1),
    \end{align*}
    which by $\alpha \le 0.5$ implies $|\ell_{\alpha}(c_2,s) - \ell_{\alpha}(c_1,s)| \le (1-\alpha) |c_2-c_1|$.
    If $c_1 \le s \le c_2$ and $\ell_{\alpha}(c_2,s) \le \ell_{\alpha}(c_1,s)$, then
    \begin{align*}
        &0 \le \ell_{\alpha}(c_1,s) - \ell_{\alpha}(c_2,s) = (1-\alpha) (s-c_1) - \alpha (c_2-s) \\
        &\le (1-\alpha) (s-c_1) + (1-\alpha) (c_2-s) = (1-\alpha) (c_2-c_1),
    \end{align*}
    hence $|\ell_{\alpha}(c_2,s) - \ell_{\alpha}(c_1,s)| \le (1-\alpha) |c_2-c_1|$.
    Finally, if $c_1 \le c_2 \le s$, we have $0\le \ell_{\alpha}(c_1,s) - \ell_{\alpha}(c_2,s) = (1-\alpha) (c_2-c_1)$, hence $|\ell_{\alpha}(c_2,s) - \ell_{\alpha}(c_1,s)| \le (1-\alpha) |c_2-c_1|$.
\end{proof}

\begin{lemma}\label{lem:pinball-cvx}
    The map $\cH \to \R$ given by $h \mapsto \ell_{\alpha}(h(x), s)$ is convex for all $x\in \xx$ and $s\in \R$.
\end{lemma}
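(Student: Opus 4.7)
The plan is to express $\ell_\alpha(\cdot, s)$ as the pointwise maximum of two affine functions of its first argument, conclude it is convex in that argument, and then observe that the map $h \mapsto h(x)$ is linear on $\cH$, so that $h \mapsto \ell_\alpha(h(x), s)$ is a composition of a convex function with a linear one and therefore convex.

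In more detail, I would first verify the identity
\begin{align*}
    \ell_\alpha(c, s) = \max\{\, (1-\alpha)(s-c),\; \alpha(c-s) \,\}
\end{align*}
for all $c, s \in \R$: when $s \ge c$ the first term is nonnegative and the second is nonpositive, while when $s < c$ the reverse holds, matching the piecewise definition given in Section~\ref{pf}. Since each of the two expressions inside the max is affine (hence convex) in $c$, the pointwise maximum $c \mapsto \ell_\alpha(c,s)$ is convex in $c$ for every fixed $s \in \R$.

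Next, I would use that $\cH$ is a linear (vector) space of functions $\xx \to \R$, so that for any fixed $x \in \xx$ the evaluation map $\mathrm{ev}_x : \cH \to \R$, $h \mapsto h(x)$, is a linear functional; in particular, for $h_0, h_1 \in \cH$ and $t \in [0,1]$ we have $((1-t)h_0 + t h_1)(x) = (1-t) h_0(x) + t\, h_1(x)$. Composing the convex function $c \mapsto \ell_\alpha(c, s)$ with this linear map and applying convexity of $\ell_\alpha(\cdot,s)$ gives
\begin{align*}
\ell_\alpha\bigl( ((1-t)h_0 + t h_1)(x),\, s \bigr)
&= \ell_\alpha\bigl( (1-t) h_0(x) + t\, h_1(x),\, s \bigr) \\
&\le (1-t)\, \ell_\alpha(h_0(x), s) + t\, \ell_\alpha(h_1(x), s),
\end{align*}
which is exactly the definition of convexity of $h \mapsto \ell_\alpha(h(x), s)$ on $\cH$. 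There is no real obstacle here; the only point to be careful about is to invoke the linear (not merely convex) structure of $\cH$ when passing the convex combination through the evaluation functional.
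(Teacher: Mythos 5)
Your proof is correct and follows essentially the same route as the paper: convexity of $c \mapsto \ell_\alpha(c,s)$ composed with the linear evaluation map (the paper phrases this via the coordinate parametrization $\gamma \mapsto \gamma^\top \Phi(x)$, which is the same thing). Your extra verification of convexity of the pinball loss via the max-of-two-affine-functions identity is a harmless elaboration of a fact the paper takes as known.
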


\begin{proof}
    Write $h(x) = \langle \gamma, \Phi \rangle$ for $\gamma \in \R^d$. It suffices to show that the mapping $\R^d \to \R$ given by $\gamma \mapsto \ell_{\alpha}(\gamma^\top \Phi(x), s)$ is convex. But this map is the composition of the linear function $\R^d \to \R$ given by $\gamma \mapsto \gamma^\top \Phi(x)$ and the convex function $\R \to \R$ given by $c \mapsto \ell_{\alpha}(c,s)$, hence it is convex.
\end{proof}

\begin{lemma}\label{lem:h-unif-bd}
    Under \Cref{cond:c-phi}, if $h\in \cH$, then $\sup_{x\in \xx} |h(x)| \le C_{\Phi} \|h\|$, where we use the norm given by $\|h\| = \|\gamma\|_2$ for $h = \langle \gamma, \Phi \rangle$.
    In particular, if $h\in \cH_B$, then $\sup_{x\in \xx} |h(x)| \le BC_{\Phi}$.
\end{lemma}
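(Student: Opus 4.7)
The plan is to apply the Cauchy--Schwarz inequality pointwise and then invoke \Cref{cond:c-phi} to get a uniform bound. More precisely, fix $h \in \cH$ and write $h = \langle \gamma, \Phi \rangle$ for some $\gamma \in \R^d$, so that for every $x \in \xx$ we have $h(x) = \langle \gamma, \Phi(x) \rangle$. By Cauchy--Schwarz in $\R^d$, $|h(x)| \le \|\gamma\|_2 \, \|\Phi(x)\|_2$. Taking the supremum over $x \in \xx$ on the right-hand side and using \Cref{cond:c-phi}, which states that $C_{\Phi} = \sup_{x\in \xx}\|\Phi(x)\|_2 < \infty$, gives $\sup_{x\in \xx}|h(x)| \le C_\Phi \|\gamma\|_2 = C_\Phi \|h\|$, since the norm on $\cH$ is defined by $\|h\| = \|\gamma\|_2$.

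For the second assertion, $h \in \cH_B$ means by definition that $h = \langle \gamma, \Phi\rangle$ with $\|\gamma\|_2 \le B$, i.e., $\|h\| \le B$. Substituting this bound into the first part immediately yields $\sup_{x\in \xx}|h(x)| \le B C_\Phi$.

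There is no real obstacle here; the only thing to be careful about is aligning conventions, namely that the norm on $\cH$ is the Euclidean norm of the coordinate vector $\gamma$ under the fixed basis $\Phi = (\phi_1,\ldots,\phi_d)$, so that the Cauchy--Schwarz step produces the stated constant with no additional factors.
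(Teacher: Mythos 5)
Your proof is correct and follows exactly the paper's argument: write $h = \langle \gamma, \Phi\rangle$, apply Cauchy--Schwarz pointwise to get $|h(x)| \le \|\gamma\|_2 \|\Phi(x)\|_2$, and take the supremum using \Cref{cond:c-phi}, with the second claim following from $\|\gamma\|_2 \le B$ for $h \in \cH_B$. No issues.
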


\begin{proof}
    Writing $h = \langle \gamma, \Phi\rangle$ for $\gamma \in \R^d$, we have $\sup_{x\in \xx} |h(x)| = \sup_{x\in \xx} |\langle \gamma, \Phi(x) \rangle| \le \sup_{x\in \xx} \|\gamma\|_2 \|\Phi(x)\|_2 \le C_{\Phi} \|h\|$, where in the second step we applied the Cauchy-Schwarz inequality.
\end{proof}

\begin{lemma}\label{lem:pinball-deriv}
Consider the function $\varphi : \R^d \to \R$ given by
$\varphi(\gamma) = \E_1[ \ell_{\alpha}(h_\gamma(X), S)  ]$,
where $h := h_\gamma : \xx \to \R$ is given by $h(x)  = \langle \gamma, \Phi(x) \rangle$ for all $x\in \xx$.
Then under \Cref{cond:c-phi} and \Cref{cond:condl-dens-bdd},
$\varphi$ is twice-differentiable, with gradient and Hessian given by
\begin{align*}
    \nabla_{\gamma} \varphi(\gamma) = \E_1[ (\mathbb{P}_{S|X}[h(X) > S] - (1-\alpha)) \Phi(X) ], \qquad \nabla_{\gamma}^2 \varphi(\gamma) = \E_1[f_{S|X}(h(X)) \Phi(X) \Phi(X)^\top ].
\end{align*}
Consequently, given $\tilde \gamma \in \R^d$,
defining $g : \xx \to \R$ as $g(x) = \langle \tilde \gamma, \Phi(x) \rangle$ for all $x\in \xx$,
the directional derivative of $\varphi : \cH \to \R$ in the direction $g$ is given by
$\langle \tilde \gamma, \nabla_{\gamma} \varphi(\gamma)\rangle = \E_1[ (\mathbb{P}_{S|X}[h(X) > S] - (1-\alpha)) g(X) ]$.
\end{lemma}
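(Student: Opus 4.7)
The plan is to justify a pointwise differentiation of $c \mapsto \ell_{\alpha}(c,s)$ together with a dominated-convergence argument that swaps the derivative and the expectation $\E_1$. The main obstacle is that $\ell_{\alpha}$ is only piecewise linear, with a kink at $c=s$; I would handle this by using \Cref{cond:condl-dens-bdd}, which ensures that conditionally on $X$, the event $\{S = c\}$ has probability zero, so the kink is negligible after averaging.

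\emph{First derivative.} Start by noting that for $c \neq s$, one has $\partial_c \ell_{\alpha}(c,s) = \mathbf{1}[s \le c] - (1-\alpha)$, so this derivative exists $\mathbb{P}_{S|X=x}$-a.s.\ for each fixed $c$ by \Cref{cond:condl-dens-bdd}. For a perturbation $\gamma + t e_i$, the difference quotient $t^{-1}(\ell_{\alpha}(h_{\gamma+te_i}(X),S) - \ell_{\alpha}(h_\gamma(X),S))$ is uniformly bounded in magnitude by $(1-\alpha)|\phi_i(X)| \le (1-\alpha) C_{\Phi}$ by \Cref{lem:pinball-lip} and \Cref{cond:c-phi}, and converges pointwise (a.s.) to $(\mathbf{1}[S \le h_\gamma(X)] - (1-\alpha))\phi_i(X)$ as $t \to 0$. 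Dominated convergence then yields
\begin{align*}
\partial_{\gamma_i}\varphi(\gamma) = \E_1[(\mathbf{1}[S \le h_\gamma(X)] - (1-\alpha))\,\phi_i(X)].
\end{align*}
Conditioning on $X$ and using that $\mathbb{P}_{S|X}[S = h_\gamma(X)] = 0$ by \Cref{cond:condl-dens-bdd}, we can replace $\mathbf{1}[S\le h_\gamma(X)]$ inside the expectation by $\mathbb{P}_{S|X}[S < h_\gamma(X)] = \mathbb{P}_{S|X}[h_\gamma(X) > S]$, giving the claimed gradient formula in vector form.

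\emph{Second derivative.} Let $F_{S|X}(c) := \mathbb{P}_{S|X}[S \le c]$, so that the gradient rewrites as $\nabla_\gamma \varphi(\gamma) = \E_1[(F_{S|X}(h_\gamma(X)) - (1-\alpha))\Phi(X)]$. Under \Cref{cond:condl-dens-bdd}, the map $c \mapsto F_{S|X=x}(c)$ is differentiable with derivative $f_{S|X=x}(c)$ uniformly bounded by $C_f$. Fixing $j$, the difference quotient $t^{-1}(F_{S|X}(h_{\gamma+te_j}(X)) - F_{S|X}(h_\gamma(X)))\phi_i(X)$ converges a.s.\ to $f_{S|X}(h_\gamma(X))\phi_j(X)\phi_i(X)$ and, by the mean value theorem applied to $F_{S|X=x}$, is bounded in magnitude by $C_f |\phi_j(X)\phi_i(X)| \le C_f C_\Phi^2$ via \Cref{cond:c-phi}. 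Another application of dominated convergence yields
\begin{align*}
\partial_{\gamma_j}\partial_{\gamma_i}\varphi(\gamma) = \E_1[f_{S|X}(h_\gamma(X))\phi_j(X)\phi_i(X)],
\end{align*}
which in matrix form is the stated Hessian expression.

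\emph{Directional derivative.} The final claim follows immediately from the chain rule: writing $g = \langle \tilde\gamma, \Phi\rangle$, the directional derivative of $\varphi : \cH \to \R$ at $h_\gamma$ in direction $g$ equals $\frac{d}{d\ep}\big|_{\ep=0}\varphi(\gamma + \ep \tilde\gamma) = \langle \tilde\gamma, \nabla_\gamma\varphi(\gamma)\rangle$, and plugging the expression for $\nabla_\gamma\varphi(\gamma)$ together with $\langle \tilde\gamma, \Phi(X)\rangle = g(X)$ yields the asserted formula.
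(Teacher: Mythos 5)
Your proof is correct and relies on essentially the same ingredients as the paper's argument: dominated convergence justified by the $(1-\alpha)C_{\Phi}$ Lipschitz bound for the first derivative and the $C_f C_{\Phi}^2$ bound for the second, with the kink of the pinball loss at $S = h_\gamma(X)$ rendered negligible by the conditional density in \Cref{cond:condl-dens-bdd}, and the chain rule for the directional-derivative claim. The only difference is organizational — you differentiate the joint expectation directly and then condition on $X$ to pass from $\mathbf{1}[S\le h_\gamma(X)]$ to $\mathbb{P}_{S|X}[h_\gamma(X)>S]$, whereas the paper first conditions on $X$, differentiates the conditional risk $u\mapsto \E_{S|X=x}[\ell_{\alpha}(u,S)]$, and then composes with the linear evaluation map and integrates — which is not a substantive difference.
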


\begin{proof}
For each $x\in \xx$, define the function $\eta(\cdot; x) : \R \to \R$ given,  for all $u$, by 
$\eta(u; s) = \E_{S|X=x}[ \ell_{\alpha}(u, S) ]$.
For each $s\in \R$, define the function
$\chi(\cdot; s) : \R \to \R$,
where for all $u$,
$\chi(u; s) = \alpha \mathbf{1}[u > s] - (1-\alpha) \mathbf{1}[u \le s]$.

By the definition of the pinball loss $\ell_{\alpha}(\cdot, \cdot)$,
and since by \Cref{cond:condl-dens-bdd}
the conditional density $f_{S|X=x}(\cdot)$ of $S|X=x$ exists for all $x\in \xx$,
the derivative of $\ell_{\alpha}(u,S)$
with respect to $u$ 
agrees with the random variable $\chi(u; S)$
almost surely with respect to the distribution $S|X=x$.
Also, note that for fixed $u\in \R$, $|\chi(u;S)|$ is bounded by the constant $(1-\alpha)$.
By the dominated convergence theorem,
it follows that 
$u\mapsto \eta(u;x)$ is differentiable,
and that its derivative equals 
$\frac{\partial}{\partial u} \eta(u; x) = \E_{S|X=x}[ \chi(u;S) ]$, which, by the formula for $\chi(u;S)$, can be written as
$\alpha \mathbb{P}_{S|X=x}[u > S] - (1-\alpha) \mathbb{P}_{S|X=x}[u\le S]$.
Thus for all $u\in \R$ and $x\in \mathcal{X},$ we may write
$\frac{\partial}{\partial u} \eta(u;x) = \mathbb{P}_{S|X=x}[u > S] - (1-\alpha)$.
Since by \Cref{cond:condl-dens-bdd}
the conditional density $f_{S|X=x}$ of the distribution $S|X=x$ exists for all $x\in \xx$,
it follows that the cdf $u \mapsto \mathbb{P}_{S|X=x}[u > S]$ is differentiable
for all $u\in \R$ and
 all $x\in \xx$ with derivative given by $u \mapsto f_{S|X=x}(u)$.
Thus the map $u \mapsto \frac{\partial}{\partial u} \eta(u;x)$ is differentiable for all $x\in \xx$
with derivative given by 
$u \mapsto f_{S|X=x}(u)$.
In particular, $\eta(\cdot;x)$ is twice-differentiable
with second derivative given by
$f_{S|X=x}(\cdot)$.

Next, for each $x\in \xx$,
define the function
$\psi(\cdot; x) : \R^d \to \R$
given by
$\psi(\gamma; x) = \E_{S|X=x}[\ell_{\alpha}(h_\gamma(x), S)]$, 
where $h = h_\gamma =  \langle \gamma, \Phi\rangle$.
For each $x\in \xx$, 
let $\tn{ev}(\cdot; x) : \R^d \mapsto \R$ be given by
$\tn{ev}(\gamma; x) = h_\gamma(x)$,
where $h = h_\gamma= \langle \gamma, \Phi\rangle$.
Then $\psi(\cdot;x)$ is given by the composition $\eta(\cdot; x) \circ \tn{ev}(\cdot; x)$.
Since $\tn{ev}(\gamma;x) = \langle \gamma, \Phi(x)\rangle$, $\tn{ev}(\cdot;x)$ is linear, it is smooth.
Its gradient is given by
$\nabla_{\gamma} \tn{ev}(\gamma; x) = \Phi(x)$
for all $\gamma \in \R^d$,
and its Hessian is zero.
It follows that $\psi(\cdot;x)$ is twice-differentiable.
By the chain rule, the gradient of $\psi(\cdot;x)$ is given by
\begin{align*}
\nabla_{\gamma} \psi(\gamma; x)
= 
\frac{\partial}{\partial u} \eta(u;x)\bigg|_{u=\tn{ev}(\gamma; x)} \cdot \nabla_{\gamma} \tn{ev}(\gamma; x)
= (\mathbb{P}_{S|X=x}[h(x) > S] - (1-\alpha)) \Phi(x).
\end{align*}
Since the map
$\gamma \mapsto \mathbb{P}_{S|X=x}[h(x) > S] - (1-\alpha)$
is given by the composition
$\frac{\partial}{\partial u} \eta(\cdot; x) \circ \tn{ev}(\cdot; x)$,
we may again apply the chain rule to deduce that the Hessian of $\psi(\cdot;x)$ is given by
\begin{align*}
    \nabla_{\gamma}^2 \psi(\gamma; x) = \frac{\partial^2}{\partial u^2} \eta(u;x) \bigg|_{u=\tn{ev}(\gamma;x)} \cdot \nabla_{\gamma}\tn{ev}(\gamma;x) \cdot \Phi(x)^\top = f_{S|X=x}(h(x)) \Phi(x) \Phi(x)^\top.
\end{align*}

Returning to our original function $\varphi$, note that by the tower property, $\varphi(\gamma) = \E_1[\psi(\gamma; X)]$.
Note that
$\| \nabla_{\gamma} \psi(\gamma;x) \|_2$ is at most
\begin{align*}
    | \mathbb{P}_{S|X=x}[h(x) > S] - (1-\alpha) | \| \Phi(x) \|_2 \le (|\mathbb{P}_{S|X=x}[h(x) > S]| + (1-\alpha)) \| \Phi(x) \|_2 \le (2-\alpha) C_{\Phi},
\end{align*}
where in the first step we used the triangle inequality, and in the second step we used the fact that $\mathbb{P}_{S|X=x}[h(x) > S] \le 1$ and \Cref{cond:c-phi}.
Similarly,
we may bound the Frobenius norm $\|\cdot\|_F$ of
$\nabla_{\gamma}^2 \psi(\gamma; x)$ by 
\begin{align*}
    |f_{S|X=x}(h(x))| \| \Phi(x) \Phi(x)^\top \|_F \le C_f \|\Phi(x)\|_2^2 \le C_f C_{\Phi}^2,
\end{align*}
where in the first step we used \Cref{cond:c-phi}, the identity $\|vv^\top\|_F = \|v\|_2^2$, and in the second step we used \Cref{cond:condl-dens-bdd}.
Since the entries of
$\nabla_{\gamma} \psi(\cdot; x)$
and
$\nabla_{\gamma}^2 \psi(\cdot; x)$
are bounded by constants,
we may apply the dominated convergence theorem to deduce that
$\varphi$ is twice-differentiable,
with gradient given by
$\nabla_{\gamma} \varphi(\gamma) = \E_1[ \nabla_{\gamma} \psi(\gamma; X) ]$
and Hessian given by
$\nabla_{\gamma}^2 \varphi(\gamma) = \E_1[ \nabla_{\gamma}^2 \psi(\gamma; X) ]$.

Finally, since the directional derivative of $\varphi$ in the direction $g$
is defined as $\langle \tilde \gamma, \nabla_{\gamma} \varphi(\gamma) \rangle$, we may plug in our expression for the gradient to deduce
\begin{align*}
    \langle \tilde \gamma, \nabla_{\gamma} \varphi(\gamma) \rangle
    &= \langle \tilde \gamma, \E_1[ (\mathbb{P}_{S|X}[h(X) > S] - (1-\alpha)) \Phi(X) ] \rangle \\
    &= \E_1[ (\mathbb{P}_{S|X}[h(X) > S] - (1-\alpha)) \langle \tilde\gamma, \Phi(X)\rangle ] \\
    &= \E_1[ (\mathbb{P}_{S|X}[h(X) > S] - (1-\alpha)) g(X) ].
\end{align*}
The result follows.
\end{proof}

\end{document}